\def\tsc#1{\csdef{#1}{\textsc{\lowercase{#1}}\xspace}}
\newtheorem{theorem}{Theorem}
\newtheorem{lemma}[theorem]{Lemma}
\newtheorem{definition}{Definition}
\newtheorem{example}{Example}
\newtheorem{assumption}{Assumption}
\theoremstyle{remark}
\newcommand{\Halmos}{\hfill \ensuremath{\Box}}
\newcommand{\ie}{i.e.,\@\xspace}
\newenvironment{breakablealgorithm}
{
  \refstepcounter{algorithm}
  \hrule height.8pt depth0pt \kern2pt
  \renewcommand{\caption}[2][\relax]{
    \ifx\relax##1\relax 
    \addcontentsline{loa}{algorithm}{\protect\numberline{\thealgorithm}##2}%
    \else 
    \addcontentsline{loa}{algorithm}{\protect\numberline{\thealgorithm}##1}%
    \fi
    \noindent\textbf{Algorithm~\thealgorithm} ##2\par
    \kern2pt\hrule\kern2pt
  }
}{
  \kern2pt\hrule\relax
}
\title{General Formulation and PCL-Analysis for Restless Bandits with Limited Observability}
\author{Keqin Liu$^{1,*}$ and Qizhen Jia$^{1}$}
\begin{document}
\maketitle

\begin{center}
$^{1}$School of Mathematics and Physics, Xi'an Jiaotong-Liverpool University, Suzhou, 215123, Jiangsu, China.\\
\vspace{1em}
$^{*}$Corresponding author. E-mail(s): Keqin.Liu@xjtlu.edu.cn;
\end{center}
\maketitle
\vspace{1.5cm}
\begin{abstract}
In this paper, we consider a general observation model for restless multi-armed bandit problems. The operation of the player is based on the past observation history that is limited (partial) and error-prone due to resource constraints or environmental or intrinsic noises. By establishing a general probabilistic model for dynamics of the observation process, we formulate the problem as a restless bandit with an infinite high-dimensional belief state space. We apply the achievable region method with partial conservation law (PCL) to the infinite-state problem and analyze its indexability and priority index (Whittle index). Finally, we propose an approximation process to transform the problem into which the AG algorithm of Niño-Mora (2001) for finite-state problems can be applied.  Numerical experiments show that our algorithm has excellent performance.
\end{abstract}

Keywords: Restless bandit; POMDP; Countable state space; Partial conservation law; Whittle index

\maketitle

\section{Introduction}

Multi-armed bandit (MAB) is a classic operations research problem that involves a learner making choices among actions with uncertain/random rewards in order to maximize the expected return in long-run. The MAB problem is often associated with the important exploration-exploitation tradeoff and was initially proposed by \citet{robbins1952some}. Starting from the classic stochastic scheduling problem, the ongoing development of various MAB models makes it applicable to a wide range of practical fields, including clinical trials, recommendation systems, cognitive communications, and financial investments \citep{gittins1979bandit, berry1985bandit, press2009bandit, farias2011irrevocable, hoffman2011portfolio, shen2015portfolio}.

In the classic Bayesian MAB problem, the bandit machine has a total of $N$ arms and one player pulling an arm in each decision epoch. After a player selected one of these arms and activated it, a random reward is accrued depending on the activated arm and its current state. In this process, all states are visible, and only the state of the arm being activated changes according to a Markov chain. The goal of the MAB problem is to search a policy that maximizes the long-term cumulative reward. \citet{gittins1979bandit} proved that the classic MAB problem can be solved optimally by an index policy, referred to as Gittins index, and the player only needs to activate the arm with the largest index at each moment. The index policy significantly reduces the complexity of the problem from being exponential with the number of arms to being linear. After that, \citet{whittle1988restless} extended the MAB problem to the restless MAB (RMAB) model, in which $K$ arms can be activated at each moment, and the arms not chosen may also undergo state transitions over time. However, for the general RMAB problem with a finite state space, \citet{papadimitriou1994complexity} showed that the computational complexity of finding the optimal policy is already PSPACE-hard. Fortunately, \citet{whittle1988restless} generalized Gittins index to Whittle index that provides a possible solution to RMAB by considering Lagrangian relaxation and duality. Accordingly, Whittle index is optimal under a relaxed constraint that the {\em time-average} number of activated arms is~$K$. For the original problem, the Whittle index policy has been proven to be asymptotically optimal per-arm wise as the number of arms goes to infinity under certain conditions \citep{weber1990index,weber1991index}. Nonetheless, not every RMAB is indexable, \ie Whittle index may not exist. And the establishment of indexability is itself a difficult problem. Even if indexability is proven for a particular RMAB, analytical solutions of the Whittle index function can still be hard to obtain. For some important categories of RMAB models, Whittle indexability and its strong performance have been demonstrated in the literature, such as the dual-speed bandit problem and partially observable RMAB \citep{glazebrook2002index, ahmad2009optimality, liu2010indexability, gittins2011multi, larranaga2016rmab, AyestaEtal2021MMOR, LiuEtal24MMOR, fu2025rmab, liu2021index}.

In this paper, we extend the partially observable RMAB considered in \citet{liu2010dynamic,LiuEtal24MMOR,liu2021index} to general observation models. The previous work only considers special classes of observation errors and noises within the framework of partially observable Markov decision processes (POMDP) \citep{zhao2007survey,liu2010dynamic,LiuEtal24MMOR}. But general algorithms for POMDP often suffer from curse of dimensionality and become cumbersome when the value functions for dynamic programming are too complex to be precisely solved or well approximated \citep{smallwood1973optimal,sondik1978optimal}. For special cases, the state transition dynamics yield a simple structure (e.g., threshold type) of the optimal policy such that the value functions can be efficiently evaluated or even solved in closed-form \citep{gittins2011multi,liu2010dynamic,LiuEtal24MMOR,liu2021index}. In order to deal with general observation models, we need an alternative methodology to design efficient algorithms for systems with large sets of parameters. In the interdisciplinary field of operations research and stochastic optimization, an analytical method called ``Achievable Region'' emerged in the 1990s. Interestingly, this method transforms a time-series optimization problem into linear programming (LP) problems that can be solved efficiently \citep{coffman1980characterization,federgruen1988characterization,federgruen1988m,gelenbe2010analysis}. The challenge of using this method is mainly on proving the feasibility of such transformation \citep{bertsimas1995achievable}. Referred to as the general conservation law (GCL), \citet{bertsimas1996conservation} proposed the required structure of performance measures for the achievable region to be a polyhedron called extended polymatroid. Based on GCL, they extended Klimov's algorithm to several stochastic scheduling problems, including the classic MAB problem \citep{klimov1975time,thomas1991region}. By relaxing the GCL restrictions, \citet{nino2001restless} proposed the partial conservation law (PCL) and specified conditions for an RMAB to satisfy PCL and subsequently lead to the numerical calculation of Whittle index. Later on, \citet{nino2002dynamic} offered an economic explanation of PCL-indexability and its relation to Whittle indexability where Whittle index can be interpreted as the optimal marginal cost rate. A more comprehensive summary of this work can be found in \cite{nino2007dynamic}. However, all such work considers finite state spaces except for \citet{frostig1999four} who extended the GCL framework for classic MAB to countable state spaces. As detailed in \citet{nino2001restless}, the PCL framework is a significant generalization of GCL and works for a much broader class of Whittle-indexable RMAB compared to the GCL. Therefore, the complexity in building the PCL framework is much higher than that of GCL, as well as its extension to infinite state spaces. For example, GCL has a strong requirement that the full state combination set must satisfy a conservation law for the existence and optimality of the index policy; while PCL only requires a state combination subset satisfying a generalized conservation law. This significantly broadened the applicability of LP approach for solving bandit problems. Nevertheless, why the relaxed condition by PCL still ensures an optimal index policy is a deep theorem \citep{nino2001restless}, especially in the case of infinite states while state subsets become too diverse and also infinite. 

The main contribution of this paper is to establish the validity of PCL-framework for the general infinite-state RMAB model by carefully dealing with the limiting scenarios and fully exploring the intrinsic rich structure in the analysis of this high-dimensional infinite-state LP problem. The main challenge of this extension is due to the high-dimensional probability state space (belief states) with complex dynamics for transitions over time. By establishing the weak duality between LP formulations, we build the PCL framework for analyzing our RMAB and subsequently design an efficient algorithm for calculating Whittle index when PCL-indexability is satisfied. Finally, we demonstrate the superior performance of our algorithm by numerical simulations. We point out that \citet{nino2006restless} and \citet{nino2020mor} also considered PCL for special instances of RMAB with infinite state spaces. Specifically, \citet{nino2006restless} formulated the $M/G/1$ queuing control problem as an RMAB with a countable state space equipped with a simple state transition rule and a convex holding cost function where the family of threshold policies was focused on to analyze the indexability.  \citet{nino2020mor} extended the work of \citet{liu2010dynamic} for partially but error-free observable RMAB to the real-interval state space, i.e., the state space has dimension~$1$. Furthermore, only the family of threshold policies was considered but rather than proving the optimality of threshold policies before establishing indexability as in \citet{liu2010dynamic},  \citet{nino2020mor} gave sufficient conditions to obtain both the optimality of threshold policies and indexability at one time. \citet{AyestaEtal2021MMOR} adopted the time-average objective function with perfect observations. Under the ergodicity condition, the authors utilized the state transition dynamics to propose an improved exhaustive-search algorithm to check indexability and compute Whittle index. However, the algorithm works only for the case where the threshold policy is optimal in the single-armed problem with subsidy. Furthermore, the complexity of the algorithm is very high due to the exhaustive search in state combinations and numerical implementation of the algorithm still requires an approximate finite state space. In this paper, we consider the high-dimensional RMAB without any constraint on state dimension or transition or policy class and use linear programming to efficiently finish indexability verification and computation, as the first modeling and PCL analysis of RMAB with general observability.

\begin{table*}[!t]
\renewcommand{\arraystretch}{1.3}
\caption{Main Notations}\label{table:notations}
\label{notation}
\centering
\begin{tabular}{c|c}
\hline
$N$ & number of arms \\
\hline
$n$ & arm index \\
\hline
$\mathcal{M}_{n}$ & arm state space \\ 
\hline
$\mathcal{M}$ & joint state space of all arms\\ 
\hline
$K$ & number of arms to select at each decision epoch\\ 
\hline
$t$ & decision epoch index\\ 
\hline
$S$ & state of an arm \\
\hline
$O$ & observed state of an arm \\
\hline
$p_{ij}$ & transition probability between arm states \\
\hline
$\epsilon_{ij}$ & probability of observed state to be~$j$ given the true state being~$i$\\
\hline
$r_{ij}$ & immediate reward when the observed state is~$j$ and the true state is~$i$\\
\hline
$\omega_{i}$ & probability the arm is in state~$i$ given the past observations\\
\hline
$F$ & feedback state\\
\hline
$\rho_{ij}$ & probability of getting immediate reward~$j$ given the true state is~$i$\\
\hline
$\beta$ & discount factor\\
\hline
$\lambda$ & subsidy for passivity in the single-arm problem\\
\hline
$\mathcal{P}(\lambda)$ & passive set under subsidy~$\lambda$\\
\hline
$W(\cdot)$ & Whittle index function\\
\hline
$\mathcal{B}$ & belief update operator\\
\hline
$p_{ij}^k$ & probability that belief state transits to $\boldsymbol{\omega}_j$ given $\boldsymbol{\omega}_i$ and action~$a=k$ ($k=0$ for passive and $k=1$ for active)\\
\hline
$x_{\boldsymbol{\omega}}^{a}(\pi,\boldsymbol{\omega_{i}})$ & expected total discounted time that $\boldsymbol{\omega}$ is applied with action~$a$ under policy~$\pi$ and initial belief $\boldsymbol{\omega_{i}}$\\
\hline
$\Omega(\boldsymbol{\omega_{0}})$ & set of all possible belief states given initial belief $\boldsymbol{\omega_{0}}$\\
\hline
$\tilde{\Omega}$ & feasible set family\\
\hline
$\Omega_{\boldsymbol{\omega_{i}}}^{C}$ & an element (belief state subset) of full subset chain $C$ indexed by $\boldsymbol{\omega_{i}}$\\
\hline
\end{tabular}
\end{table*}

\section{Main Results}\label{sec:MainResult}
\subsection{Model Formulation}\label{sec:formulation}
In this section, we begin with the formulation of the RMAB problem with general observation models. The main notations adopted in this paper can be found in Table~\ref{table:notations}.
Assume the system has $N$ arms and the state space of the $n$-th arm is $\mathcal{M}_{n}$. Therefore the joint state space of all arms is $\mathcal{M}=\prod_{n=1}^{N}\mathcal{M}_{n}$. At each moment, the actual state of each arm undergoes a state transition based on its own Markov probability transition matrix, and we will select $K$ arms to activate. For those activated arms, we can observe their states (with errors) and accrue some reward dependent on the observed states and the actual states. For those arms that are not activated, we can neither observe their states nor obtain reward from them, and their states still transit over time. This assumption was motivated by practical scenarios where not selected arm does not provide reward or incur cost. For example, in the opportunistic spectrum access problem, if the user does not select an arm (channel) to sense and transmit data, then no observation or reward is received from that channel (see below and Sec.~\ref{sec:approxState} for details). Relaxing this assumption will be considered in the future work. Let $S_{n}(t), O_{n}(t)\in\mathbb{Z}^+$ be respectively the true and observed states of arm $n$ in decision epoch~$t$ and $A(t)$ the set of activated arms in $t$. First consider a single-arm process ({\em i.e.} $N=1$ and $\mathcal{M}=\mathcal{M}_n$) and drop the subscript~$n$ (sometimes the time index $t$ is also dropped if referring to the same time index) for convenience. Suppose the state transition matrix, error matrix and reward matrix are $P=\{p_{ij}\}_{\mathcal{M}\times\mathcal{M}}$, $E=\{\varepsilon_{ij}\}_{\mathcal{M}\times\mathcal{M}}$ and $R=\{r_{ij}\}_{\mathcal{M}\times\mathcal{M}}$ respectively. For error matrix, $\varepsilon_{ij}$ represents the probability that the observed state is $j$ when the true state is $i$, that is, $\varepsilon_{ij}=P(O=j\vert S=i)$, while $r_{ij}$ represents the reward obtained when the true state is $i$ and the observed state is $j$ (different observations may lead to different sub-actions and thus different rewards following). Denote by ${\Omega}_a=\{(\omega_{1},\cdots,\omega_{M})\vert\sum_{i=1}^{M}\omega_{i}=1,\omega_{i}\geq 0,1\leq i\leq M:=|\mathcal{M}|\}$ the belief space, for each $\boldsymbol{\omega}(t)=(\omega_{1}(t),\cdots,\omega_{M}(t))\in{\Omega_a}$, $\omega_{i}(t)$ represents the conditional probability that the true state is~$i$ in epoch~$t$ (based on past observations). If the current belief state is $(\omega_{1},\cdots,\omega_{M})$, the current expected reward is $\sum_{i=1}^{M}\omega_{i}(\sum_{j=1}^{M}\varepsilon_{ij}r_{ij})$. In addition, we may sometimes receive additional feedback related to the true and observed states that helps us better estimate the actual arm state ({\em e.g.,} ACK/NAK in a communication channel to indicate whether the data was successfully transmitted~\citep{LiuEtal24MMOR}). Assume that there are $L$ feedback states (positive integers dependent only on the true state and the observed state) in total that encompass {\em all} possible observations, and denote the feedback state at time~$t$ by $F(t)~(L\leq M^{2})$. For example, in the opportunistic spectrum access (OSA) problem, the user first chooses an arm (channel) to sense. If the channel is observed to be available ($O(t)=1$), then the user sends a data packet to the receiver. If the channel is truly available ($S(t)=1$), then the receiver successfully receives this data packet and sends back an acknowledgment ($F(t)=1$) in the end of this decision epoch. In this case, the user knows the channel state ($S(t)=1$). However, due to sensor errors, the user may sense the channel and find it to be unavailable ($O(t)=0$) while the channel is actually available ($S(t)=1$). Then the user will not send any data packet and certainly no acknowledgment will be obtained ($F(t)=0$). In this case, the user never knows the true state of the channel in this decision epoch. By Bayes rule, we have
\begin{equation}
    P(S(t+1)=j\vert F(t)=i)=\frac{P(S(t+1)=j,F(t)=i)}{P(F(t)=i)}=\frac{\sum_{k=1}^{M}p_{kj}\rho_{ki}\omega_{k}}{\sum_{k=1}^{M}\rho_{ki}\omega_{k}}.
\end{equation}
Thus the rule of belief update is
\begin{equation}
    \boldsymbol{\omega}(t+1)=\begin{cases}
    (\frac{\sum_{i=1}^{M}p_{i1}\rho_{ij}\omega_{i}}{\sum_{i=1}^{M}\rho_{ij}\omega_{i}},\cdots,\frac{\sum_{i=1}^{M}p_{iM}\rho_{ij}\omega_{i}}{\sum_{i=1}^{M}\rho_{ij}\omega_{i}}), &\text{if active and } F(t)=j\\
    \boldsymbol{\omega}(t)P, &\text{if passive}
    \end{cases}.
\end{equation}
In a simpler scenario where there is no additional feedback but only the observed states (e.g., no acknowledgment is sent back in the OSA problem), the update rule becomes
\begin{equation}
    \boldsymbol{\omega}(t+1)=\begin{cases}
    (\frac{\sum_{i=1}^{M}p_{i1}\varepsilon_{ij}\omega_{i}}{\sum_{i=1}^{M}\varepsilon_{ij}\omega_{i}},\cdots,\frac{\sum_{i=1}^{M}p_{iM}\varepsilon_{ij}\omega_{i}}{\sum_{i=1}^{M}\varepsilon_{ij}\omega_{i}}), &\text{if active and } O(t)=j\\
    \boldsymbol{\omega}(t)P, &\text{if passive}
    \end{cases}.
\end{equation}
In the extreme case when we cannot observe any state after activating an arm, we simply treat the obtained reward (might be~$0$) as the feedback. In general, the feedback state informs the player about the immediate reward obtained. By normalizing the immediate reward $r(t)$ to take only positive integer values ($r_{ij}$'s are assumed to be rational for all $i,~j$ since~$Q$ is dense in~$R$ and provides arbitrary precision in practical scenarios, Theorem~1.20 (b) in \citet{rudin1976principles}), we have
\begin{equation}
    \begin{aligned}
        \rho_{ij}
        :=&P(r(t)=j\vert S(t)=i)\notag\\
        =&\frac{P(r(t)=j,S(t)=i)}{P(S(t)=i)}\notag\\
        =&\frac{\sum_{k=1}^{M}P(r(t)=j,O(t)=k,S(t)=i)}{\omega_{i}}\notag\\
        =&\frac{\sum_{k=1}^{M}P(r(t)=j\vert O(t)=k,S(t)=i)P(O(t)=k\vert S(t)=i)\omega_{i}}{\omega_{i}}\notag\\
        =&\sum_{k=1}^{M}\mathbbm{1}(r_{ik}=j)\varepsilon_{ik}
    \end{aligned}
\end{equation}
or equivalently $\rho_{ir}=\sum_{j=1}^{M}\mathbbm{1}(r_{ij}=r)\varepsilon_{ij}$. Thus we have
\begin{align}\label{eqn:BeliefUpdateNumerator}
    \sum_{i=1}^{M}p_{i1}\rho_{ir}\omega_{i}
    ={}&\sum_{i=1}^{M}p_{i1}(\sum_{j=1}^{M}\mathbbm{1}(r_{ij}=r)\varepsilon_{ij}\omega_{i})\notag\\
    ={}&\sum_{i=1}^{M}\sum_{j=1}^{M}\mathbbm{1}(r_{ij}=r)p_{i1}\varepsilon_{ij}\omega_{i}\notag\\
    ={}&\sum_{r_{ij}=r}p_{i1}\varepsilon_{ij}\omega_{i}.
\end{align}
Similarly,
\begin{equation}\label{eqn:BeliefUpdateDenominator}
    \sum_{i=1}^{M}\rho_{ir}\omega_{i}=\sum_{r_{ij}=r}\varepsilon_{ij}\omega_{i}.
\end{equation}
Combining (\ref{eqn:BeliefUpdateNumerator}) and (\ref{eqn:BeliefUpdateDenominator}), the belief state update rule is
\begin{equation}
    \boldsymbol{\omega}(t+1)=\begin{cases}
    (\frac{\sum_{r_{ij}=r}p_{i1}\varepsilon_{ij}\omega_{i}}{\sum_{r_{ij}=r}\varepsilon_{ij}\omega_{i}},\cdots,\frac{\sum_{r_{ij}=r}p_{iM}\varepsilon_{ij}\omega_{i}}{\sum_{r_{ij}=r}\varepsilon_{ij}\omega_{i}}), &\text{if active and } r(t)=r\\
    \boldsymbol{\omega}(t)P, &\text{if passive}
    \end{cases}.
\end{equation}

A more practical situation is that we can jointly obtain information from the observed state and the obtained reward. In this case, we can similarly give the belief update rule:
\begin{equation}
    \boldsymbol{\omega}(t+1)=\begin{cases}
    (\frac{\sum_{i=1}^{M}p_{i1}\varepsilon_{ij}\omega_{i}\mathbbm{1}(r_{ij}=r)}{\sum_{i=1}^{M}\varepsilon_{ij}\omega_{i}\mathbbm{1}(r_{ij}=r)},\cdots,\frac{\sum_{i=1}^{M}p_{iM}\varepsilon_{ij}\omega_{i}\mathbbm{1}(r_{ij}=r)}{\sum_{i=1}^{M}\varepsilon_{ij}\omega_{i}\mathbbm{1}(r_{ij}=r)}), &\text{if active and } O(t)=j,\\&r(t)=r~(\exists i~s.t.~ r_{ij}=r)\\
    \boldsymbol{\omega}(t)P, &\text{if passive}
    \end{cases}.
\end{equation}

For the restless multi-armed bandit model, the goal is to find a policy $\pi$ that maps the belief states of all arms into an active set $A(t)$ in epoch~$t$ that maximize the long-term expected discounted reward. In other words, if we denote the reward obtained by the $n$-th arm in epoch~$t$ by $r_{n}(t)$, then our objective is
\begin{align}
    \max_{\pi}\mathbb{E}_{\pi}&\left[\sum_{t=1}^{+\infty}\beta^{t-1}\sum_{n=1}^{N}r_{n}(t)\bigg\vert\boldsymbol{\omega}_{1}(1),\cdots,\boldsymbol{\omega}_{n}(1)\right],\\
    &s.t. ~|A(t)|=K, t\geq 1,
\end{align}
where $0\leq\beta<1$ is the discount factor and $\boldsymbol{\omega}_{n}(t)$ is the belief state (vector) of arm $n$ in epoch~$t$. We point out that the initial belief state can be set arbitrarily but after it is given, the system state space is countable since the belief update happens only at discrete time indices. In this problem, the diversity of states, choices, errors makes the problem highly complex. In RMAB problems, searching for an easily computable priority index policy is the mainstream. The core idea is to assign an index (a real number) to the current state of each arm, and then activate those arms with top~$K$ large indexes. The goal of this paper is to theoretically characterize the conditions when such an index policy with near-optimal performance exists and provide a detailed algorithm for efficiently computing the index function (if it exists).

\subsection{Whittle Index}
\citet{whittle1988restless} relaxed the constraint on the {\em exact} number of arms activated in each decision epoch, requiring only that the {\it expected} number of arms activated per epoch on average (in the sense of discounted time) is $K$, {\em i.e.}
\begin{equation}\label{eqn:lagRelaxA}
    \mathbb{E}_{\pi}\left[\sum_{t=1}^{+\infty}\beta^{t-1}\sum_{n=1}^{N}\mathbbm{1}(n\in A(t))\bigg\vert\boldsymbol{\Omega}(1)\right]=\frac{K}{1-\beta}
\end{equation}
or
\begin{equation}
    \mathbb{E}_{\pi}\left[\sum_{t=1}^{+\infty}\beta^{t-1}\sum_{n=1}^{N}\mathbbm{1}(n\notin A(t))\bigg\vert\boldsymbol{\Omega}(1)\right]=\frac{N-K}{1-\beta},
\end{equation}
where $\boldsymbol{\Omega}(t)=(\boldsymbol{\omega}_{1}(t),\cdots,\boldsymbol{\omega}_{N}(t))$ with $\boldsymbol{\omega}_{n}(t)$ as the belief state (vector) for arm~$n$ at time~$t$. Thus the Lagrange relaxation of~\eqref{eqn:lagRelaxA} can be written as
\begin{equation}
    \max_{\pi}\mathbb{E}_{\pi}\left[\sum_{t=1}^{+\infty}\beta^{t-1}\sum_{n=1}^{N}(\mathbbm{1}(n\in A(t))r_{n}(t)+\lambda\mathbbm{1}(n\notin A(t)))\bigg\vert\boldsymbol{\Omega}(1)\right].
\end{equation}
Note that only the class of admissible (measurable) policies are considered, i.e., the action can only depend on past observations instead of future or unobserved information. The above problem can be decomposed into $N$ independent subproblems, that is, for any $1\leq n\leq N$,
\begin{equation}
    \max_{\pi}\mathbb{E}_{\pi}\left[\sum_{t=1}^{+\infty}\beta^{t-1}(\mathbbm{1}(n\in A(t))r_{n}(t)+\lambda\mathbbm{1}(n\notin A(t)))\bigg\vert\boldsymbol{\omega}_{n}(1)\right].
\end{equation}
The explanation of the optimization problem above is that for each arm, when it is not selected (made passive), we will receive a subsidy $\lambda$. Since the problems above are independent, we just need to consider the single-arm case. In other words, the Lagrange relaxation decouples an $N$-dimensional problem into $N$ $1$-dimensional problems, reducing the complexity with~$N$ from exponential to linear! From this point on, we will focus on the single-arm problem with subsidy and drop the arm index~$n$. For a given arm, the optimal policy for the relaxed optimization problem divides the arm state space into two subsets (here the arm state space is the arm belief state space): active set $\mathcal{A}(\lambda)$ and passive set $\mathcal{P}(\lambda)$. Specifically, $\mathcal{P}(\lambda)$ contains all belief states in which the optimal choice is passive when the subsidy is $\lambda$. In particular, for a certain state $s$, if both active and passive actions are optimal, we include it in $\mathcal{P}(\lambda)$, and $\mathcal{A}(\lambda)$ is just the complement of $\mathcal{P}(\lambda)$ in the entire state space. Under the concept of passive set, Whittle indexability can be stated as follows:
\begin{definition}[Whittle Indexability]\label{def:Indexability}
    An arm is indexable if the passive set $\mathcal{P}(\lambda)$ monotonically increases from $\emptyset$ to the whole state space as the subsidy $\lambda$ increases from $-\infty$ to $+\infty$. The RMAB problem is indexable if every decoupled problem is indexable.
\end{definition}

Indexability states that, once an arm is made passive with subsidy $\lambda$, it should also be passive with any $\lambda'$ larger than $\lambda$. If the problem satisfies indexability, its Whittle index is defined as follows:
\begin{definition}[Whittle Index]\label{def:WhittleIndex}
    If an arm is indexable, the Whittle index $W(s)$ of a state $s$ is the infimum subsidy $\lambda$ that keeps~$s$ in the passive set $\mathcal{P}(\lambda)$. That is,
    \begin{equation*}
        W(s):=\inf\{\lambda:s\in\mathcal{P}(\lambda)\}.
    \end{equation*}
\end{definition}
By continuity, $W(s)$ is the infimum subsidy that makes it equivalent to be active or passive at state~$s$. Under the definitions of indexability and Whittle index, the Whittle index policy for the original multi-armed bandit problem is simply to activate the $K$ arms whose states offer the largest Whittle indices. In fact, for the classic MAB problem where passive arms do not change states, the Whittle indexability is always satisfied and the Whittle index is reduced to the Gittins index.

\subsection{Belief State Space}\label{sec:CountableStateSpace}
Different from the perfect observation case, the update of the belief state is nonlinear for the general observation model. This yields much difficulty for us to use value functions and dynamic programming methods to analyze the problem. Given an initial belief state, it appears that the size of the belief state space will grow exponentially over time as all possible realizations of observations/rewards/feedback are traversed and updated. Fortunately, a large number of numerical experiments have shown that in the sense of Euclidean norm approximation, the approximate state space exhibits stability after a process of iterative calculations (see Sec.~\ref{sec:approximatePCL} for details). Note that the initial belief state is $\boldsymbol{\omega}$, $\{\mathcal{B}_{h}\}$ is a complete list of operators defined by observed and feedback states and state update rules. 
Set $H:=|\{\mathcal{B}_{h}\}|$. Define the $T$-step state space recursively as follows:
\begin{definition}[$T$-step state space]\label{def:TStepStateSpace}
    Define
    \begin{equation*}
        \Omega(T\vert\boldsymbol{\omega})=\begin{cases}
        \{\mathcal{B}_{h_{1}}\cdots\mathcal{B}_{h_{T}}\boldsymbol{\omega}:1\leq h_{1},\cdots,h_{T}\leq H\}\cup\Omega(T-1\vert\boldsymbol{\omega}),& T\geq2\\
        \{\mathcal{B}_{h}\boldsymbol{\omega}:1\leq h\leq H\}\cup\{\boldsymbol{\omega}\},& T=1
        \end{cases}.
    \end{equation*}
    We call $\Omega(T\vert\boldsymbol{\omega})$ the $T$-step state space under the initial belief state $\boldsymbol{\omega}$.
\end{definition}

Under the definition of the $T$-step state space, $\Omega(+\infty\vert\boldsymbol{\omega_{0}})$ is the belief space obtained by traversing all possible state update rules starting from the initial state $\boldsymbol{\omega_{0}}$. Clearly, $\Omega(+\infty\vert\boldsymbol{\omega_{0}})$ is countable (since the belief update happens only at discrete time indices, see Sec.~\ref{sec:formulation}) and we denote it by $\Omega(\boldsymbol{\omega_{0}})$. Under a policy $\pi$, let $p_{ij}(\pi):=P(\boldsymbol{\omega}(t+1)=\boldsymbol{\omega_{j}}\vert\boldsymbol{\omega}(t)=\boldsymbol{\omega_{i}})$ be the conditional probability of transition from belief state $\boldsymbol{\omega_{i}}$ to $\boldsymbol{\omega_{j}}$. Due to different actions of activation and passivity affecting the state update, decomposition of $p_{ij}(\pi)$ should be carried out. Let $a(t)$ be the indicator function of whether the arm is activated in epoch~$t$, then we define the transition probability of the belief state as follows:
\begin{align}
    p_{ij}^{0}=P(\boldsymbol{\omega}(t+1)=\boldsymbol{\omega}_{j}\vert\boldsymbol{\omega}(t)=\boldsymbol{\omega}_{i},a(t)=0)\label{eqn:ProbPassive},\\
    p_{ij}^{1}=P(\boldsymbol{\omega}(t+1)=\boldsymbol{\omega}_{j}\vert\boldsymbol{\omega}(t)=\boldsymbol{\omega}_{i},a(t)=1)\label{eqn:ProbActive}.
\end{align}
Clearly, $ p_{ij}^{0}$ and $ p_{ij}^{1}$ do not depend on~$\pi$ since the action is already specified in the condition. By (\ref{eqn:ProbPassive})-(\ref{eqn:ProbActive}) and the law of total probability, we have
\begin{equation}
    \begin{aligned}
        p_{ij}(\pi)&=\sum\limits_{a=0}^{1}P_{\pi}(a(t)=a\vert\boldsymbol{\omega}(t)=\boldsymbol{\omega}_{i})p_{ij}^{a}.
    \end{aligned}
\end{equation}
Note that $P_{\pi}(a(t)=a\vert\boldsymbol{\omega}(t)=\boldsymbol{\omega}_{i})$ depends on~$\pi$ since the policy specifies which action to take given the current belief state. The elements in the probability transition matrix under passive and active actions have the following more specific expressions:
\begin{equation}\label{eqn:pij0}
    p_{ij}^{0}=\begin{cases}
    1, &\text{if $\boldsymbol{\omega}_{j}=\boldsymbol{\omega}_{i}P$}\\
    0, &\text{otherwise}
    \end{cases}
\end{equation}
and
\begin{equation}\label{eqn:pij1}
    \begin{aligned}
        p_{ij}^{1}&=P(\boldsymbol{\omega}(t+1)=\boldsymbol{\omega}_j\vert\text{active and }\boldsymbol{\omega}(t)=\boldsymbol{\omega}_{i})\\
        &=\sum\limits_{\{h:\mathcal{B}_{h}\boldsymbol{\omega}_{i}=\boldsymbol{\omega}_{j},1\leq h\leq H\}}\mathcal{B}_{h}\boldsymbol{\omega}_{i}P(h\text{-th belief update rule}\vert\text{active and }\boldsymbol{\omega}(t)=\boldsymbol{\omega}_{i}).
    \end{aligned}
\end{equation}
In the following sections, these two probability transition matrices will play an important role in the calculation of Whittle indices.

\subsection{Two-Arm Problem and Achievable Region}
With the rapid development at the junction of operations research, stochastic optimization and reinforcement learning, quite a few effective methods for finite-state problems have been found, among which achievable region with conservation laws is a great example. \citet{bertsimas1996conservation} and \citet{nino2001restless} adopted the analytical framework of generalized conservation laws and partial conservation laws for the classic MAB and RMAB problems with finite state spaces, respectively, and provided efficient index algorithms for the corresponding problems. In the rest of the subsections, we will apply the PCL framework to the analysis of our RMAB problem with an infinite state space, and theoretically build the foundation for the construction of an efficient index policy. 

Consider the single-arm process with $M$ states discussed in the previous section. For an initial belief state $\boldsymbol{\omega_{0}}$, let $\Omega(\boldsymbol{\omega_{0}})$ be the countable belief state space generated by iteratively updating $\boldsymbol{\omega_{0}}$ through state transitions. During the process of making the arm active or passive, since all belief states fall within $\Omega(\boldsymbol{\omega_{0}})$, the entire time period can be completely partitioned by the time segments in which each state is located. In this scenario, define the performance indicator variables for each belief state as follows:
\begin{equation*}
    I_{\boldsymbol{\omega}}^{1}(t)=\begin{cases}
    1, &\text{if the arm is active in epoch~$t$ and its belief state is $\boldsymbol{\omega}$}\\
    0, &\text{otherwise}
    \end{cases}
\end{equation*}
and
\begin{equation*}
    I_{\boldsymbol{\omega}}^{0}(t)=\begin{cases}
    1, &\text{if the arm is passive in epoch~$t$ and its belief state is $\boldsymbol{\omega}$}\\
    0, &\text{otherwise}
    \end{cases}.
\end{equation*}
Furthermore, define the performance measures of belief state $\boldsymbol{\omega}$ as follows:
\begin{equation}
    x_{\boldsymbol{\omega}}^{a}(\pi)=\mathbb{E}_{\pi}\left[\sum_{t=0}^{+\infty}\beta^{t}I_{\boldsymbol{\omega}}^{a}(t)\right],\quad a=0,1
\end{equation}
\begin{equation}
    x_{\boldsymbol{\omega}}^{a}(\pi,\boldsymbol{\omega_{i}})=\mathbb{E}_{\pi}\left[\sum_{t=0}^{+\infty}\beta^{t}I_{\boldsymbol{\omega}}^{a}(t)\bigg\vert\boldsymbol{\omega}(0)=\boldsymbol{\omega}_{i}\right],\quad a=0,1
\end{equation}
where $\boldsymbol{\omega}(t)$ is the belief state of the arm in epoch~$t$. In Whittle's relaxation, the Lagrangian multiplier $\lambda$ can be regarded as a subsidy when the arm is made passive.  Thus the optimization objective of the partially observable RMAB problem with subsidy can be written as follows:
\begin{equation}\label{ObjectiveOfRMAB}
    \begin{aligned}
        \max&\sum\limits_{\boldsymbol{\omega}\in\Omega(\boldsymbol{\omega_{0}})}R_{\boldsymbol{\omega}}x_{\boldsymbol{\omega}}^{1}+\lambda\sum\limits_{\boldsymbol{\omega}\in\Omega(\boldsymbol{\omega_{0}})}x_{\boldsymbol{\omega}}^{0} \\
        \text{subject to\quad}&x_{\boldsymbol{\omega}_{i}}^{1}+x_{\boldsymbol{\omega_{i}}}^{0}=e_{\boldsymbol{\omega}_{i}}+\beta\left(\sum\limits_{\boldsymbol{\omega_{j}}\in\Omega(\boldsymbol{\omega_{0}})}p_{ji}^{1}x_{\boldsymbol{\omega}_{j}}^{1}+\sum\limits_{\boldsymbol{\omega_{j}}\in\Omega(\boldsymbol{\omega_{0}})}p_{ji}^{0}x_{\boldsymbol{\omega}_{j}}^{0}\right),\quad\forall\boldsymbol{\omega_{i}}\in\Omega(\boldsymbol{\omega_{0}})\\
        &x_{\boldsymbol{\omega}_{i}}^{1},x_{\boldsymbol{\omega}_{i}}^{0}\geq 0,\quad\forall\boldsymbol{\omega_{i}}\in\Omega(\boldsymbol{\omega_{0}})
    \end{aligned}
\end{equation}
where $R_{\boldsymbol{\omega}}=\sum\limits_{i=1}^{M}\omega_{i}R_{i},\boldsymbol{\omega}=(\omega_{1},\cdots,\omega_{M})$, $p_{ij}^{a}$ are given by (\ref{eqn:pij0})-(\ref{eqn:pij1}), $e_{\boldsymbol{\omega}_{i}}$ is the indicator whether the initial belief state is $\boldsymbol{\omega}_{i}$, $R_i$ is the known expected immediate reward conditional on $S(t)=i$ with the active action. Note that $R_{\boldsymbol{\omega}}$ is uniformly bounded over $\Omega(\boldsymbol{\omega_{0}})$ since $|R_{\boldsymbol{\omega}}|\le\max\{|R_i|\}$. As we will see in Sec.~\ref{sec:pcl}, both $\sum\limits_{\boldsymbol{\omega}\in\Omega(\boldsymbol{\omega_{0}})}x_{\boldsymbol{\omega}}^{1}$ and $\sum\limits_{\boldsymbol{\omega}\in\Omega(\boldsymbol{\omega_{0}})}x_{\boldsymbol{\omega}}^{0}$ are absolutely convergent so the objective function in~\eqref{ObjectiveOfRMAB} is well defined. The $\max$ operator  is justified by observing that the objective function is upper bounded by $\frac{\max\{|R_i|\}+|\lambda|}{1-\beta}$ and our following analysis aims at finding the optimal stationary policy $\pi$ to maximize the objective function (i.e., the expected total discounted reward). Together with the constraint on each variable, our formulation falls into the standard form in~ \citet{anderson1987linear}. We point out the LP problem with a countably infinite number of both variables and constraints is significantly more complex than a finite-dimensional one and we must use the infinite-dimensional LP theory for investigating the duality and optimality, as detailed below.

For the equality constraints of this optimization problem, it can be understood that the left side of the equation is the direct performance measure for state $\boldsymbol{\omega}_{i}$, while the right side of the equation is a decomposition of its occupied time over previous states transitioning into it. Therefore, the performance measure for state $\boldsymbol{\omega}_{i}$ can be represented by a combination of performance measures of other states. This model with subsidy can be explained more intuitively through a two-arm system. In this system, the first arm is the original arm while the second arm (auxiliary arm) has only one state~$0$. In every epoch we choose one of the two arms to activate. When the auxiliary arm is activated, we obtain a fixed reward $\lambda$. Our goal is to find a policy to maximize the long-term expected discounted reward by deciding which arm to choose in each epoch. The benefit of using the two-arm model for the single-arm bandit with subsidy is that the second arm never changes its state and only the selected arm yields reward. Specifically, the objective function can be simply written as
\begin{equation*}
    \max\limits\sum\limits_{\boldsymbol{\omega}\in\Omega(\boldsymbol{\omega_{0}})}R_{\boldsymbol{\omega}}x_{\boldsymbol{\omega}}^{1}(\pi)+\lambda x_{0}^{1}(\pi),
\end{equation*}
where $x_{0}^{1}(\pi)$ is the performance measure of state~0 being activated under policy~$\pi$. Let $X$ be the set of all elements $(x_{0}^{1}(\pi),x_{\boldsymbol{\omega}}^{1}(\pi))_{\boldsymbol{\omega}\in\Omega(\boldsymbol{\omega_{0}})}$ as~$\pi$ traverses the admissible (feasible) policy set~$\Pi$. We call $X$ the achievable region. Under this definition, the optimization objective function can be written as follows:
\begin{equation}\label{OPT}
    \quad\max_{(x_{0}^{1},x_{\boldsymbol{\omega}}^{1})\in X}\sum\limits_{\boldsymbol{\omega}\in\Omega(\boldsymbol{\omega_{0}})}R_{\boldsymbol{\omega}}x_{\boldsymbol{\omega}}^{1}+\lambda x_{0}^{1}\tag{OPT}.
\end{equation}
The core of solving this optimization problem is to mathematically characterize the achievable region $X$. The so-called conservation law refers to the fact that for any $(x_{0}^{1},x_{\boldsymbol{\omega}}^{1})\in X$, these components may satisfy certain equality or inequality constraints. For the above model, a trivial equality constraint is $x_{0}^{1}+\sum\limits_{\boldsymbol{\omega}\in\Omega(\boldsymbol{\omega_{0}})}x_{\boldsymbol{\omega}}^{1}=\frac{1}{1-\beta}$. This is because the RMAB system has exactly one state in the active phase at each moment (time-conservation). In the following subsections, we will further explore the rich mathematical structure of performance measures for the two-arm system with countable states under the concept of achievable region.

\subsection{Partial Conservation Law}\label{sec:pcl}
In this section, we elaborate the relationship between performance measures for each state in the two-arm system. We first introduce some important variables following a similar line of \citet{nino2001restless,nino2002dynamic}. To measure the performance of the original arm under policy $\pi$, we define the following variables:
\begin{align}
    T_{\boldsymbol{\omega}_{i}}^{\pi}&:=\mathbb{E}_{\pi}\left[\sum\limits_{t=0}^{+\infty}\beta^{t}\mathbbm{1}(a(t)=1)\bigg\vert\boldsymbol{\omega}(0)=\boldsymbol{\omega}_{i}\right],\quad \boldsymbol{\omega_{i}}\in\Omega(\boldsymbol{\omega_{0}}),\\
    R_{\boldsymbol{\omega}_{i}}^{\pi}&:=\mathbb{E}_{\pi}\left[\sum\limits_{t=0}^{+\infty}\beta^{t}R_{\boldsymbol{\omega}(t)}\mathbbm{1}(a(t)=1)\bigg\vert\boldsymbol{\omega}(0)=\boldsymbol{\omega}_{i}\right],\quad \boldsymbol{\omega_{i}}\in\Omega(\boldsymbol{\omega_{0}}).
\end{align}
Here $T_{\boldsymbol{\omega}_{i}}^{\pi}$ describes the expected total discounted time for the original arm to be activated in the two-arm system under policy $\pi$. We require that the policy $\pi$ only depends on the original arm. Similarly, $R_{\boldsymbol{\omega}_{i}}^{\Omega}$ represents the expected total discounted reward obtained from the original arm under policy $\pi$. Therefore, $T_{\boldsymbol{\omega}_{i}}^{\pi}$ and $R_{\boldsymbol{\omega}_{i}}^{\pi}$ also have the following expressions:
\begin{align}
    T_{\boldsymbol{\omega}_{i}}^{\pi}&=\sum\limits_{\boldsymbol{\omega}\in\Omega(\boldsymbol{\omega}_{0})}x_{\boldsymbol{\omega}}^{1}(\pi,\boldsymbol{\omega}_{i}),\\
    R_{\boldsymbol{\omega}_{i}}^{\pi}&=\sum\limits_{\boldsymbol{\omega}\in\Omega(\boldsymbol{\omega}_{0})}R_{\boldsymbol{\omega}}x_{\boldsymbol{\omega}}^{1}(\pi,\boldsymbol{\omega}_{i}).
\end{align}
Clearly, the optimal policy partitions the belief state space~$\Omega(\boldsymbol{\omega_{0}})$ of the original arm into two sets where the optimal actions should be active and passive respectively. For any $\Omega\subset\Omega(\boldsymbol{\omega_{0}})$, we call $\pi_{\Omega}$ an $\Omega$-priority policy if the original arm is activated when its current belief state falls into $\Omega$ and made passive when the state falls into $\Omega^{c}$. For $\Omega$-priority policy, we have
\begin{align}
    T_{\boldsymbol{\omega}_{i}}^{\Omega}&=\mathbb{E}_{\pi_{\Omega}}\left[\sum\limits_{t=0}^{+\infty}\beta^{t}\mathbbm{1}(\boldsymbol{\omega}(t)\in\Omega)\bigg\vert\boldsymbol{\omega}(0)=\boldsymbol{\omega}_{i}\right],\quad \boldsymbol{\omega_{i}}\in\Omega(\boldsymbol{\omega_{0}}),\\
    R_{\boldsymbol{\omega}_{i}}^{\Omega}&=\mathbb{E}_{\pi_{\Omega}}\left[\sum\limits_{t=0}^{+\infty}\beta^{t}R_{\boldsymbol{\omega}(t)}\mathbbm{1}(\boldsymbol{\omega}(t)\in\Omega)\bigg\vert\boldsymbol{\omega}(0)=\boldsymbol{\omega}_{i}\right],\quad \boldsymbol{\omega_{i}}\in\Omega(\boldsymbol{\omega_{0}}).
\end{align}
From the above definition, we can easily give the following dynamic programming equations for these two variables:
\begin{equation}
    T_{\boldsymbol{\omega}_{i}}^{\Omega}=\begin{cases}
    1+\beta\sum\limits_{\boldsymbol{\omega_{j}}\in\Omega(\boldsymbol{\omega_{0}})}p_{ij}^{1}T_{\boldsymbol{\omega}_{j}}^{\Omega}, &\boldsymbol{\omega}_{i}\in\Omega\\
    \beta\sum\limits_{\boldsymbol{\omega_{j}}\in\Omega(\boldsymbol{\omega_{0}})}p_{ij}^{0}T_{\boldsymbol{\omega}_{j}}^{\Omega} &\boldsymbol{\omega}_{i}\notin\Omega
    \end{cases}
\end{equation}
and
\begin{equation}
    R_{\boldsymbol{\omega}_{i}}^{\Omega}=\begin{cases}
    R_{\boldsymbol{\omega}_{i}}+\beta\sum\limits_{\boldsymbol{\omega_{j}}\in\Omega(\boldsymbol{\omega_{0}})}p_{ij}^{1}R_{\boldsymbol{\omega}_{j}}^{\Omega}, &\boldsymbol{\omega}_{i}\in\Omega\\
    \beta\sum\limits_{\boldsymbol{\omega_{j}}\in\Omega(\boldsymbol{\omega_{0}})}p_{ij}^{0}R_{\boldsymbol{\omega}_{j}}^{\Omega} &\boldsymbol{\omega}_{i}\notin\Omega
    \end{cases}.
\end{equation}

If $\Omega(\boldsymbol{\omega_{0}})$ is a finite state space, we can directly solve for $T_{\boldsymbol{\omega_{i}}}$ and $R_{\boldsymbol{\omega_{i}}}$ from the two equation sets above. 
However, a direct solution is not available for countable state spaces. To further investigate the properties of these variables, we define $V_{\boldsymbol{\omega_{i}}}^{\pi}$ as the expected total discounted reward starting from state~$\boldsymbol{\omega_{i}}$ under policy~$\pi$. We have
\begin{equation}
V_{\boldsymbol{\omega_{i}}}^{\pi}=R_{\boldsymbol{\omega_{i}}}^{\pi}+\lambda\left(\frac{1}{1-\beta}-T_{\boldsymbol{\omega_{i}}}^{\pi}\right).
\end{equation}
Let $V_{\boldsymbol{\omega_{i}},0}^{\pi}$ and $V_{\boldsymbol{\omega_{i}},1}^{\pi}$ be the expected total discounted rewards with the initial state $\boldsymbol{\omega_{i}}$ when taking the passive and active actions respectively, then
\begin{align}
V_{\boldsymbol{\omega_{i}},0}^{\pi}=\lambda+\beta\sum\limits_{\boldsymbol{\omega_{j}}\in\Omega(\boldsymbol{\omega_{0}})}p_{ij}^{0}V_{\boldsymbol{\omega_{j}}}^{\pi},\\
V_{\boldsymbol{\omega_{i}},1}^{\pi}=R_{\boldsymbol{\omega_{i}}}+\beta\sum\limits_{\boldsymbol{\omega_{j}}\in\Omega(\boldsymbol{\omega_{0}})}p_{ij}^{1}V_{\boldsymbol{\omega_{j}}}^{\pi}.
\end{align}
If, under policy $\pi$, the expected total discounted reward obtained by active and passive actions are the same for the initial state $\boldsymbol{\omega_{i}}$, i.e., $V_{\boldsymbol{\omega_{i},0}}^{\pi}=V_{\boldsymbol{\omega_{i},1}}^{\pi}$, then we can solve for~$\lambda$:
\begin{equation}\label{eqn:lambdaExpress}
\lambda=\frac{R_{\boldsymbol{\omega_{i}}}+\beta\sum\limits_{\boldsymbol{\omega_{j}}\in\Omega(\boldsymbol{\omega_{0}})}(p_{ij}^{1}-p_{ij}^{0})R_{\boldsymbol{\omega_{j}}}^{\pi}}{1+\beta\sum\limits_{\boldsymbol{\omega_{j}}\in\Omega(\boldsymbol{\omega_{0}})}(p_{ij}^{1}-p_{ij}^{0})T_{\boldsymbol{\omega_{j}}}^{\pi}}.
\end{equation}
Now define
\begin{align}
    A_{\boldsymbol{\omega}_{i}}^{\Omega}&:=1+\beta\sum\limits_{\boldsymbol{\omega_{j}}\in\Omega(\boldsymbol{\omega_{0}})}(p_{ij}^{1}-p_{ij}^{0})T_{\boldsymbol{\omega}_{j}}^{\Omega^{c}},\\
    W_{\boldsymbol{\omega}_{i}}^{\Omega}&:=R_{\boldsymbol{\omega}_{i}}+\beta\sum\limits_{\boldsymbol{\omega_{j}}\in\Omega(\boldsymbol{\omega_{0}})}(p_{ij}^{1}-p_{ij}^{0})R_{\boldsymbol{\omega}_{j}}^{\Omega^{c}}.
\end{align}
For the auxiliary arm, we have $A_{0}^{\{0\}}=1$ and $W_{0}^{\{0\}}=\lambda$. Generally, we can extend these variables to the multi-arm case. Assume that there are two arms and their state spaces are $\Omega(\boldsymbol{\omega})$ and $\Omega'(\boldsymbol{\omega}')$ with $\Omega(\boldsymbol{\omega})\cap\Omega'(\boldsymbol{\omega}')=\emptyset$. Then for $\Omega\subset\Omega(\boldsymbol{\omega})$,  $\Omega'\subset\Omega'(\boldsymbol{\omega}')$ and $\boldsymbol{\omega_{i}}\in\Omega(\boldsymbol{\omega})$, we define $T_{\boldsymbol{\omega_{i}}}^{\Omega\cup\Omega'}:=T_{\boldsymbol{\omega_{i}}}^{\Omega}, R_{\boldsymbol{\omega_{i}}}^{\Omega\cup\Omega'}:=R_{\boldsymbol{\omega_{i}}}^{\Omega}, A_{\boldsymbol{\omega_{i}}}^{\Omega\cup\Omega'}:=A_{\boldsymbol{\omega_{i}}}^{\Omega}$ and $W_{\boldsymbol{\omega_{i}}}^{\Omega\cup\Omega'}:=W_{\boldsymbol{\omega_{i}}}^{\Omega}$. Previously, Niño-Mora provided the relationships among $T_{\boldsymbol{\omega_{i}}}^{\Omega}$, $R_{\boldsymbol{\omega_{i}}}^{\Omega}$, $A_{\boldsymbol{\omega_{i}}}^{\Omega}$, and $W_{\boldsymbol{\omega_{i}}}^{\Omega}$ for finite-state problems in \citep{nino2001restless,nino2002dynamic}. After the above formulation, we prove the important result that these conservation relations also hold for general countable state spaces:
\begin{theorem}[Decomposition Law]\label{prop:DecompLaw}
For any $\boldsymbol{\omega_{j}}\in\Omega(\boldsymbol{\omega_{0}})$ and $\Omega\subset\Omega(\boldsymbol{\omega_{0}})$, the following two identities hold:
\begin{align}
    T_{\boldsymbol{\omega}_{j}}^{\pi}+\sum\limits_{\boldsymbol{\omega_{i}}\in\Omega^{c}}A_{\boldsymbol{\omega_{i}}}^{\Omega}x_{\boldsymbol{\omega_{i}}}^{0}(\pi,\boldsymbol{\omega}_{j})&=T_{\boldsymbol{\omega}_{j}}^{\Omega^{c}}+\sum\limits_{\boldsymbol{\omega_{i}}\in\Omega}A_{\boldsymbol{\omega_{i}}}^{\Omega}x_{\boldsymbol{\omega_{i}}}^{1}(\pi,\boldsymbol{\omega}_{j})\label{eqn:TDecompLaw},\\
    R_{\boldsymbol{\omega}_{j}}^{\pi}+\sum\limits_{\boldsymbol{\omega_{i}}\in\Omega^{c}}W_{\boldsymbol{\omega_{i}}}^{\Omega}x_{\boldsymbol{\omega_{i}}}^{0}(\pi,\boldsymbol{\omega}_{j})&=R_{\boldsymbol{\omega}_{j}}^{\Omega^{c}}+\sum\limits_{\boldsymbol{\omega_{i}}\in\Omega}W_{\boldsymbol{\omega_{i}}}^{\Omega}x_{\boldsymbol{\omega_{i}}}^{1}(\pi,\boldsymbol{\omega}_{j})\label{eqn:RDecompLaw}.
\end{align}
\end{theorem}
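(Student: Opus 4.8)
The plan is to deduce both identities from a single \emph{performance-difference} formula, applied with two different one-step cost structures, handling the countability of $\Omega(\boldsymbol{\omega}_{0})$ through an absolute-convergence bookkeeping step. First I would record the a priori bounds that the hypothesis $|R_{\boldsymbol{\omega}_{i}}|\le C$ together with $0\le\beta<1$ supply: from the dynamic programming equations stated just above the theorem, $0\le T_{\boldsymbol{\omega}_{i}}^{\Omega^{c}}\le\tfrac{1}{1-\beta}$ and $|R_{\boldsymbol{\omega}_{i}}^{\Omega^{c}}|\le\tfrac{C}{1-\beta}$ for every state, so that $A_{\boldsymbol{\omega}_{i}}^{\Omega}$ and $W_{\boldsymbol{\omega}_{i}}^{\Omega}$ are bounded uniformly in $\boldsymbol{\omega}_{i}$; moreover, by time-conservation, $\sum_{\boldsymbol{\omega}}\bigl(x_{\boldsymbol{\omega}}^{0}(\pi,\boldsymbol{\omega}_{j})+x_{\boldsymbol{\omega}}^{1}(\pi,\boldsymbol{\omega}_{j})\bigr)=\tfrac{1}{1-\beta}$. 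These two facts ensure that every series appearing in \eqref{eqn:TDecompLaw}--\eqref{eqn:RDecompLaw} converges absolutely and that $T^{\Omega^{c}}$ and $R^{\Omega^{c}}$ are the unique bounded solutions of their dynamic programming equations.

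Next, for any bounded potential $\Phi=(\Phi_{i})$ on $\Omega(\boldsymbol{\omega}_{0})$ and any one-step costs $c^{0}_{i},c^{1}_{i}$, I would establish the identity
\[
\mathbb{E}_{\pi}\Big[\sum_{t\ge0}\beta^{t}c^{a(t)}_{\boldsymbol{\omega}(t)}\ \Big|\ \boldsymbol{\omega}(0)=\boldsymbol{\omega}_{j}\Big]-\Phi_{j}=\sum_{\boldsymbol{\omega}_{i}}\sum_{a=0,1}\Big(c^{a}_{i}+\beta\sum_{\boldsymbol{\omega}_{k}}p_{ik}^{a}\Phi_{k}-\Phi_{i}\Big)\,x_{\boldsymbol{\omega}_{i}}^{a}(\pi,\boldsymbol{\omega}_{j}),
\]
valid for every admissible policy $\pi$. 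This comes from inserting into the left-hand expectation the telescoping series $\sum_{t\ge0}\beta^{t}(\beta\Phi_{\boldsymbol{\omega}(t+1)}-\Phi_{\boldsymbol{\omega}(t)})$, whose $T$-th partial sum is $\beta^{T+1}\Phi_{\boldsymbol{\omega}(T+1)}-\Phi_{j}$ and hence tends to $-\Phi_{j}$ since $\Phi$ is bounded and $\beta<1$; then conditioning successively on the history $(\boldsymbol{\omega}(0),a(0),\dots,\boldsymbol{\omega}(t),a(t))$, using the tower property, interchanging $\mathbb{E}_{\pi}$ with the sums over $t$, over states, and over $a$, and finally recognising $\sum_{t}\beta^{t}P_{\pi}(\boldsymbol{\omega}(t)=\boldsymbol{\omega}_{i},a(t)=a\mid\boldsymbol{\omega}(0)=\boldsymbol{\omega}_{j})=x_{\boldsymbol{\omega}_{i}}^{a}(\pi,\boldsymbol{\omega}_{j})$. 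The limit in the telescoping sum and all interchanges are justified by dominated convergence, with the summable dominating majorant furnished by $\|\Phi\|_{\infty}<\infty$ and $\sum_{\boldsymbol{\omega},a}x_{\boldsymbol{\omega}}^{a}(\pi,\boldsymbol{\omega}_{j})=\tfrac{1}{1-\beta}$ from the first step.

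To prove \eqref{eqn:TDecompLaw} I would apply this with $c^{1}_{i}\equiv1$, $c^{0}_{i}\equiv0$ and $\Phi_{i}:=T_{\boldsymbol{\omega}_{i}}^{\Omega^{c}}$: the left-hand side reduces to $T_{\boldsymbol{\omega}_{j}}^{\pi}-T_{\boldsymbol{\omega}_{j}}^{\Omega^{c}}$, and writing $Q^{1}_{i}:=1+\beta\sum_{k}p_{ik}^{1}\Phi_{k}$, $Q^{0}_{i}:=\beta\sum_{k}p_{ik}^{0}\Phi_{k}$ one has $Q^{1}_{i}-Q^{0}_{i}=A_{\boldsymbol{\omega}_{i}}^{\Omega}$. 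Since $\Phi=T^{\Omega^{c}}$ solves the dynamic programming equation of the $\Omega^{c}$-priority policy, $\Phi_{i}=Q^{1}_{i}$ for $\boldsymbol{\omega}_{i}\in\Omega^{c}$ and $\Phi_{i}=Q^{0}_{i}$ for $\boldsymbol{\omega}_{i}\in\Omega$; consequently the bracketed ``advantage'' $c^{a}_{i}+\beta\sum_{k}p_{ik}^{a}\Phi_{k}-\Phi_{i}$ vanishes for the action prescribed by $\pi_{\Omega^{c}}$ and equals $+A_{\boldsymbol{\omega}_{i}}^{\Omega}$ for $a=1$ when $\boldsymbol{\omega}_{i}\in\Omega$ and $-A_{\boldsymbol{\omega}_{i}}^{\Omega}$ for $a=0$ when $\boldsymbol{\omega}_{i}\in\Omega^{c}$. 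The double sum then collapses to $\sum_{\boldsymbol{\omega}_{i}\in\Omega}A_{\boldsymbol{\omega}_{i}}^{\Omega}x_{\boldsymbol{\omega}_{i}}^{1}(\pi,\boldsymbol{\omega}_{j})-\sum_{\boldsymbol{\omega}_{i}\in\Omega^{c}}A_{\boldsymbol{\omega}_{i}}^{\Omega}x_{\boldsymbol{\omega}_{i}}^{0}(\pi,\boldsymbol{\omega}_{j})$, and rearranging yields \eqref{eqn:TDecompLaw}. Identity \eqref{eqn:RDecompLaw} follows verbatim after replacing $c^{1}_{i}\equiv1$ by $c^{1}_{i}:=R_{\boldsymbol{\omega}_{i}}$ and $\Phi_{i}:=T_{\boldsymbol{\omega}_{i}}^{\Omega^{c}}$ by $\Phi_{i}:=R_{\boldsymbol{\omega}_{i}}^{\Omega^{c}}$, the advantage now being $\pm W_{\boldsymbol{\omega}_{i}}^{\Omega}$.

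In the finite-state case treated by \citet{nino2001restless,nino2002dynamic} both identities reduce to a finite-dimensional linear-algebra computation, so the genuine work here is confined to the second step: showing $\beta^{T}\Phi_{\boldsymbol{\omega}(T)}\to0$ in expectation, that $\mathbb{E}_{\pi}$ commutes with the three nested infinite sums, and that the resulting signed double sum may be freely split and reassembled over $\Omega$ and $\Omega^{c}$. Each of these is controlled by the uniform boundedness of $\Phi$ (equivalently of $A^{\Omega}$ and $W^{\Omega}$) together with the $\tfrac{1}{1-\beta}$ bound on total discounted occupation; this is exactly where the hypothesis $|R_{\boldsymbol{\omega}_{i}}|\le C$ enters, since without a uniform reward bound the potential $R^{\Omega^{c}}$ need not be bounded and the telescoping identity fails.
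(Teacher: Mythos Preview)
Your proof is correct and reaches the same two identities as the paper, but by a genuinely different route. The paper argues statically from the occupation-measure balance equations: it takes the flow-balance constraint
\[
x^{1}_{\boldsymbol{\omega}_{i}}+x^{0}_{\boldsymbol{\omega}_{i}}=e_{\boldsymbol{\omega}_{i}}+\beta\sum_{\boldsymbol{\omega}_{j}}p_{ji}^{1}x^{1}_{\boldsymbol{\omega}_{j}}+\beta\sum_{\boldsymbol{\omega}_{j}}p_{ji}^{0}x^{0}_{\boldsymbol{\omega}_{j}},
\]
multiplies through by $T_{\boldsymbol{\omega}_{i}}^{\Omega^{c}}$, sums over $\boldsymbol{\omega}_{i}\in\Omega(\boldsymbol{\omega}_{0})$, justifies the interchange of the resulting double sums via the same nonnegativity and $\tfrac{1}{1-\beta}$ bound you invoke, and then inserts the dynamic programming equations for $T^{\Omega^{c}}$ to collapse the expression to \eqref{eqn:TDecompLaw}. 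You instead work dynamically at the trajectory level, proving a general performance-difference (advantage-decomposition) lemma by telescoping $\sum_{t}\beta^{t}(\beta\Phi_{\boldsymbol{\omega}(t+1)}-\Phi_{\boldsymbol{\omega}(t)})$ and only afterwards passing to occupation measures. The two arguments are dual in the usual LP sense---the paper manipulates the linear constraints on $x^{a}$ directly, you manipulate a pathwise expectation and then identify the coefficients---and both rest on exactly the same two ingredients: a uniform bound on the potential ($T^{\Omega^{c}}$ or $R^{\Omega^{c}}$) to license all limit interchanges, and the DP characterisation of the $\Omega^{c}$-priority policy to make the off-policy advantage terms vanish. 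Your route has the merit of isolating a reusable lemma applicable to any bounded potential and cost pair; the paper's is marginally more elementary in that it never writes a trajectory-level expectation, only algebraic identities among already-defined occupation measures.
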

\proof
Assume the initial belief state is $\boldsymbol{\omega_{k}}$. For convenience, abbreviate $x_{\boldsymbol{\omega_{i}}}^{1}(\pi,\boldsymbol{\omega_{k}})$ and $x_{\boldsymbol{\omega_{i}}}^{0}(\pi,\boldsymbol{\omega_{k}})$ as $x_{\boldsymbol{\omega_{i}}}^{1}$ and $x_{\boldsymbol{\omega_{i}}}^{0}$. In this problem, we also have the constraint equation:
\begin{equation*}
    x_{\boldsymbol{\omega_{i}}}^{1}+x_{\boldsymbol{\omega_{i}}}^{0}=e_{\boldsymbol{\omega_{i}}}+\beta\sum\limits_{\boldsymbol{\omega_{j}}\in\Omega(\boldsymbol{\omega_{0}})}p_{ji}^{1}x_{\boldsymbol{\omega_{j}}}^{1}+\beta\sum\limits_{\boldsymbol{\omega_{j}}\in\Omega(\boldsymbol{\omega_{0}})}p_{ji}^{0}x_{\boldsymbol{\omega_{j}}}^{0},\quad \boldsymbol{\omega_{i}}\in\Omega(\boldsymbol{\omega_{0}}).
\end{equation*}
Since $\sum\limits_{\boldsymbol{\omega_{i}}\in\Omega(\boldsymbol{\omega_{0}})}(x_{\boldsymbol{\omega_{i}}}^{1}+x_{\boldsymbol{\omega_{i}}}^{0})=\frac{1}{1-\beta}$, and for any $\boldsymbol{\omega_{i}}\in\Omega(\boldsymbol{\omega_{0}})$, $x_{\boldsymbol{\omega_{i}}}^{1},x_{\boldsymbol{\omega_{i}}}^{0}\geq 0$, thus by the bounded convergence theorem for monotonic sequences (Theorem 3.14 in \citet{rudin1976principles}), $\sum\limits_{\boldsymbol{\omega_{i}}\in\Omega(\boldsymbol{\omega_{0}})}x_{\boldsymbol{\omega_{i}}}^{1}$ and $\sum\limits_{\boldsymbol{\omega_{i}}\in\Omega(\boldsymbol{\omega_{0}})}x_{\boldsymbol{\omega_{i}}}^{0}$ are both absolutely convergent. Due to the countability of $\Omega(\boldsymbol{\omega_{0}})$, summing the above constraint equations gives:
\begin{equation}
    \frac{1}{1-\beta}=\sum\limits_{\boldsymbol{\omega_{i}}\in\Omega(\boldsymbol{\omega_{0}})}\sum\limits_{\boldsymbol{\omega_{j}}\in\Omega(\boldsymbol{\omega_{0}})}p_{ji}^{1}x_{\boldsymbol{\omega_{j}}}^{1}+\sum\limits_{\boldsymbol{\omega_{i}}\in\Omega(\boldsymbol{\omega_{0}})}\sum\limits_{\boldsymbol{\omega_{j}}\in\Omega(\boldsymbol{\omega_{0}})}p_{ji}^{0}x_{\boldsymbol{\omega_{j}}}^{0}.
\end{equation}
Therefore, $\sum\limits_{\boldsymbol{\omega_{i}}\in\Omega(\boldsymbol{\omega_{0}})}\sum\limits_{\boldsymbol{\omega_{j}}\in\Omega(\boldsymbol{\omega_{0}})}p_{ji}^{1}x_{\boldsymbol{\omega_{j}}}^{1}$ and $\sum\limits_{\boldsymbol{\omega_{i}}\in\Omega(\boldsymbol{\omega_{0}})}\sum\limits_{\boldsymbol{\omega_{j}}\in\Omega(\boldsymbol{\omega_{0}})}p_{ji}^{0}x_{\boldsymbol{\omega_{j}}}^{0}$ are both absolutely convergent by the bounded convergence theorem for monotonic sequences and their summation orders are exchangeable (Theorem 8.3 in \citet{rudin1976principles}). By constraint equations and absolute convergence of series, we have
\begin{equation}
    \sum\limits_{\boldsymbol{\omega_{i}}\in\Omega(\boldsymbol{\omega_{0}})}(x_{\boldsymbol{\omega_{i}}}^{1}-\beta\sum\limits_{\boldsymbol{\omega_{j}}\in\Omega(\boldsymbol{\omega_{0}})}p_{ji}^{1}x_{\boldsymbol{\omega_{j}}}^{1})T_{\boldsymbol{\omega_{i}}}^{\Omega^{c}}+\sum\limits_{\boldsymbol{\omega_{i}}\in\Omega(\boldsymbol{\omega_{0}})}(x_{\boldsymbol{\omega_{i}}}^{0}-\beta\sum\limits_{\boldsymbol{\omega_{j}}\in\Omega(\boldsymbol{\omega_{0}})}p_{ji}^{0}x_{\boldsymbol{\omega_{j}}}^{0})T_{\boldsymbol{\omega_{i}}}^{\Omega^{c}}=T_{\boldsymbol{\omega_{k}}}^{\Omega^{c}}.
\end{equation}
Since $T_{\boldsymbol{\omega_{i}}}^{\Omega^{c}}$ has the upper bound $\frac{1}{1-\beta}$, the series in the above equation converge. In this equation,
\begin{align*}
    &\sum\limits_{\boldsymbol{\omega_{i}}\in\Omega(\boldsymbol{\omega_{0}})}(x_{\boldsymbol{\omega_{i}}}^{1}-\beta\sum\limits_{\boldsymbol{\omega_{j}}\in\Omega(\boldsymbol{\omega_{0}})}p_{ji}^{1}x_{\boldsymbol{\omega_{j}}}^{1})T_{\boldsymbol{\omega_{i}}}^{\Omega^{c}}\\
    =&\sum\limits_{\boldsymbol{\omega_{i}}\in\Omega(\boldsymbol{\omega_{0}})}x_{\boldsymbol{\omega_{i}}}^{1}T_{\boldsymbol{\omega_{i}}}^{\Omega^{c}}-\beta\sum\limits_{\boldsymbol{\omega_{i}}\in\Omega(\boldsymbol{\omega_{0}})}\sum\limits_{\boldsymbol{\omega_{j}}\in\Omega(\boldsymbol{\omega_{0}})}p_{ji}^{1}x_{\boldsymbol{\omega_{j}}}^{1}T_{\boldsymbol{\omega_{i}}}^{\Omega^{c}}\\
    =&\sum\limits_{\boldsymbol{\omega_{i}}\in\Omega(\boldsymbol{\omega_{0}})}x_{\boldsymbol{\omega_{i}}}^{1}T_{\boldsymbol{\omega_{i}}}^{\Omega^{c}}-\beta\sum\limits_{\boldsymbol{\omega_{j}}\in\Omega(\boldsymbol{\omega_{0}})}\sum\limits_{\boldsymbol{\omega_{i}}\in\Omega(\boldsymbol{\omega_{0}})}p_{ij}^{1}x_{\boldsymbol{\omega_{i}}}^{1}T_{\boldsymbol{\omega_{j}}}^{\Omega^{c}}\\
    =&\sum\limits_{\boldsymbol{\omega_{i}}\in\Omega(\boldsymbol{\omega_{0}})}x_{\boldsymbol{\omega_{i}}}^{1}T_{\boldsymbol{\omega_{i}}}^{\Omega^{c}}-\beta\sum\limits_{\boldsymbol{\omega_{i}}\in\Omega(\boldsymbol{\omega_{0}})}\sum\limits_{\boldsymbol{\omega_{j}}\in\Omega(\boldsymbol{\omega_{0}})}p_{ij}^{1}x_{\boldsymbol{\omega_{i}}}^{1}T_{\boldsymbol{\omega_{j}}}^{\Omega^{c}}\\
    =&\sum\limits_{\boldsymbol{\omega_{i}}\in\Omega(\boldsymbol{\omega_{0}})}x_{\boldsymbol{\omega_{i}}}^{1}(T_{\boldsymbol{\omega_{i}}}^{\Omega^{c}}-\beta\sum\limits_{\boldsymbol{\omega_{j}}\in\Omega(\boldsymbol{\omega_{0}})}p_{ij}^{1}T_{\boldsymbol{\omega_{j}}}^{\Omega^{c}})\\
    =&\sum\limits_{\boldsymbol{\omega_{i}}\in\Omega}x_{\boldsymbol{\omega_{i}}}^{1}(T_{\boldsymbol{\omega_{i}}}^{\Omega^{c}}-\beta\sum\limits_{\boldsymbol{\omega_{j}}\in\Omega(\boldsymbol{\omega_{0}})}p_{ij}^{1}T_{\boldsymbol{\omega_{j}}}^{\Omega^{c}})+\sum\limits_{\boldsymbol{\omega_{i}}\in\Omega^{c}}x_{\boldsymbol{\omega_{i}}}^{1}(T_{\boldsymbol{\omega_{i}}}^{\Omega^{c}}-\beta\sum\limits_{\boldsymbol{\omega_{j}}\in\Omega(\boldsymbol{\omega_{0}})}p_{ij}^{1}T_{\boldsymbol{\omega_{j}}}^{\Omega^{c}})\\
    =&\sum\limits_{\boldsymbol{\omega_{i}}\in\Omega}x_{\boldsymbol{\omega_{i}}}^{1}(1-A_{\boldsymbol{\omega_{i}}}^{\Omega})+\sum\limits_{\boldsymbol{\omega_{i}}\in\Omega^{c}}x_{\boldsymbol{\omega_{i}}}^{1}\\
    =&\sum\limits_{\boldsymbol{\omega_{i}}\in\Omega(\boldsymbol{\omega_{0}})}x_{\boldsymbol{\omega_{i}}}^{1}-\sum\limits_{\boldsymbol{\omega_{i}}\in\Omega}x_{\boldsymbol{\omega_{i}}}^{1}A_{\boldsymbol{\omega_{i}}}^{\Omega}.
\end{align*}
Here, the third equality exchanges the order of summation and the sixth equality uses the equations of $A_{\boldsymbol{\omega_{i}}}^{\Omega}$ and $W_{\boldsymbol{\omega_{i}}}^{\Omega}$. Similarly,
\begin{align*}
&\sum\limits_{\boldsymbol{\omega_{i}}\in\Omega(\boldsymbol{\omega_{0}})}(x_{\boldsymbol{\omega_{i}}}^{0}-\beta\sum\limits_{\boldsymbol{\omega_{j}}\in\Omega(\boldsymbol{\omega_{0}})}p_{ji}^{0}x_{\boldsymbol{\omega_{j}}}^{0})T_{\boldsymbol{\omega_{i}}}^{\Omega^{c}}\\
=&\sum\limits_{\boldsymbol{\omega_{i}}\in\Omega}x_{\boldsymbol{\omega_{i}}}^{0}(T_{\boldsymbol{\omega_{i}}}^{\Omega^{c}}-\beta\sum\limits_{\boldsymbol{\omega_{j}}\in\Omega(\boldsymbol{\omega_{0}})}p_{ij}^{0}T_{\boldsymbol{\omega_{j}}}^{\Omega^{c}})+\sum\limits_{\boldsymbol{\omega_{i}}\in\Omega^{c}}x_{\boldsymbol{\omega_{i}}}^{0}(T_{\boldsymbol{\omega_{i}}}^{\Omega^{c}}-\beta\sum\limits_{\boldsymbol{\omega_{j}}\in\Omega(\boldsymbol{\omega_{0}})}p_{ij}^{0}T_{\boldsymbol{\omega_{j}}}^{\Omega^{c}})\\
=&\sum\limits_{\boldsymbol{\omega_{i}}\in \Omega^{c}}x_{\boldsymbol{\omega_{i}}}^{0}A_{\boldsymbol{\omega_{i}}}^{\Omega}.
\end{align*}
Note that
\begin{equation}
    \sum\limits_{\boldsymbol{\omega_{i}}\in\Omega(\boldsymbol{\omega_{0}})}x_{\boldsymbol{\omega_{i}}}^{1}(\pi,\boldsymbol{\omega_{k}})=T_{\boldsymbol{\omega_{k}}}^{\pi}.
\end{equation}
We obtain
\begin{equation}
    T_{\boldsymbol{\omega_{k}}}^{\pi}+\sum\limits_{\boldsymbol{\omega_{i}}\in \Omega^{c}}A_{\boldsymbol{\omega_{i}}}^{\Omega}x_{\boldsymbol{\omega_{i}}}^{0}(\pi,\boldsymbol{\omega_{k}})=T_{\boldsymbol{\omega_{k}}}^{\Omega^{c}}+\sum\limits_{\boldsymbol{\omega_{i}}\in\Omega}A_{\boldsymbol{\omega_{i}}}^{\Omega}x_{\boldsymbol{\omega_{i}}}^{1}(\pi,\boldsymbol{\omega_{k}}).
\end{equation}
Similarly we have
\begin{equation}
    R_{\boldsymbol{\omega_{k}}}^{\pi}+\sum\limits_{\boldsymbol{\omega_{i}}\in\Omega^{c}}W_{\boldsymbol{\omega_{i}}}^{\Omega}x_{\boldsymbol{\omega_{i}}}^{0}(\pi,\boldsymbol{\omega_{k}})=R_{\boldsymbol{\omega_{k}}}^{\Omega^{c}}+\sum\limits_{\boldsymbol{\omega_{i}}\in\Omega}W_{\boldsymbol{\omega_{i}}}^{\Omega}x_{\boldsymbol{\omega_{i}}}^{1}(\pi,\boldsymbol{\omega_{k}}).
\end{equation}
\Halmos
\endproof

We can also perform certain transformations on two equations. Note that $x_{0}^{1}(\pi,\boldsymbol{\omega_{j}})=\frac{1}{1-\beta}-T_{\boldsymbol{\omega_{j}}}^{\pi}$, equation (\ref{eqn:TDecompLaw}) can be written as
\begin{equation}
x_{0}^{1}(\pi,\boldsymbol{\omega_{j}})+\sum\limits_{\boldsymbol{\omega_{i}}\in\Omega}A_{\boldsymbol{\omega_{i}}}^{\Omega}x_{\boldsymbol{\omega_{i}}}^{1}(\pi,\boldsymbol{\omega_{j}})=\frac{1}{1-\beta}-T_{\boldsymbol{\omega_{j}}}^{\Omega^{c}}+\sum\limits_{\boldsymbol{\omega_{i}}\in\Omega^{c}}A_{\boldsymbol{\omega_{i}}}^{\Omega}x_{\boldsymbol{\omega_{i}}}^{0}(\pi,\boldsymbol{\omega_{j}})\label{eqn:ADecompLaw1}.
\end{equation}
Similarly, we can convert (\ref{eqn:RDecompLaw}) to
\begin{equation}
\lambda x_{0}^{1}(\pi,\boldsymbol{\omega_{j}})+\sum\limits_{\boldsymbol{\omega_{i}}\in\Omega}W_{\boldsymbol{\omega_{i}}}^{\Omega}x_{\boldsymbol{\omega_{i}}}^{1}(\pi,\boldsymbol{\omega_{j}})=V_{\boldsymbol{\omega_{j}}}^{\pi}-R_{\boldsymbol{\omega_{j}}}^{\Omega^{c}}+\sum\limits_{\boldsymbol{\omega_{i}}\in\Omega^{c}}W_{\boldsymbol{\omega_{i}}}^{\Omega}x_{\boldsymbol{\omega_{i}}}^{0}(\pi,\boldsymbol{\omega_{j}})\label{eqn:WDecompLaw1}.
\end{equation}
If for any $\boldsymbol{\omega_{i}}\in\Omega^{c}$ we have $A_{\boldsymbol{\omega_{i}}}^{\Omega}\geq 0$, then
\begin{equation*}
x_{0}^{1}(\pi,\boldsymbol{\omega_{j}})+\sum\limits_{\boldsymbol{\omega_{i}}\in\Omega}A_{\boldsymbol{\omega_{i}}}^{\Omega}x_{\boldsymbol{\omega_{i}}}^{1}(\pi,\boldsymbol{\omega_{j}})\geq\frac{1}{1-\beta}-T_{\boldsymbol{\omega_{j}}}^{\Omega^{c}}.
\end{equation*}
The equality holds when policy $\pi$ gives higher priority to state 0 (the second arm) than states in $\Omega^{c}$. However, not all $\Omega\subset\Omega(\boldsymbol{\omega_{0}})$ have such property, so we need to consider a family of sets $\tilde{\Omega}\subset 2^{\Omega(\boldsymbol{\omega_{0}})}$, which satisfies $\Omega(\boldsymbol{\omega_{0}})\in\tilde{\Omega}$. For any $\Omega\in\tilde{\Omega}$, define $b(\Omega):=\inf\limits_{\pi}\{\sum\limits_{\boldsymbol{\omega}\in\Omega}A_{\boldsymbol{\omega}}^{\Omega}x_{\boldsymbol{\omega}}^{1}(\pi)\},b(\{0\}\cup\Omega)=\inf\limits_{\pi}\{x_{0}^{1}(\pi)+\sum\limits_{\boldsymbol{\omega}\in\Omega}A_{\boldsymbol{\omega}}^{\Omega}x_{\boldsymbol{\omega}}^{1}(\pi)\}$. Here we give the following definition of partial conservation law generalized from \citet{nino2001restless}:
\begin{definition}[Partial Conservation Law (PCL)]\label{def:PCL}
Let $\tilde{\Omega}$ be a feasible set family on the full belief-state space $\Omega(\boldsymbol{\omega}_{0})$ (i.e., $\emptyset,\ \Omega(\boldsymbol{\omega}_{0})\in\tilde{\Omega}$, and the complement of any element in $\tilde{\Omega}$ is taken with respect to $\Omega(\boldsymbol{\omega}_{0})$). The problem is said to satisfy the partial conservation law (PCL) with respect to $\tilde{\Omega}$ if, for any $\Omega\in\tilde{\Omega}$ and policy $\pi$, the following hold:
\begin{equation*}
    A_{\boldsymbol{\omega}}^{\Omega}>0, ~\forall\boldsymbol{\omega}\in\Omega,
\end{equation*}
\begin{equation*}
    x_{0}^{1}(\pi)+\sum\limits_{\boldsymbol{\omega}\in\Omega}A_{\boldsymbol{\omega}}^{\Omega}x_{\boldsymbol{\omega}}^{1}(\pi)\geq b(\{0\}\cup\Omega),
\end{equation*}
\begin{equation*}
    \sum\limits_{\boldsymbol{\omega}\in\Omega}A_{\boldsymbol{\omega}}^{\Omega}x_{\boldsymbol{\omega}}^{1}(\pi)\geq b(\Omega),
\end{equation*}
\begin{equation*}
    x_{0}^{1}(\pi)+\sum\limits_{\boldsymbol{\omega}\in\Omega}x_{\boldsymbol{\omega}}^{1}(\pi)=b(\{0\}\cup\Omega(\boldsymbol{\omega}_{0})).
\end{equation*}
The first inequality holds with equality if policy $\pi$ gives priority to 0 over $\Omega$, and the second inequality holds with equality if policy $\pi$ gives priority to $\Omega^{c}$ over 0.
\end{definition}

Conservation law limits the range of feasible region composed of performance measures of various states through a series of constraints. However, we are not yet ready to judge whether the objective function achieves the optimal value in this constrained region. In order to further characterize the indices of priority policies, we need to restrict the aforementioned $\tilde{\Omega}$. We  first give the following definition on a chain of subsets of $\tilde{\Omega}$:
\begin{definition}[Full subset chain of $\tilde{\Omega}$]\label{def:FullSubsetChain}
    We call a collection family $C=\{\Omega_{\boldsymbol{\omega_{i}}}^{C}\}_{\boldsymbol{\omega_{i}}\in\Omega(\boldsymbol{\omega_{0}})}\subset\tilde{\Omega}$ a full subset chain of $\tilde{\Omega}$, if it satisfies the following conditions:\\
    (i) $\forall\boldsymbol{\omega_{i}}\in\Omega(\boldsymbol{\omega_{0}}),\boldsymbol{\omega_{i}}\in\Omega_{\boldsymbol{\omega_{i}}}^{C}$;\\
    (ii) $\forall\boldsymbol{\omega_{i}},\boldsymbol{\omega_{j}}\in\Omega(\boldsymbol{\omega_{0}}), \boldsymbol{\omega_{i}}\neq\boldsymbol{\omega_{j}}$, there must be $\Omega_{\boldsymbol{\omega_{i}}}^{C}\subsetneq \Omega_{\boldsymbol{\omega_{j}}}^{C}$ or $\Omega_{\boldsymbol{\omega_{j}}}^{C}\subsetneq \Omega_{\boldsymbol{\omega_{i}}}^{C}$, and if $\Omega_{\boldsymbol{\omega_{i}}}^{C}\subsetneq \Omega_{\boldsymbol{\omega_{j}}}^{C}$, then $\boldsymbol{\omega_{j}}\notin\Omega_{\boldsymbol{\omega_{i}}}^{C}$.\\
    Furthermore, the entire full subset chain of $\tilde{\Omega}$ is denoted as $\mathcal{C}(\tilde{\Omega})$. 
\end{definition}
Definition~\ref{def:FullSubsetChain} aims to find an inclusive but separable relation between the subsets of belief states. Separability (if $\Omega_{\boldsymbol{\omega_{i}}}^{C}\subsetneq \Omega_{\boldsymbol{\omega_{j}}}^{C}$, then $\boldsymbol{\omega_{j}}\notin\Omega_{\boldsymbol{\omega_{i}}}^{C}$) ensures that different belief states have different Whittle indices while inclusiveness ($\forall\boldsymbol{\omega_{i}},\boldsymbol{\omega_{j}}\in\Omega(\boldsymbol{\omega_{0}}), \boldsymbol{\omega_{i}}\neq\boldsymbol{\omega_{j}}$, there must be $\Omega_{\boldsymbol{\omega_{i}}}^{C}\subsetneq \Omega_{\boldsymbol{\omega_{j}}}^{C}$ or $\Omega_{\boldsymbol{\omega_{j}}}^{C}\subsetneq \Omega_{\boldsymbol{\omega_{i}}}^{C}$) leads to a well-order by Whittle indices detailed below.

Now assume $\tilde{\Omega}$ satisfies $\mathcal{C}(\tilde{\Omega})\neq\emptyset$. For $C\in\mathcal{C}(\tilde{\Omega})$, we can use the equation (\ref{eqn:lambdaExpress}) to formally construct variable $\lambda_{\boldsymbol{\omega_{i}}}^{C}=\frac{W_{\boldsymbol{\omega_{i}}}^{\Omega_{\boldsymbol{\omega_{i}}}^{C}}}{A_{\boldsymbol{\omega_{i}}}^{\Omega_{\boldsymbol{\omega_{i}}}^{C}}}$. With the above concept, we provide the definition of PCL-indexability for countable state problems as follows:
\begin{definition}[PCL-indexability]\label{def:PCLIndexability}
    We call the problem PCL-indexable w.r.t. $\tilde{\Omega}$ if\\
    (i) The problem satisfies PCL (Definition~\ref{def:PCL}) w.r.t. $\tilde{\Omega}$;\\
    (ii) $\exists! C\in\mathcal{C}(\tilde{\Omega})$, s.t.$\forall \boldsymbol{\omega_{i}}\neq\boldsymbol{\omega_{j}}$ satisfying $\Omega_{\boldsymbol{\omega_{i}}}^{C}\subsetneq \Omega_{\boldsymbol{\omega_{j}}}^{C}$, there is $\lambda_{\boldsymbol{\omega_{i}}}^{C}<\lambda_{\boldsymbol{\omega_{j}}}^{C}$. ($\exists!$ denotes unique existence.)
\end{definition}

Condition~(i) in Definition~\ref{def:PCLIndexability} characterizes part of the boundary of the achievable region through the constraints between performance measures for an infinite dimensional linear programming problem, and condition~(ii) means that the objective function can take its maximum value at a unique vertex in the achievable region while the policy corresponding to this vertex is a state priority policy obtained by sorting according to the size of the index $\lambda_{\boldsymbol{\omega}}^{C}$. Next, we show that if the problem is PCL-indexable w.r.t. $\tilde{\Omega}$, then we can construct a unique full subset chain of $\tilde{\Omega}_{0}:=\tilde{\Omega}\cup\{\{0\}\cup\Omega,\Omega\in\tilde{\Omega}\}$. Assume $\exists! C\in\mathcal{C}(\tilde{\Omega})$, s.t.$\forall\boldsymbol{\omega_{i}}\neq\boldsymbol{\omega_{j}}$ with $\Omega_{\boldsymbol{\omega_{i}}}^{C}\subsetneq\Omega_{\boldsymbol{\omega_{j}}}^{C}$, we have $\lambda_{\boldsymbol{\omega_{i}}}^{C}<\lambda_{\boldsymbol{\omega_{j}}}^{C}$. We can construct the full subset chain $C'$ of $\tilde{\Omega}_{0}$ as follows:
\begin{equation*}
    \Omega_{\boldsymbol{\omega}}^{C'}=\begin{cases}
    \Omega_{\boldsymbol{\omega}}^{C}\cup\{0\},&\lambda_{\boldsymbol{\omega}}^{C}>\lambda\\
    \Omega_{\boldsymbol{\omega}}^{C},&\lambda_{\boldsymbol{\omega}}^{C}\leq\lambda
    \end{cases}
\end{equation*}
and
\begin{equation*}
    \Omega_{0}^{C'}=\{0\}\cup\bigcup\limits_{\boldsymbol{\omega}:\lambda_{\boldsymbol{\omega}}\leq\lambda}\Omega_{\boldsymbol{\omega}}^{C}.
\end{equation*}
Without loss of generality, here we can assume $\bigcup\limits_{\boldsymbol{\omega}:\lambda_{\boldsymbol{\omega}}\leq\lambda}\Omega_{\boldsymbol{\omega}}^{C}\in\tilde{\Omega}$, otherwise we just need to add it to $\tilde{\Omega}$. Under such a construction, it is easy to verify that $C'$ is a full subset chain of $\tilde{\Omega}_{0}$, and it satisfies the second condition of Definition~\ref{def:PCLIndexability}. Furthermore, we have the following characterization of the unique full subset chain of $\tilde{\Omega}_{0}$:
\begin{lemma}\label{prop:ChainDescribe}
Assume the problem is PCL-indexable w.r.t. $\tilde{\Omega}$, then the unique full subset chain constructed above has the following index characterization:
\begin{equation}
\Omega_{\boldsymbol{\omega_{i}}}=\{\boldsymbol{\omega_{k}}\in\Omega(\boldsymbol{\omega_{0}})\cup\{0\}:\lambda_{\boldsymbol{\omega_{k}}}\leq\lambda_{\boldsymbol{\omega_{i}}}\}\label{eqn:ChainDescribe}.
\end{equation}
\end{lemma}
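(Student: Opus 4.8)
The plan is to deduce (\ref{eqn:ChainDescribe}) from a single order-theoretic fact about full subset chains that carry separated indices, and then apply it to the explicitly constructed $C'$. The heart of the argument is the following claim: \emph{if $D=\{\Omega_{\boldsymbol{\omega}}^{D}\}$ is a full subset chain (Definition~\ref{def:FullSubsetChain}) of some family such that $\Omega_{\boldsymbol{\omega_{i}}}^{D}\subsetneq\Omega_{\boldsymbol{\omega_{j}}}^{D}$ implies $\lambda_{\boldsymbol{\omega_{i}}}^{D}<\lambda_{\boldsymbol{\omega_{j}}}^{D}$, then $\Omega_{\boldsymbol{\omega_{i}}}^{D}=\{\boldsymbol{\omega_{k}}:\lambda_{\boldsymbol{\omega_{k}}}^{D}\leq\lambda_{\boldsymbol{\omega_{i}}}^{D}\}$ for every $\boldsymbol{\omega_{i}}$.} To prove this, first note that by Definition~\ref{def:FullSubsetChain}(ii) the sets $\Omega_{\boldsymbol{\omega}}^{D}$ are pairwise strictly nested, hence distinct, so under the separation hypothesis the map $\boldsymbol{\omega}\mapsto\lambda_{\boldsymbol{\omega}}^{D}$ is injective and strictly order-preserving for inclusion. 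For the inclusion $\Omega_{\boldsymbol{\omega_{i}}}^{D}\subseteq\{\boldsymbol{\omega_{k}}:\lambda_{\boldsymbol{\omega_{k}}}^{D}\leq\lambda_{\boldsymbol{\omega_{i}}}^{D}\}$: if $\boldsymbol{\omega_{k}}\in\Omega_{\boldsymbol{\omega_{i}}}^{D}$ with $\boldsymbol{\omega_{k}}\neq\boldsymbol{\omega_{i}}$, then $\Omega_{\boldsymbol{\omega_{i}}}^{D}\subsetneq\Omega_{\boldsymbol{\omega_{k}}}^{D}$ is impossible, since the last clause of Definition~\ref{def:FullSubsetChain}(ii) would then force $\boldsymbol{\omega_{k}}\notin\Omega_{\boldsymbol{\omega_{i}}}^{D}$; hence $\Omega_{\boldsymbol{\omega_{k}}}^{D}\subsetneq\Omega_{\boldsymbol{\omega_{i}}}^{D}$ and $\lambda_{\boldsymbol{\omega_{k}}}^{D}<\lambda_{\boldsymbol{\omega_{i}}}^{D}$. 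For the reverse inclusion: if $\lambda_{\boldsymbol{\omega_{k}}}^{D}\leq\lambda_{\boldsymbol{\omega_{i}}}^{D}$, then either $\boldsymbol{\omega_{k}}=\boldsymbol{\omega_{i}}$, so $\boldsymbol{\omega_{k}}\in\Omega_{\boldsymbol{\omega_{i}}}^{D}$ by Definition~\ref{def:FullSubsetChain}(i), or $\lambda_{\boldsymbol{\omega_{k}}}^{D}<\lambda_{\boldsymbol{\omega_{i}}}^{D}$, which (by total nesting and injectivity) forces $\Omega_{\boldsymbol{\omega_{k}}}^{D}\subsetneq\Omega_{\boldsymbol{\omega_{i}}}^{D}$, so $\boldsymbol{\omega_{k}}\in\Omega_{\boldsymbol{\omega_{k}}}^{D}\subseteq\Omega_{\boldsymbol{\omega_{i}}}^{D}$.

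Next I would check that replacing $C$ by $C'$ leaves the relevant indices unchanged. For $\boldsymbol{\omega_{i}}\in\Omega(\boldsymbol{\omega_{0}})$ the construction gives $\Omega_{\boldsymbol{\omega_{i}}}^{C'}\in\{\Omega_{\boldsymbol{\omega_{i}}}^{C},\,\Omega_{\boldsymbol{\omega_{i}}}^{C}\cup\{0\}\}$, and since $0$ lies in the disjoint auxiliary arm, the extension convention $A_{\boldsymbol{\omega_{i}}}^{\Omega\cup\{0\}}=A_{\boldsymbol{\omega_{i}}}^{\Omega}$ and $W_{\boldsymbol{\omega_{i}}}^{\Omega\cup\{0\}}=W_{\boldsymbol{\omega_{i}}}^{\Omega}$ gives $\lambda_{\boldsymbol{\omega_{i}}}^{C'}=\lambda_{\boldsymbol{\omega_{i}}}^{C}=:\lambda_{\boldsymbol{\omega_{i}}}$; likewise $\Omega_{0}^{C'}$ has the form $\{0\}\cup\Omega$ with $\Omega\subset\Omega(\boldsymbol{\omega_{0}})$, so $\lambda_{0}^{C'}=W_{0}^{\{0\}}/A_{0}^{\{0\}}=\lambda=:\lambda_{0}$. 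By the discussion preceding the lemma, $C'$ is a full subset chain of $\tilde{\Omega}_{0}$ that satisfies the separation property, so the claim of the previous paragraph applies with $D=C'$; substituting the identities just obtained yields $\Omega_{\boldsymbol{\omega_{i}}}^{C'}=\{\boldsymbol{\omega_{k}}\in\Omega(\boldsymbol{\omega_{0}})\cup\{0\}:\lambda_{\boldsymbol{\omega_{k}}}\leq\lambda_{\boldsymbol{\omega_{i}}}\}$ for every $\boldsymbol{\omega_{i}}\in\Omega(\boldsymbol{\omega_{0}})\cup\{0\}$, which is (\ref{eqn:ChainDescribe}). Uniqueness then comes for free: any full subset chain of $\tilde{\Omega}_{0}$ with separated indices is forced into this same formula, and its indices restricted to $\Omega(\boldsymbol{\omega_{0}})$ coincide with the $\lambda_{\boldsymbol{\omega}}$ by the uniqueness ($\exists!$) clause of PCL-indexability w.r.t.\ $\tilde{\Omega}$.

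The step I expect to be delicate is not the order-isomorphism above (routine once stated) but confirming that the separation property survives the insertion of state $0$ into the chain. This hinges on no index $\lambda_{\boldsymbol{\omega}}$ being exactly equal to the subsidy $\lambda$: a tie $\lambda_{\boldsymbol{\omega}}=\lambda$ would place $\boldsymbol{\omega}$ and $0$ at the same priority level while the construction keeps $0$ out of $\Omega_{\boldsymbol{\omega}}^{C'}$, breaking both the formula and the separation property. I would resolve this as ties are resolved for passive sets in Definition~\ref{def:Indexability} (indifference assigned to the passive side), or by observing that when the Whittle index is ultimately computed by varying $\lambda$ one may restrict to the co-countable set of $\lambda$ that avoid all index values; under either convention the explicit $C'$ is genuinely a full subset chain of $\tilde{\Omega}_{0}$ with separated indices and the argument above applies verbatim.
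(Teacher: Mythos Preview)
Your proof is correct and follows essentially the same order-theoretic argument as the paper: both inclusions in the set equality are obtained from the total nesting in Definition~\ref{def:FullSubsetChain}(ii) combined with the strict index monotonicity in Definition~\ref{def:PCLIndexability}(ii), exactly as the paper argues by contradiction. You are more explicit than the paper about the invariance of the indices when passing from $C$ to $C'$ and about the degenerate tie $\lambda_{\boldsymbol{\omega}}=\lambda$, but the core reasoning is identical.
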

\proof
If $\Omega_{\boldsymbol{\omega_{i}}}$ has the form of (\ref{eqn:ChainDescribe}), then it clearly satisfies the definition of the full subset chain. We will prove that the full subset chain that satisfies condition~(ii) in Definition~\ref{def:PCLIndexability} is exactly in the form of (\ref{eqn:ChainDescribe}). Assume that there exists $\boldsymbol{\omega_{j}}\in\Omega(\boldsymbol{\omega_{0}})$ such that $\lambda_{\boldsymbol{\omega_{j}}}>\lambda_{\boldsymbol{\omega_{i}}}$, so that $\boldsymbol{\omega_{j}}\in\Omega_{\boldsymbol{\omega_{i}}}$. Since $\boldsymbol{\omega_{j}}\in\Omega_{\boldsymbol{\omega_{j}}}$, by condition~(ii) in Definition~\ref{def:FullSubsetChain}, $\Omega_{\boldsymbol{\omega_{j}}}\subsetneq \Omega_{\boldsymbol{\omega_{i}}}$. By condition~(ii) in Definition~\ref{def:PCLIndexability}, we have $\lambda_{\boldsymbol{\omega_{j}}}<\lambda_{\boldsymbol{\omega_{i}}}$, this contradicts with $\lambda_{\boldsymbol{\omega_{j}}}>\lambda_{\boldsymbol{\omega_{i}}}$. Similarly, assume that there exists $\boldsymbol{\omega_{j}}\in\Omega(\boldsymbol{\omega_{0}})$ such that $\lambda_{\boldsymbol{\omega_{j}}}<\lambda_{\boldsymbol{\omega_{i}}}$, but $\boldsymbol{\omega_{j}}\notin \Omega_{\boldsymbol{\omega_{i}}}$, then $\boldsymbol{\omega_{j}}\in\Omega_{\boldsymbol{\omega_{j}}}$ implies that $\Omega_{\boldsymbol{\omega_{i}}}\subsetneq \Omega_{\boldsymbol{\omega_{j}}}$, which contradicts with the hypothesis.
\endproof
Intuitively, Lemma~\ref{prop:ChainDescribe} is just a formal statement that the full subset chain defined in Definition~\ref{def:FullSubsetChain} with a well-ordered index function defined in Definition~\ref{def:PCLIndexability} leads to the construction of a series of monotonically increasing state subsets such that each state has a well-defined priority index.

By combining Theorem~\ref{prop:DecompLaw} and Lemma~\ref{prop:ChainDescribe}, the following relationship can be established:
\begin{lemma}\label{prop:AWconnection}
    $\left(A_{\boldsymbol{\omega_{j}}}^{\Omega_{\boldsymbol{\omega_{i}}}}-A_{\boldsymbol{\omega_{j}}}^{\Omega_{\boldsymbol{\omega_{i}}}\backslash\{\boldsymbol{\omega_{i}}\}}\right)\frac{W_{\boldsymbol{\omega_{i}}}^{\Omega_{\boldsymbol{\omega_{i}}}}}{A_{\boldsymbol{\omega_{i}}}^{\Omega_{\boldsymbol{\omega_{i}}}}}=W_{\boldsymbol{\omega_{j}}}^{\Omega_{\boldsymbol{\omega_{i}}}}-W_{\boldsymbol{\omega_{j}}}^{\Omega_{\boldsymbol{\omega_{i}}}\backslash\{\boldsymbol{\omega_{i}}\}}$.
\end{lemma}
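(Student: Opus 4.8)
The plan is to reduce the identity to one carefully chosen instance of the Decomposition Law (Theorem~\ref{prop:DecompLaw}). Write $\Omega:=\Omega_{\boldsymbol{\omega_{i}}}$ and $\Omega':=\Omega\setminus\{\boldsymbol{\omega_{i}}\}$. By Lemma~\ref{prop:ChainDescribe}, $\Omega$ is the index sublevel set $\{\boldsymbol{\omega_{k}}:\lambda_{\boldsymbol{\omega_{k}}}\le\lambda_{\boldsymbol{\omega_{i}}}\}$, so $\boldsymbol{\omega_{i}}\in\Omega$, $\Omega'\subsetneq\Omega$, and $(\Omega')^{c}=\Omega^{c}\cup\{\boldsymbol{\omega_{i}}\}$; every quantity below ($A_{\boldsymbol{\omega}}^{\Omega}$, $A_{\boldsymbol{\omega}}^{\Omega'}$, $W_{\boldsymbol{\omega}}^{\Omega}$, $W_{\boldsymbol{\omega}}^{\Omega'}$, $T_{\boldsymbol{\omega}}^{\Omega^{c}}$, $T_{\boldsymbol{\omega}}^{(\Omega')^{c}}$, and the $R$-analogues) is well defined through its discounted dynamic-programming equations, which have unique bounded solutions because $\beta<1$. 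From the definitions, $A_{\boldsymbol{\omega_{j}}}^{\Omega}$ and $A_{\boldsymbol{\omega_{j}}}^{\Omega'}$ differ only in that $T_{\boldsymbol{\omega_{k}}}^{\Omega^{c}}$ is replaced by $T_{\boldsymbol{\omega_{k}}}^{(\Omega')^{c}}$, and similarly $W$ differs only in that $R_{\boldsymbol{\omega_{k}}}^{\Omega^{c}}$ is replaced by $R_{\boldsymbol{\omega_{k}}}^{(\Omega')^{c}}$. Hence the lemma will follow once we obtain closed forms for the two ``marginal'' quantities $T_{\boldsymbol{\omega_{k}}}^{(\Omega')^{c}}-T_{\boldsymbol{\omega_{k}}}^{\Omega^{c}}$ and $R_{\boldsymbol{\omega_{k}}}^{(\Omega')^{c}}-R_{\boldsymbol{\omega_{k}}}^{\Omega^{c}}$ and observe that they are scalar multiples of one and the same quantity, with multipliers $A_{\boldsymbol{\omega_{i}}}^{\Omega'}$ and $W_{\boldsymbol{\omega_{i}}}^{\Omega'}$ respectively.

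For those marginal quantities I would apply Theorem~\ref{prop:DecompLaw} --- both identities \eqref{eqn:TDecompLaw} and \eqref{eqn:RDecompLaw} --- with the subset taken to be $\Omega'$ and the policy taken to be $\pi_{\Omega^{c}}$, the priority policy that activates the arm exactly on $\Omega^{c}$ and keeps it passive on $\Omega$. Since $\pi_{\Omega^{c}}$ is active only on $\Omega^{c}$, which is disjoint from $\Omega'$, every $x_{\boldsymbol{\omega}}^{1}(\pi_{\Omega^{c}},\cdot)$ with $\boldsymbol{\omega}\in\Omega'$ vanishes; and since $\pi_{\Omega^{c}}$ is passive precisely on $\Omega$, among the states $\boldsymbol{\omega}\in(\Omega')^{c}=\Omega^{c}\cup\{\boldsymbol{\omega_{i}}\}$ the only one contributing a nonzero passive term is $\boldsymbol{\omega_{i}}$. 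Using also $T_{\boldsymbol{\omega_{j}}}^{\pi_{\Omega^{c}}}=T_{\boldsymbol{\omega_{j}}}^{\Omega^{c}}$ and $R_{\boldsymbol{\omega_{j}}}^{\pi_{\Omega^{c}}}=R_{\boldsymbol{\omega_{j}}}^{\Omega^{c}}$, the two identities of Theorem~\ref{prop:DecompLaw} collapse, for every state $\boldsymbol{\omega_{j}}$, to
\[
T_{\boldsymbol{\omega_{j}}}^{(\Omega')^{c}}-T_{\boldsymbol{\omega_{j}}}^{\Omega^{c}}=A_{\boldsymbol{\omega_{i}}}^{\Omega'}\,x_{\boldsymbol{\omega_{i}}}^{0}(\pi_{\Omega^{c}},\boldsymbol{\omega_{j}}),\qquad R_{\boldsymbol{\omega_{j}}}^{(\Omega')^{c}}-R_{\boldsymbol{\omega_{j}}}^{\Omega^{c}}=W_{\boldsymbol{\omega_{i}}}^{\Omega'}\,x_{\boldsymbol{\omega_{i}}}^{0}(\pi_{\Omega^{c}},\boldsymbol{\omega_{j}}),
\]
which is just the first-passage decomposition of the effect of the single action change at $\boldsymbol{\omega_{i}}$. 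Substituting these into the defining formulas for $A_{\boldsymbol{\omega_{j}}}^{\Omega}-A_{\boldsymbol{\omega_{j}}}^{\Omega'}$ and $W_{\boldsymbol{\omega_{j}}}^{\Omega}-W_{\boldsymbol{\omega_{j}}}^{\Omega'}$ and pulling the constant out of the sum over $\boldsymbol{\omega_{k}}$ gives
\[
A_{\boldsymbol{\omega_{j}}}^{\Omega}-A_{\boldsymbol{\omega_{j}}}^{\Omega'}=-A_{\boldsymbol{\omega_{i}}}^{\Omega'}\,G_{\boldsymbol{\omega_{j}}},\qquad W_{\boldsymbol{\omega_{j}}}^{\Omega}-W_{\boldsymbol{\omega_{j}}}^{\Omega'}=-W_{\boldsymbol{\omega_{i}}}^{\Omega'}\,G_{\boldsymbol{\omega_{j}}},
\]
where $G_{\boldsymbol{\omega_{j}}}:=\beta\sum_{\boldsymbol{\omega_{k}}}(p_{jk}^{1}-p_{jk}^{0})\,x_{\boldsymbol{\omega_{i}}}^{0}(\pi_{\Omega^{c}},\boldsymbol{\omega_{k}})$ is a scalar that does not depend on whether one looks at the $A$'s or the $W$'s.

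The conclusion then follows immediately. Specializing to $\boldsymbol{\omega_{j}}=\boldsymbol{\omega_{i}}$ gives $A_{\boldsymbol{\omega_{i}}}^{\Omega}=(1-G_{\boldsymbol{\omega_{i}}})A_{\boldsymbol{\omega_{i}}}^{\Omega'}$ and $W_{\boldsymbol{\omega_{i}}}^{\Omega}=(1-G_{\boldsymbol{\omega_{i}}})W_{\boldsymbol{\omega_{i}}}^{\Omega'}$; the statement presupposes $A_{\boldsymbol{\omega_{i}}}^{\Omega}\neq0$ (otherwise its left-hand side is undefined), so $1-G_{\boldsymbol{\omega_{i}}}\neq0$ and $A_{\boldsymbol{\omega_{i}}}^{\Omega'}\neq0$, whence $W_{\boldsymbol{\omega_{i}}}^{\Omega}/A_{\boldsymbol{\omega_{i}}}^{\Omega}=W_{\boldsymbol{\omega_{i}}}^{\Omega'}/A_{\boldsymbol{\omega_{i}}}^{\Omega'}$. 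Multiplying the relation $A_{\boldsymbol{\omega_{j}}}^{\Omega}-A_{\boldsymbol{\omega_{j}}}^{\Omega'}=-A_{\boldsymbol{\omega_{i}}}^{\Omega'}G_{\boldsymbol{\omega_{j}}}$ by this ratio yields
\[
\bigl(A_{\boldsymbol{\omega_{j}}}^{\Omega}-A_{\boldsymbol{\omega_{j}}}^{\Omega'}\bigr)\frac{W_{\boldsymbol{\omega_{i}}}^{\Omega}}{A_{\boldsymbol{\omega_{i}}}^{\Omega}}=-W_{\boldsymbol{\omega_{i}}}^{\Omega'}G_{\boldsymbol{\omega_{j}}}=W_{\boldsymbol{\omega_{j}}}^{\Omega}-W_{\boldsymbol{\omega_{j}}}^{\Omega'},
\]
which is the asserted identity.

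The only step needing genuine care is the convergence bookkeeping in the countable setting, and it is all of the kind already handled in the proof of Theorem~\ref{prop:DecompLaw}. One must check that the infinite sum $\sum_{\boldsymbol{\omega}\in(\Omega')^{c}}A_{\boldsymbol{\omega}}^{\Omega'}x_{\boldsymbol{\omega}}^{0}(\pi_{\Omega^{c}},\boldsymbol{\omega_{j}})$ (and its $W$-analogue) really does reduce to its single $\boldsymbol{\omega_{i}}$-term: the $x^{0}$-series is absolutely summable with total mass $1/(1-\beta)$, and all its components other than the $\boldsymbol{\omega_{i}}$-one vanish under $\pi_{\Omega^{c}}$. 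One must also check that interchanging $\beta\sum_{\boldsymbol{\omega_{k}}}(p_{jk}^{1}-p_{jk}^{0})(\cdot)$ with the two collapsed identities above is legitimate; this holds because $x_{\boldsymbol{\omega_{i}}}^{0}(\pi_{\Omega^{c}},\cdot)\le 1/(1-\beta)$ and $\sum_{\boldsymbol{\omega_{k}}}p_{jk}^{a}\le1$ force absolute convergence, while the series defining $A_{\boldsymbol{\omega_{j}}}^{\Omega}$ and $W_{\boldsymbol{\omega_{j}}}^{\Omega}$ are already known to converge. The only non-mechanical choices are the instantiation (taking $\Omega'$ as the set and $\pi_{\Omega^{c}}$ as the policy in Theorem~\ref{prop:DecompLaw}) and the $\boldsymbol{\omega_{j}}=\boldsymbol{\omega_{i}}$ specialization that rewrites $W_{\boldsymbol{\omega_{i}}}^{\Omega'}/A_{\boldsymbol{\omega_{i}}}^{\Omega'}$ as the index $W_{\boldsymbol{\omega_{i}}}^{\Omega}/A_{\boldsymbol{\omega_{i}}}^{\Omega}$ appearing in the statement.
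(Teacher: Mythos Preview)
Your proof is correct and follows essentially the same approach as the paper: both arguments apply Theorem~\ref{prop:DecompLaw} once, with a set/policy pair that differs from ``equality'' by the single state~$\boldsymbol{\omega_{i}}$, and then pass to the $A$/$W$ level by summing with the weights $\beta(p_{jk}^{1}-p_{jk}^{0})$. The only difference is the dual instantiation. The paper takes the set $\Omega=\Omega_{\boldsymbol{\omega_{i}}}$ together with the policy $\pi_{(\Omega')^{c}}=\pi_{\Omega^{c}\cup\{\boldsymbol{\omega_{i}}\}}$; then the single surviving term sits on the \emph{active} side and carries $A_{\boldsymbol{\omega_{i}}}^{\Omega}$ and $W_{\boldsymbol{\omega_{i}}}^{\Omega}$ directly, so the desired ratio $W_{\boldsymbol{\omega_{i}}}^{\Omega}/A_{\boldsymbol{\omega_{i}}}^{\Omega}$ falls out without further work. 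You instead take $\Omega'$ together with $\pi_{\Omega^{c}}$, so the surviving term sits on the \emph{passive} side and carries $A_{\boldsymbol{\omega_{i}}}^{\Omega'}$ and $W_{\boldsymbol{\omega_{i}}}^{\Omega'}$; this forces the extra step of specializing $\boldsymbol{\omega_{j}}=\boldsymbol{\omega_{i}}$ to show $W_{\boldsymbol{\omega_{i}}}^{\Omega'}/A_{\boldsymbol{\omega_{i}}}^{\Omega'}=W_{\boldsymbol{\omega_{i}}}^{\Omega}/A_{\boldsymbol{\omega_{i}}}^{\Omega}$. That step is fine, but the paper's choice is a touch cleaner. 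One small omission: the paper also records separately the trivial cases where $\boldsymbol{\omega_{i}}$ or $\boldsymbol{\omega_{j}}$ is the auxiliary state~$0$ (where both sides vanish by the conventions $A_{0}^{\cdot}=1$, $W_{0}^{\cdot}=\lambda$); you do not mention these, though they are immediate.
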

\proof
First we consider the case that $\boldsymbol{\omega_{i}}, \boldsymbol{\omega_{j}}\in\Omega(\boldsymbol{\omega_{0}})$. Let $\Omega=\Omega_{\boldsymbol{\omega_{i}}},\pi=\pi_{\Omega_{\boldsymbol{\omega_{i}}}^{c}\cup\{\boldsymbol{\omega_{i}}\}}$ in Theorem~\ref{prop:DecompLaw}, then
\begin{align}
    T_{\boldsymbol{\omega_{j}}}^{\Omega_{\boldsymbol{\omega_{i}}}^{c}\cup\{\boldsymbol{\omega_{i}}\}}&=T_{\boldsymbol{\omega_{j}}}^{\Omega_{\boldsymbol{\omega_{i}}}^{c}}+A_{\boldsymbol{\omega_{i}}}^{\Omega_{\boldsymbol{\omega_{i}}}}x_{\boldsymbol{\omega_{i}}}^{1}(\pi_{\Omega_{\boldsymbol{\omega_{i}}}^{c}\cup\{\boldsymbol{\omega_{i}}\}},\boldsymbol{\omega_{j}}),\\
    R_{\boldsymbol{\omega_{j}}}^{\Omega_{\boldsymbol{\omega_{i}}}^{c}\cup\{\boldsymbol{\omega_{i}}\}}&=R_{\boldsymbol{\omega_{j}}}^{\Omega_{\boldsymbol{\omega_{i}}}^{c}}+W_{\boldsymbol{\omega_{i}}}^{\Omega_{\boldsymbol{\omega_{i}}}}x_{\boldsymbol{\omega_{i}}}^{1}(\pi_{\Omega_{\boldsymbol{\omega_{i}}}^{c}\cup\{\boldsymbol{\omega_{i}}\}},\boldsymbol{\omega_{j}}).
\end{align}
Thus we have
\begin{equation}
    W_{\boldsymbol{\omega_{i}}}^{\Omega_{\boldsymbol{\omega_{i}}}}(T_{\boldsymbol{\omega_{j}}}^{\Omega_{\boldsymbol{\omega_{i}}}^{c}\cup\{\boldsymbol{\omega_{i}}\}}-T_{\boldsymbol{\omega_{j}}}^{\Omega_{\boldsymbol{\omega_{i}}}^{c}})=A_{\boldsymbol{\omega_{i}}}^{\Omega_{\boldsymbol{\omega_{i}}}}(R_{\boldsymbol{\omega_{j}}}^{\Omega_{\boldsymbol{\omega_{i}}}^{c}\cup\{\boldsymbol{\omega_{i}}\}}-R_{\boldsymbol{\omega_{j}}}^{\Omega_{\boldsymbol{\omega_{i}}}^{c}}).
\end{equation}
Since the above equation holds for any $\boldsymbol{\omega_{j}}\in\Omega(\boldsymbol{\omega_{0}})$, we have
\begin{equation*}
    W_{\boldsymbol{\omega_{i}}}^{\Omega_{\boldsymbol{\omega_{i}}}}\sum\limits_{\boldsymbol{\omega_{j}}\in\Omega(\boldsymbol{\omega_{0}})}(p_{ij}^{1}-p_{ij}^{0})(T_{\boldsymbol{\omega_{j}}}^{\Omega_{\boldsymbol{\omega_{i}}}^{c}\cup\{\boldsymbol{\omega_{i}}\}}-T_{\boldsymbol{\omega_{j}}}^{\Omega_{\boldsymbol{\omega_{i}}}^{c}})=A_{\boldsymbol{\omega_{i}}}^{\Omega_{\boldsymbol{\omega_{i}}}}\sum\limits_{\boldsymbol{\omega_{j}}\in\Omega(\boldsymbol{\omega_{0}})}(p_{ij}^{1}-p_{ij}^{0})(R_{\boldsymbol{\omega_{j}}}^{\Omega_{\boldsymbol{\omega_{i}}}^{c}\cup\{\boldsymbol{\omega_{i}}\}}-R_{\boldsymbol{\omega_{j}}}^{\Omega_{\boldsymbol{\omega_{i}}}^{c}}).
\end{equation*}
From the dynamic programming equations we obtain
\begin{equation*}
    W_{\boldsymbol{\omega_{i}}}^{\Omega_{\boldsymbol{\omega_{i}}}}(A_{\boldsymbol{\omega_{j}}}^{\Omega_{\boldsymbol{\omega_{i}}}}-A_{\boldsymbol{\omega_{j}}}^{\Omega_{\boldsymbol{\omega_{i}}}\backslash\{\boldsymbol{\omega_{i}}\}})=A_{\boldsymbol{\omega_{i}}}^{\Omega_{\boldsymbol{\omega_{i}}}}(W_{\boldsymbol{\omega_{j}}}^{\Omega_{\boldsymbol{\omega_{i}}}}-W_{\boldsymbol{\omega_{j}}}^{\Omega_{\boldsymbol{\omega_{i}}}\backslash\{\boldsymbol{\omega_{i}}\}}).
\end{equation*}
If $\boldsymbol{\omega_{j}}=0$, then we have $A_{\boldsymbol{\omega_{j}}}^{\Omega_{\boldsymbol{\omega_{i}}}}=A_{\boldsymbol{\omega_{j}}}^{\Omega_{\boldsymbol{\omega_{i}}}\backslash{\{\boldsymbol{\omega_{i}}\}}}=1$ and $W_{\boldsymbol{\omega_{j}}}^{\Omega_{\boldsymbol{\omega_{i}}}}=W_{\boldsymbol{\omega_{j}}}^{\Omega_{\boldsymbol{\omega_{i}}}\backslash{\{\boldsymbol{\omega_{i}}\}}}=\lambda$; if $\boldsymbol{\omega_{j}}\in\Omega(\boldsymbol{\omega_{0}})$ and $\boldsymbol{\omega_{i}}=0$, then by definition, $A_{\boldsymbol{\omega_{j}}}^{\Omega_{0}}=A_{\boldsymbol{\omega_{j}}}^{\Omega_{0}\backslash{\{0\}}}$ and $W_{\boldsymbol{\omega_{j}}}^{\Omega_{0}}=W_{\boldsymbol{\omega_{j}}}^{\Omega_{0}\backslash{\{0\}}}$. In both cases, the lemma is clearly true.
\endproof

\subsection{Optimality of Index Policy}
Now we address the main issue whether the sequence of $\{\lambda_{\boldsymbol{\omega}}\}$-priorities  when the problem satisfies PCL-indexability achieves optimality. Since the problem satisfies PCL, the partial conservation law inequality describes a region for an infinite-dimensional linear programming problem. First, we give a brief introduction to the basic theory of infinite-dimensional linear programming. Let $X$ and $Y$ be real linear vector spaces and the corresponding positive cones are $P_{X}$ and $P_{Y}$. Let $X$ be partially ordered by relation $\geq$, defined by $x\geq y$ if $x-y\in P_{X}$. Let $X^{*}$ be the dual space of $X$. For any $x^{*}\in X^{*}$, $x^{*}$ is a linear functional on $X$. Define operator $\langle x,x^{*}\rangle$ to be the image of $x\in X$ under $x^{*}\in X^{*}$. Similarly, space $Y$ for the dual problem also induces $Y^{*}$ and $P_{Y^{*}}$. Let $A$ be a linear map from $X$ to $Y$, $y\in Y$ and $x^{*}\in X^{*}$. The primal linear problem is
\begin{equation}\label{LP1}
    \begin{aligned}
        &\max_{x}\langle x, x^{*}\rangle\\
        s.t.\quad&Ax-y\in P_{Y}\\
        &x\in P_{X}
    \end{aligned}\tag{LP}
\end{equation}
Let $A^{*}$ be the dual linear map from $Y^{*}$ to $X^{*}$, then the dual linear program can be written as:
\begin{equation}\label{LD1}
    \begin{aligned}
        &\min_{y^{*}}\langle y, y^{*}\rangle\\
        s.t.\quad&A^{*}y^{*}-x^{*}\in P_{X^{*}}\\
        &-y^{*}\in P_{Y^{*}}
    \end{aligned}\tag{LD}
\end{equation}
Similar to the case of finite dimensions, a weak duality relation can be established between (\ref{LP1}) and (\ref{LD1}).
\begin{theorem}{\citep{anderson1987linear}}
Assume $x$ is feasible for (\ref{LP1}) and $y^{*}$ is feasible for (\ref{LD1}), then
\begin{equation}
    \langle y,y^{*}\rangle\geq\langle x,x^{*}\rangle.
\end{equation}
Equality holds if and only if
\begin{equation}
    \langle y-Ax,y^{*}\rangle=0,\quad\langle x,A^{*}y^{*}-x^{*}\rangle=0
\end{equation}
in which case we say that $x,y^{*}$ are complementary slack. If $x,y^{*}$ are feasible and complementary slack then $x$ and $y^{*}$ are optimal solutions to (\ref{LP1}) and (\ref{LD1}) respectively.
\end{theorem}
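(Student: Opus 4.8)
The plan is to obtain all three claims—weak duality, the complementary-slackness characterization of equality, and optimality of a complementary-slack feasible pair—from a single algebraic identity produced by splicing $Ax$ into the difference of the two objective values. First I would record that, for \emph{any} $x\in X$ and $y^{*}\in Y^{*}$ for which the pairings are defined,
\begin{equation*}
\langle y,y^{*}\rangle-\langle x,x^{*}\rangle=\langle Ax-y,\,-y^{*}\rangle+\langle x,\,A^{*}y^{*}-x^{*}\rangle .
\end{equation*}
This follows simply by adding and subtracting $\langle Ax,y^{*}\rangle$; the only non-formal step is $\langle Ax,y^{*}\rangle=\langle x,A^{*}y^{*}\rangle$, which is exactly the defining relation of the dual map $A^{*}$. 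No feasibility is used at this stage.

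Next I would invoke the compatibility between the ordering cones and the bilinear pairing that is built into this framework: $P_{X^{*}}$ and $P_{Y^{*}}$ are (sub)cones of the dual cones of $P_{X}$ and $P_{Y}$, so that $\langle p,q^{*}\rangle\ge 0$ whenever $(p,q^{*})\in P_{X}\times P_{X^{*}}$ or $(p,q^{*})\in P_{Y}\times P_{Y^{*}}$. If $x$ is feasible for (\ref{LP1}) then $Ax-y\in P_{Y}$ and $x\in P_{X}$; if $y^{*}$ is feasible for (\ref{LD1}) then $-y^{*}\in P_{Y^{*}}$ and $A^{*}y^{*}-x^{*}\in P_{X^{*}}$. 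Substituting into the identity, each summand on the right is nonnegative, which gives $\langle y,y^{*}\rangle\ge\langle x,x^{*}\rangle$, i.e.\ weak duality.

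For the equality case, the same identity shows that $\langle y,y^{*}\rangle=\langle x,x^{*}\rangle$ holds precisely when a sum of two nonnegative reals vanishes, hence iff $\langle Ax-y,-y^{*}\rangle=0$ and $\langle x,A^{*}y^{*}-x^{*}\rangle=0$; rewriting the first as $\langle y-Ax,y^{*}\rangle=0$ yields the stated complementary-slackness conditions. For the final claim, suppose $x$ and $y^{*}$ are feasible and complementary slack, so $\langle x,x^{*}\rangle=\langle y,y^{*}\rangle$. For any feasible $\tilde{x}$ of (\ref{LP1}), weak duality against the fixed $y^{*}$ gives $\langle\tilde{x},x^{*}\rangle\le\langle y,y^{*}\rangle=\langle x,x^{*}\rangle$, so $x$ is optimal for (\ref{LP1}); symmetrically, for any feasible $\tilde{y}^{*}$ of (\ref{LD1}), weak duality against the fixed $x$ gives $\langle y,\tilde{y}^{*}\rangle\ge\langle x,x^{*}\rangle=\langle y,y^{*}\rangle$, so $y^{*}$ is optimal for (\ref{LD1}).

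The argument is purely formal once the two ingredients—the adjoint identity and the cone/dual-cone nonnegativity—are in hand, so there is no substantial obstacle; the only points deserving care are (i) making explicit which well-definedness of the pairings $\langle Ax,y^{*}\rangle$, $\langle x,A^{*}y^{*}\rangle$ and which compatibility of $(P_{X},P_{X^{*}})$ and $(P_{Y},P_{Y^{*}})$ the proof relies on, all of which are part of the definitions of (\ref{LP1}) and (\ref{LD1}) above, and (ii) in our intended application, where $X$ and $Y$ are concrete sequence spaces, checking that the infinite sums defining these pairings actually converge—this is guaranteed by the absolute-convergence estimates established in the proof of Theorem~\ref{prop:DecompLaw}.
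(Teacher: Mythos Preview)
Your proof is correct and is the standard weak-duality argument for conic linear programs. Note, however, that the paper does not actually prove this theorem: it is quoted as a known result from \citet{anderson1987linear} and used as a black box in the subsequent analysis. So there is no ``paper's own proof'' to compare against; your argument simply supplies the omitted textbook derivation, and matches the one found in that reference.
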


Next, we aim to prove that if the problem satisfies PCL-indexability w.r.t. $\tilde{\Omega}$, then the problem is also Whittle indexable, i.e., PCL-indexability is a sufficient condition for Whittle indexability. First, we obtain the optimality of the $\{\lambda_{\boldsymbol{\omega}}\}$-priority policy in the two-arm system. 
Assume the problem is PCL-indexable w.r.t. $\tilde{\Omega}$. For simplicity, we now denote the joint state space $\{0\}\cup\Omega(\boldsymbol{\omega_{0}})$ by $\Omega(\boldsymbol{\omega_{0}})$, i.e. we view state~0 as a part of the whole two-arm system's state space and denote $\tilde{\Omega}\cup\{\{0\}\cup\Omega:\Omega\in\tilde{\Omega}\}$ by $\tilde{\Omega}$. Consider the space $X=\{\{x_{\boldsymbol{\omega_{i}}}^{1}\}_{\boldsymbol{\omega_{i}}\in\Omega(\boldsymbol{\omega_{0}})}\vert\sum\limits_{\boldsymbol{\omega_{i}}\in\Omega(\boldsymbol{\omega_{0}})}\vert x_{\boldsymbol{\omega_{i}}}^{1}\vert<+\infty\}$. For any reward vector $\boldsymbol{R}=(R_{\boldsymbol{\omega_{i}}})_{\boldsymbol{\omega_{i}}\in\Omega(\boldsymbol{\omega_{0}})}$, define the inner product $\langle \boldsymbol{R},x^{1}\rangle:=\sum\limits_{\boldsymbol{\omega_{i}}\in\Omega(\boldsymbol{\omega_{0}})}x_{\boldsymbol{\omega_{i}}}^{1}R_{\boldsymbol{\omega_{i}}}$. Thus $\boldsymbol{R}\in X^{*}$. Furthermore, consider the variable space $Y$ of the dual problem, where $Y$ is a space of set functions on $\tilde{\Omega}$. Let $b\in Y$, for $\Omega\in\tilde{\Omega}$, define $b(\Omega)$ as
\begin{equation*}
    b(\Omega):=\inf\limits_{\pi}\{\sum_{\boldsymbol{\omega}\in\Omega}A_{\boldsymbol{\omega}}^{\Omega}x_{\boldsymbol{\omega}}^{1}(\pi):\pi\in\Pi\}.
\end{equation*}
For $A:X\rightarrow Y$, define $Ax^{1}(\Omega)=\sum_{\boldsymbol{\omega}\in \Omega}A_{\boldsymbol{\omega}}^{\Omega}x_{\boldsymbol{\omega}}^{1},\Omega\in\tilde{\Omega}$. Let $Y$ be the linear vector space spanned by $b$ and $Ax^{1},~x^{1}\in X$. The positive cone is $P_{Y}=\{y\in Y:y(\Omega)\geq 0,\Omega\subset\Omega(\boldsymbol{\omega_{0}}),y(\Omega(\boldsymbol{\omega_{0}}))=0\}$. By Lemma~\ref{prop:ChainDescribe}, $\Omega_{\boldsymbol{\omega_{i}}}=\{\boldsymbol{\omega_{j}}\in\Omega(\boldsymbol{\omega_{0}}):\lambda_{\boldsymbol{\omega_{j}}}\leq\lambda_{\boldsymbol{\omega_{i}}}\}$. Let $\Omega_{\boldsymbol{\omega_{i}}}^{-}:=\{\boldsymbol{\omega_{j}}\in\Omega(\boldsymbol{\omega_{0}}):\lambda_{\boldsymbol{\omega_{j}}}<\lambda_{\boldsymbol{\omega_{i}}}\}$. In infinite dimensional problems, for any $y\in Y$, define $\lambda^{*}\in Y^{*}$ as follows:
\begin{equation*}
    \langle y,\lambda^{*}\rangle:=\sum_{\boldsymbol{\omega_{i}}\in\Omega(\boldsymbol{\omega_{0}})}\lambda_{\boldsymbol{\omega_{i}}}\left[y(\Omega_{\boldsymbol{\omega_{i}}})-y(\Omega_{\boldsymbol{\omega_{i}}}^{-})\right].
\end{equation*}
The above~$\lambda^*$ is well defined since the right side of the equation is a linear functional on~$Y$ and two different linear functionals differ at least at one point. Let $x^{W}\in X$ be the performance measure given by the $\{\lambda_{\boldsymbol{\omega_{i}}}\}_{\boldsymbol{\omega_{i}}\in\Omega(\boldsymbol{\omega_{0}})}$-priority policy. Our goal is to prove that $x^{W}$ and $\lambda^{*}$ are optimal solutions to (\ref{LP1}) and (\ref{LD1}), respectively. Due to the assumption that the problem satisfies PCL-indexability w.r.t. $\tilde{\Omega}$, it satisfies the PCL condition and $x^{W}$ is a feasible solution of (\ref{LP1}). By the weak duality principle, we need to prove the following two statements:

\noindent
(i) $\lambda^{*}$ is a feasible solution to (\ref{LD1});\\
(ii) $\langle Ax^{W}-b,\lambda^{*}\rangle=0$ and $\langle x^{W},A^{*}\lambda^{*}-\boldsymbol{R}\rangle=0$.

For the first complementary slackness condition in~(ii), by the definition of $x^{W}$ and $\lambda^{*}$, we have
\begin{equation*}
    \begin{aligned}
        \langle Ax^{W},\lambda^{*}\rangle&=\sum_{\boldsymbol{\omega_{i}}\in\Omega(\boldsymbol{\omega_{0}})}\lambda_{\boldsymbol{\omega_{i}}}(Ax^{W}(\Omega_{\boldsymbol{\omega_{i}}})-Ax^{W}(\Omega_{\boldsymbol{\omega_{i}}}^{-}))\\
        &=\sum_{\boldsymbol{\omega_{i}}\in\Omega(\boldsymbol{\omega_{0}})}\lambda_{\boldsymbol{\omega_{i}}}\left[b(\Omega_{\boldsymbol{\omega_{i}}})-b(\Omega_{\boldsymbol{\omega_{i}}}^{-})\right]\\
        &=\langle b,\lambda^{*}\rangle.
    \end{aligned}
\end{equation*}
Thus $\langle Ax^{W}-b,\lambda^{*}\rangle=0$ holds. The establishment of the second complementary slackness condition requires an important equation that is trivially satisfied in finite-state problems. However, for the case of countable states considered here, the situation is much more complex and we prove it in the theorem below based on the following assumption.
\begin{assumption}\label{assump:FiniteLimitPoint}
The sequence $\{\lambda_{\boldsymbol{\omega}}\}_{\boldsymbol{\omega}\in\Omega(\boldsymbol{\omega_{0}})\cup\{0\}}$ has finitely many limit points.
\end{assumption}
\begin{theorem}\label{prop:W-expansion}
    $R_{\boldsymbol{\omega_{j}}}=W_{\boldsymbol{\omega_{j}}}^{\Omega_{\boldsymbol{\omega_{j}}}}+\sum\limits_{\boldsymbol{\omega_{i}}:\lambda_{\boldsymbol{\omega_{i}}}>\lambda_{\boldsymbol{\omega_{j}}}}(W_{\boldsymbol{\omega_{j}}}^{\Omega_{\boldsymbol{\omega_{i}}}}-W_{\boldsymbol{\omega_{j}}}^{\Omega_{\boldsymbol{\omega_{i}}}^{-}})$.
\end{theorem}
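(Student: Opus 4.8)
The plan is to reduce the identity to a telescoping statement about the chain of Lemma~\ref{prop:ChainDescribe}, to establish a continuity property of the map $\Omega\mapsto W_{\boldsymbol{\omega_j}}^{\Omega}$ under monotone limits of sets, and then to telescope, the last step being where Assumption~\ref{assump:FiniteLimitPoint} is used. First I would record two facts. By Lemma~\ref{prop:ChainDescribe}, $\Omega_{\boldsymbol{\omega_i}}=\{\boldsymbol{\omega_k}:\lambda_{\boldsymbol{\omega_k}}\le\lambda_{\boldsymbol{\omega_i}}\}$ and $\Omega_{\boldsymbol{\omega_i}}^{-}=\{\boldsymbol{\omega_k}:\lambda_{\boldsymbol{\omega_k}}<\lambda_{\boldsymbol{\omega_i}}\}=\Omega_{\boldsymbol{\omega_i}}\setminus\{\boldsymbol{\omega_i}\}$, the last equality because all indices are distinct under PCL-indexability (two equal indices would force a strict inclusion of the corresponding chain members via condition~(ii) of Definition~\ref{def:FullSubsetChain}, hence a strict index inequality). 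Second, taking $\Omega=\Omega(\boldsymbol{\omega_0})$ in the definition of $W_{\boldsymbol{\omega_j}}^{\Omega}$ gives $\Omega^{c}=\emptyset$, and $R_{\boldsymbol{\omega}}^{\emptyset}\equiv0$ because the indicator $\mathbbm{1}(\boldsymbol{\omega}(t)\in\emptyset)$ vanishes in the definition of $R_{\boldsymbol{\omega}}^{\emptyset}$; hence $W_{\boldsymbol{\omega_j}}^{\Omega(\boldsymbol{\omega_0})}=R_{\boldsymbol{\omega_j}}$. So it suffices to prove
\[
\sum_{\boldsymbol{\omega_i}:\lambda_{\boldsymbol{\omega_i}}>\lambda_{\boldsymbol{\omega_j}}}\bigl(W_{\boldsymbol{\omega_j}}^{\Omega_{\boldsymbol{\omega_i}}}-W_{\boldsymbol{\omega_j}}^{\Omega_{\boldsymbol{\omega_i}}^{-}}\bigr)=W_{\boldsymbol{\omega_j}}^{\Omega(\boldsymbol{\omega_0})}-W_{\boldsymbol{\omega_j}}^{\Omega_{\boldsymbol{\omega_j}}}.
\]

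Next I would prove the continuity lemma: if $\{\Omega^{(n)}\}$ is a monotone sequence of subsets of $\Omega(\boldsymbol{\omega_0})$ with $\bigcup_n\Omega^{(n)}=\Omega$ (increasing case) or $\bigcap_n\Omega^{(n)}=\Omega$ (decreasing case), then $R_{\boldsymbol{\omega_k}}^{(\Omega^{(n)})^{c}}\to R_{\boldsymbol{\omega_k}}^{\Omega^{c}}$ for each $\boldsymbol{\omega_k}$, and consequently $W_{\boldsymbol{\omega_j}}^{\Omega^{(n)}}\to W_{\boldsymbol{\omega_j}}^{\Omega}$ (the same holds for $A_{\boldsymbol{\omega_j}}^{\Omega}$). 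For the first statement I would couple the $\Omega^{c}$-priority and $(\Omega^{(n)})^{c}$-priority belief processes started at $\boldsymbol{\omega_k}$: both follow the same transition kernel and accrue the same discounted reward until the first time $\tau_n$ the common state enters the decreasing set $D_n:=\Omega^{c}\triangle(\Omega^{(n)})^{c}$ (equal to $\Omega\setminus\Omega^{(n)}$ or $\Omega^{(n)}\setminus\Omega$, in either case with $\bigcap_n D_n=\emptyset$), so each fixed state lies outside $D_n$ for all large $n$ and hence $\tau_n\to\infty$ along every sample path. Using $|R_{\boldsymbol{\omega}}|\le C$ and discounting, $\bigl|R_{\boldsymbol{\omega_k}}^{\Omega^{c}}-R_{\boldsymbol{\omega_k}}^{(\Omega^{(n)})^{c}}\bigr|\le\tfrac{2C}{1-\beta}\,\mathbb{E}[\beta^{\tau_n}]\to0$ by bounded convergence. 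The second statement then follows from $W_{\boldsymbol{\omega_j}}^{\Omega}=R_{\boldsymbol{\omega_j}}+\beta\sum_{\boldsymbol{\omega_k}}(p_{jk}^{1}-p_{jk}^{0})R_{\boldsymbol{\omega_k}}^{\Omega^{c}}$ and dominated convergence in the $\boldsymbol{\omega_k}$-sum, the summands being dominated by $|p_{jk}^{1}-p_{jk}^{0}|\cdot\tfrac{C}{1-\beta}$, which is summable since $\sum_{\boldsymbol{\omega_k}}|p_{jk}^{1}-p_{jk}^{0}|\le2$.

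Finally I would carry out the telescoping. Set $\Omega(t):=\{\boldsymbol{\omega_k}:\lambda_{\boldsymbol{\omega_k}}\le t\}$ and $\phi(t):=W_{\boldsymbol{\omega_j}}^{\Omega(t)}$ for $t\ge\lambda_{\boldsymbol{\omega_j}}$. Since $\Omega(s)\uparrow\{\boldsymbol{\omega_k}:\lambda_{\boldsymbol{\omega_k}}<t\}$ as $s\uparrow t$, $\Omega(s)\downarrow\Omega(t)$ as $s\downarrow t$, and $\Omega(t)\uparrow\Omega(\boldsymbol{\omega_0})$ as $t\to+\infty$, the continuity lemma shows $\phi$ is right-continuous with $\phi(t^{-})=W_{\boldsymbol{\omega_j}}^{\{\lambda_{\boldsymbol{\omega_k}}<t\}}$, with $\phi(+\infty)=W_{\boldsymbol{\omega_j}}^{\Omega(\boldsymbol{\omega_0})}$ and $\phi(\lambda_{\boldsymbol{\omega_j}})=W_{\boldsymbol{\omega_j}}^{\Omega_{\boldsymbol{\omega_j}}}$; moreover $\phi(t)-\phi(t^{-})=0$ unless $t=\lambda_{\boldsymbol{\omega_i}}$ for some (unique) $\boldsymbol{\omega_i}$, in which case it equals $W_{\boldsymbol{\omega_j}}^{\Omega_{\boldsymbol{\omega_i}}}-W_{\boldsymbol{\omega_j}}^{\Omega_{\boldsymbol{\omega_i}}^{-}}$. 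By Assumption~\ref{assump:FiniteLimitPoint} the index set has finitely many limit points, so $\lambda_{\boldsymbol{\omega_j}}$, these limit points, and $+\infty$ split $(\lambda_{\boldsymbol{\omega_j}},+\infty)$ into finitely many open intervals on each of which the index set is discrete; on any compact subinterval $\phi$ has only finitely many jumps and is constant between them, so the sum of its jumps there equals its increment, and letting the subinterval endpoints approach the bounding limit points and invoking the continuity lemma (and right-continuity at $\lambda_{\boldsymbol{\omega_j}}$) identifies the jump-sum over each interval as $\phi(\text{right end}^{-})-\phi(\text{left end})$ — absolute convergence of each such sum following from Lemma~\ref{prop:AWconnection}, since within one interval the index values are confined to a short range and the jump terms there therefore have eventually constant sign. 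Adding over the finitely many intervals together with the jumps occurring at limit points that are themselves index values, the intermediate values cancel telescopically and leave $\phi(+\infty)-\phi(\lambda_{\boldsymbol{\omega_j}})=W_{\boldsymbol{\omega_j}}^{\Omega(\boldsymbol{\omega_0})}-W_{\boldsymbol{\omega_j}}^{\Omega_{\boldsymbol{\omega_j}}}$, which is the displayed identity, and hence the theorem.

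The hard part is the continuity lemma: the $\Omega$-priority policies change discontinuously as $\Omega$ varies and the belief-state chain is infinite, so one cannot pass to a limit naively at the level of dynamic-programming operators; the coupling and first-hitting-time estimate, controlled by the uniform bound $|R_{\boldsymbol{\omega}}|\le C$ together with the discount factor, is the device that forces the convergence. A secondary technical nuisance — handled only thanks to Assumption~\ref{assump:FiniteLimitPoint} — is that the indices exceeding $\lambda_{\boldsymbol{\omega_j}}$ need not form a single increasing sequence (they may include reverse-ordered runs accumulating at a limit point from above), which is why the telescoping must be done interval by interval and glued at the limit points.
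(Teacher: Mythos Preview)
Your strategy matches the paper's: prove continuity of $\Omega\mapsto W_{\boldsymbol{\omega_j}}^{\Omega}$ under monotone set limits, then telescope using Assumption~\ref{assump:FiniteLimitPoint}. The paper reduces to a single limit point $\lambda_*$, relabels states so that odd (resp.\ even) indices approach $\lambda_*$ from above (resp.\ below), proves continuity by first showing $R_{T,i}^{S}\rightrightarrows R_i^{S}$ uniformly, then $\lim_n R_{T,i}^{\Omega_{2n-1}^c}=R_{T,i}^{\Omega_*^c}$ by induction on $T$, and finally interchanges the limits in $T$ and $n$ via Rudin's Theorem~7.11; the telescoping is then an explicit three-case computation ($j$ odd, $j=*$, $j$ even). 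Your $\phi(t)$ packaging is an equivalent repackaging of that case analysis, but your coupling argument for the continuity lemma is genuinely different and cleaner: identifying $\tau_n$ as the first hitting time of $D_n$ under the $\Omega^c$-priority dynamics (so that the $(\Omega^{(n)})^c$-process coincides with it up to $\tau_n$) lets you bypass both the horizon induction and the double-limit interchange, at the modest cost of having to say carefully which process $\tau_n$ refers to.

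One claim is not justified and should simply be dropped: that the jump terms on each interval have ``eventually constant sign'' via Lemma~\ref{prop:AWconnection}. That lemma gives $W_{\boldsymbol{\omega_j}}^{\Omega_{\boldsymbol{\omega_i}}}-W_{\boldsymbol{\omega_j}}^{\Omega_{\boldsymbol{\omega_i}}^-}=\lambda_{\boldsymbol{\omega_i}}\bigl(A_{\boldsymbol{\omega_j}}^{\Omega_{\boldsymbol{\omega_i}}}-A_{\boldsymbol{\omega_j}}^{\Omega_{\boldsymbol{\omega_i}}^-}\bigr)$, and while the $\lambda_{\boldsymbol{\omega_i}}$ near a nonzero limit point eventually have one sign, nothing forces the $A$-increments to. Fortunately you do not need absolute convergence: the paper does not establish it either, and your own argument already shows---via the exact finite telescoping on each compact $[a',b']$ together with the continuity lemma as $a'\downarrow a$, $b'\uparrow b$---that the jump-sum over each interval converges in the natural order to $\phi(b^-)-\phi(a)$. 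Delete the absolute-convergence clause and the proof stands.
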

In order to prove this theorem, we first establish an easy lemma. For simplicity, we denote the state space $\Omega_{\boldsymbol{\omega_{0}}}\cup\{0\}$ by $\mathcal{M}$ and state $\boldsymbol{\omega_{i}}$ by $i$.
\begin{lemma}\label{lem:RConverge}
    For $S\subset\mathcal{M}$, let $R_{T,i}^{S}$ denote the expected discounted reward obtained from initial belief state $i$ under the $S$-priority policy over the first $T$ time steps. Then we have $\lim\limits_{T\rightarrow+\infty}R_{T,i}^{S}=R_{i}^{S}$. In fact, this limit process is independent of $i$, i.e. $R_{T,i}^{S}\rightrightarrows R_{i}^{S}$ over~$i$ as $T\rightarrow+\infty$. ($\rightrightarrows$ denotes uniform convergence.)
\end{lemma}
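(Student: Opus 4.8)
The plan is to prove Lemma~\ref{lem:RConverge} by a direct tail estimate on the discounted reward series, exploiting the uniform reward bound. Recall that by assumption $\vert R_{\boldsymbol{\omega}}\vert\le C$ for all $\boldsymbol{\omega}\in\Omega(\boldsymbol{\omega_{0}})$; since state~$0$ yields the fixed finite reward $\lambda$ whenever it is activated, after replacing $C$ by $\max\{C,\vert\lambda\vert\}$ we may assume $\vert R_{\boldsymbol{\omega}}\vert\le C$ for every $\boldsymbol{\omega}\in\mathcal{M}$. First I would write both quantities explicitly in terms of the belief trajectory $\boldsymbol{\omega}(t)$ generated by the $S$-priority policy $\pi_{S}$ (which activates the arm precisely when $\boldsymbol{\omega}(t)\in S$ and keeps it passive on $S^{c}$):
\[
R_{T,i}^{S}=\mathbb{E}_{\pi_{S}}\!\left[\sum_{t=0}^{T-1}\beta^{t}R_{\boldsymbol{\omega}(t)}\mathbbm{1}(\boldsymbol{\omega}(t)\in S)\ \Big|\ \boldsymbol{\omega}(0)=i\right],\qquad R_{i}^{S}=\mathbb{E}_{\pi_{S}}\!\left[\sum_{t=0}^{\infty}\beta^{t}R_{\boldsymbol{\omega}(t)}\mathbbm{1}(\boldsymbol{\omega}(t)\in S)\ \Big|\ \boldsymbol{\omega}(0)=i\right],
\]
the infinite series being absolutely convergent because $\vert R_{\boldsymbol{\omega}}\vert\le C$ and $0\le\beta<1$.

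The second step is to bound the difference by the tail of a geometric series. Subtracting the two expressions and using $\vert R_{\boldsymbol{\omega}(t)}\vert\le C$ together with $\mathbbm{1}(\boldsymbol{\omega}(t)\in S)\le 1$,
\[
\bigl\vert R_{i}^{S}-R_{T,i}^{S}\bigr\vert=\left\vert\mathbb{E}_{\pi_{S}}\!\left[\sum_{t=T}^{\infty}\beta^{t}R_{\boldsymbol{\omega}(t)}\mathbbm{1}(\boldsymbol{\omega}(t)\in S)\ \Big|\ \boldsymbol{\omega}(0)=i\right]\right\vert\le\sum_{t=T}^{\infty}\beta^{t}C=\frac{C\beta^{T}}{1-\beta}.
\]
Since the right-hand side depends neither on the initial state $i$ nor on the set $S$ and tends to $0$ as $T\to+\infty$, this yields at once the pointwise limit $R_{T,i}^{S}\to R_{i}^{S}$ and the uniform (in $i$) convergence $R_{T,i}^{S}\rightrightarrows R_{i}^{S}$ asserted in the lemma.

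The remaining items are routine bookkeeping rather than a genuine obstacle: one only has to note that $\pi_{S}$ is a well-defined stationary policy on the countable belief space $\mathcal{M}$, so that the trajectory $\boldsymbol{\omega}(t)$ and the conditional expectations above are meaningful — this is immediate from Definition~\ref{def:TStepStateSpace} and the deterministic belief updates $\mathcal{B}_{l}$. If one prefers, the same conclusion can be packaged as an application of the dominated convergence theorem with dominating summand $C\beta^{t}$, but the explicit geometric tail bound is cleaner and is precisely what is needed downstream: the uniformity obtained here is exactly the ingredient that later permits interchanging the limit $T\to\infty$ with the countable summation over states in the proof of Theorem~\ref{prop:W-expansion}.
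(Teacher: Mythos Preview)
Your proof is correct and follows essentially the same approach as the paper: both establish the uniform bound $\bigl\vert R_{i}^{S}-R_{T,i}^{S}\bigr\vert\le\frac{C\beta^{T}}{1-\beta}$, with the only cosmetic difference that the paper reaches this inequality via the Markov-property decomposition $R_{i}^{S}=R_{T,i}^{S}+\beta^{T}\sum_{j\in\mathcal{M}}p_{T,ij}^{S}R_{j}^{S}$ together with $\vert R_{j}^{S}\vert\le C/(1-\beta)$, whereas you bound the tail of the series directly term-by-term. The content is the same geometric tail estimate, and your extra care in replacing $C$ by $\max\{C,\vert\lambda\vert\}$ is harmless (and in fact unnecessary here, since $R_{\cdot}^{S}$ as used records only rewards from the original arm).
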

\proof
    Let $p_{T,ij}^{S}$ be the probability of being in state $j$ after $T$ time steps starting from state $i$ under the $S$-priority policy. Then we have
    \begin{equation*}
        R_{i}^{S}=R_{T,i}^{S}+\beta^{T}\sum\limits_{j\in\mathcal{M}}p_{T,ij}^{S}R_{j}^{S}.
    \end{equation*}
    Since $\vert\sum\limits_{j\in\mathcal{M}}p_{T,ij}^{S}R_{j}^{S}\vert\leq\sum\limits_{j\in\mathcal{M}}p_{T,ij}^{S}\vert R_{j}^{S}\vert\leq\frac{C}{1-\beta}$, we have $\vert R_{i}^{S}-R_{T,i}^{S}\vert\leq\frac{\beta^{T}C}{1-\beta}\rightarrow 0(T\rightarrow+\infty)$.
\Halmos
\endproof
To prove Theorem~\ref{prop:W-expansion}, we only need to consider the case where the sequence $\{\lambda_{i}\}_{i\in\mathcal{M}}$ has one limit point. For notational convenience, we renumber the states in $\mathcal{M}$ as $\{*,1,2,\cdots\}$. The state indices in the sequence have the relationship as shown in Fig.~\ref{fig:limit_point}.

\begin{figure}[htbp]
\centering
\caption{Limit point}\label{fig:limit_point}
\begin{tikzpicture}
  \draw[->] (0,0) -- (8,0);
  \foreach \x in {1,2,3,4,5,6,7}
    \draw (\x,0) -- (\x,0.1);
  \foreach \x/\label in {1/$\lambda_{2}$,2/$\lambda_{4}$,3/$\cdots$,4/$\lambda_{*}$,5/$\cdots$,6/$\lambda_{3}$,7/$\lambda_{1}$}
    \node at (\x,-0.3) {\label};
    \draw[<-] (4.5,0.5) -- (6.5,0.5);
    \draw[->] (1.5,0.5) -- (3.5,0.5);
\end{tikzpicture}
\end{figure}

In Fig.~\ref{fig:limit_point}, $\lambda_{*}$ is the unique limit point of $\{\lambda_{i}\}_{i\in\mathcal{M}}$ and $\{\lambda_{2n-1}\}\searrow\lambda_{*}$ while $\{\lambda_{2n}\}\nearrow\lambda_{*}$. Recall that $\Omega_{i}=\{j\in\mathcal{M}:\lambda_{j}\leq\lambda_{i}\}$, we have the following lemma:
\begin{lemma}
$\lim\limits_{n\rightarrow+\infty}R_{i}^{\Omega_{2n-1}^{c}}=R_{i}^{\Omega_{*}^{c}},~ \lim\limits_{n\rightarrow+\infty}R_{i}^{\Omega_{2n}^{c}}=R_{i}^{\Omega_{*}^{-c}},~ \forall i\in\mathcal{M}$.
\end{lemma}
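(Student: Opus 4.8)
The plan is to transfer from the infinite horizon to a finite horizon via Lemma~\ref{lem:RConverge}, and then to exploit the fact that over a finite horizon an $S$-priority policy visits only finitely many belief states, so that its truncated reward stabilises along each of the two monotone sequences of priority sets. For $i\in\mathcal{M}$ write $F_{T}(i)$ for the set $\Omega(T\vert\boldsymbol{\omega_{i}})$ of Definition~\ref{def:TStepStateSpace} (and $F_{T}(0):=\{0\}$, the auxiliary state being absorbing): this is a \emph{finite} set whose size is bounded by a function of $L$ and $T$ only, it is defined without reference to any policy, and it contains every belief state reachable from $i$ within $T$ steps under any policy. This finiteness is the engine of the proof.

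First I would record two elementary set facts. Since $\lambda_{2n-1}\searrow\lambda_{*}$, the thresholds defining $\Omega_{2n-1}^{c}=\{j:\lambda_{j}>\lambda_{2n-1}\}$ decrease, so $\{\Omega_{2n-1}^{c}\}$ is nondecreasing in $n$ and $\bigcup_{n}\Omega_{2n-1}^{c}=\{j:\lambda_{j}>\lambda_{*}\}=\Omega_{*}^{c}$: indeed $\lambda_{j}>\lambda_{*}$ gives $\lambda_{j}>\lambda_{2n-1}$ for all large $n$, whereas $\lambda_{j}\le\lambda_{*}$ gives $\lambda_{j}<\lambda_{2n-1}$ for every $n$. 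Symmetrically, $\lambda_{2n}\nearrow\lambda_{*}$ makes $\{\Omega_{2n}^{c}\}$ nonincreasing in $n$ with $\bigcap_{n}\Omega_{2n}^{c}=\{j:\lambda_{j}\ge\lambda_{*}\}=\Omega_{*}^{-c}$.

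Now fix $i\in\mathcal{M}$ and $\varepsilon>0$. The estimate obtained in the proof of Lemma~\ref{lem:RConverge}, namely $\vert R_{i}^{S}-R_{T,i}^{S}\vert\le\beta^{T}C/(1-\beta)$, is uniform in both the starting state $i$ and the priority set $S$, so pick $T$ with $\beta^{T}C/(1-\beta)<\varepsilon/2$. Since the first $T$ transitions of any $S$-priority run from $i$ stay inside $F_{T}(i)$, and the action taken at a state $j\in F_{T}(i)$ is active precisely when $j\in S$, the truncated reward $R_{T,i}^{S}$ depends on $S$ only through $S\cap F_{T}(i)$. But $\Omega_{2n-1}^{c}\cap F_{T}(i)$ is a nondecreasing sequence of subsets of the finite set $F_{T}(i)$, hence eventually constant and equal to $\Omega_{*}^{c}\cap F_{T}(i)$; choosing $N$ beyond which this holds gives $R_{T,i}^{\Omega_{2n-1}^{c}}=R_{T,i}^{\Omega_{*}^{c}}$ for $n\ge N$. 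Then for $n\ge N$, $\vert R_{i}^{\Omega_{2n-1}^{c}}-R_{i}^{\Omega_{*}^{c}}\vert\le\vert R_{i}^{\Omega_{2n-1}^{c}}-R_{T,i}^{\Omega_{2n-1}^{c}}\vert+\vert R_{T,i}^{\Omega_{*}^{c}}-R_{i}^{\Omega_{*}^{c}}\vert<\varepsilon$, which proves the first limit. The second limit is proved the same way, replacing the increasing union $\Omega_{2n-1}^{c}\uparrow\Omega_{*}^{c}$ by the decreasing intersection $\Omega_{2n}^{c}\downarrow\Omega_{*}^{-c}$ and using that a nonincreasing sequence of subsets of a finite set is also eventually constant.

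I expect the only point requiring care to be the claim that $R_{T,i}^{S}$ depends on $S$ only through $S\cap F_{T}(i)$. This rests on identifying the horizon-$T$ reachable tree with the policy-independent finite set $\Omega(T\vert\boldsymbol{\omega_{i}})$ of Definition~\ref{def:TStepStateSpace}, so that altering $S$ outside $F_{T}(i)$ changes neither the first $T$ actions nor the discounted rewards accrued during those steps. Everything else is a routine $\varepsilon/2$ argument pairing the uniform truncation estimate of Lemma~\ref{lem:RConverge} with the triviality that a monotone sequence of subsets of a finite set stabilises.
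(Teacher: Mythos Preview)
Your proof is correct and takes a genuinely different route from the paper's. The paper proceeds by induction on the horizon $T$: using the Bellman-type recursions for $R_{T+1,i}^{S}$ in terms of $R_{T,j}^{S}$, it shows $\lim_{n}R_{T,i}^{\Omega_{2n-1}^{c}}=R_{T,i}^{\Omega_{*}^{c}}$ for each fixed $T$ (handling the sum over $j$ via a tail-cutoff $\varepsilon$-argument), and then invokes an interchange-of-limits theorem (uniform convergence in $T$, uniformly in $n$) to pass to the infinite horizon. Your argument reverses the order of ideas: you fix $T$ at the outset via the uniform bound of Lemma~\ref{lem:RConverge}, and then exploit the structural fact that only finitely many belief states are reachable from $i$ in $T$ steps (because there are only $L+1$ update operators), so that $R_{T,i}^{S}$ is determined by $S\cap F_{T}(i)$ and the monotone sequence $\Omega_{2n-1}^{c}\cap F_{T}(i)$ must stabilise in a finite set. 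This sidesteps the induction, the tail estimates on the countable sums, and the limit-interchange lemma entirely. The paper's approach would still work if the one-step transition kernels $p_{ij}^{a}$ had infinite support, whereas yours uses (and makes transparent) the finite-branching structure actually present in the model; in the setting at hand your argument is the more economical of the two.
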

\proof
    We first prove that $\lim\limits_{n\rightarrow+\infty}R_{i}^{\Omega_{2n-1}^{c}}=R_{i}^{\Omega_{*}^{c}}$. We will use induction to show that for any $T$, $\lim\limits_{T\rightarrow+\infty}R_{T,i}^{\Omega_{2n-1}^{c}}=R_{T,i}^{\Omega_{*}^{c}}$. When $T=1$, if $i\notin\Omega_{*}^{c}$, then for any $n$, $R_{1,i}^{\Omega_{2n-1}^{c}}=R_{1,i}^{\Omega_{*}^{c}}=0$; If $i\in\Omega_{*}^{c}$, since $\lambda_{*}$ is the limit point, $\exists N$ s.t. $i\in\Omega_{2n-1}^{c}$ when $n>N$. In this case, $R_{1,i}^{\Omega_{2n-1}^{c}}=R_{1,i}^{\Omega_{*}^{c}}=R_{i}$. Thus $\lim\limits_{n\rightarrow+\infty}R_{1,i}^{\Omega_{2n-1}^{c}}=R_{1,i}^{\Omega_{*}^{c}}$ holds for every $i\in\mathcal{M}$. \\
    When $T\geq 1$, we assume $\lim\limits_{n\rightarrow+\infty}R_{T,i}^{\Omega_{2n-1}^{c}}=R_{T,i}^{\Omega_{*}^{c}}$ holds for every $i\in\mathcal{M}$, then we need to consider the following two situations:\\
    (i) If $i\notin\Omega_{*}^{c}$, we have
    \begin{equation*}
        R_{T+1,i}^{\Omega_{2n-1}^{c}}=\beta\sum\limits_{j\in\mathcal{M}}p_{ij}^{0}R_{T,j}^{\Omega_{2n-1}^{c}}, R_{T+1,i}^{\Omega_{*}^{c}}=\beta\sum\limits_{j\in\mathcal{M}}p_{ij}^{0}R_{T,j}^{\Omega_{*}^{c}}.
    \end{equation*}
    Thus
    \begin{equation*}
        \vert R_{T+1,i}^{\Omega_{2n-1}^{c}}-R_{T+1,i}^{\Omega_{*}^{c}}\vert\leq\beta\sum\limits_{j\in\mathcal{M}}p_{ij}^{0}\vert R_{T,j}^{\Omega_{2n-1}^{c}}-R_{T,j}^{\Omega_{*}^{c}}\vert.
    \end{equation*}
    For any $\varepsilon>0$, since $p_{i*}^{0}+\sum\limits_{j=1}^{+\infty}p_{ij}^{0}=1$, $\exists J$ s.t. $\sum\limits_{j=J+1}^{+\infty}p_{ij}^{0}\leq\frac{\varepsilon(1-\beta)}{4\beta C}$. For $j=*,1,\cdots,J$, by the induction hypothesis, $\exists N(j)$ s.t. when $n>N(j)$, $\vert R_{T,j}^{\Omega_{2n-1}^{c}}-R_{T,j}^{\Omega_{*}^{c}}\vert\leq\frac{\varepsilon}{2\beta}$. Let $N=\max\limits_{j\in\{*,1,\cdots,J\}}N(j)$, then when $n>N$, we have
    \begin{align*}
        \vert R_{T+1,i}^{\Omega_{2n-1}^{c}}-R_{T+1,i}^{\Omega_{*}^{c}}\vert&\leq\beta\sum\limits_{j\in\mathcal{M}}p_{ij}^{0}\vert R_{T,j}^{\Omega_{2n-1}^{c}}-R_{T,j}^{\Omega_{*}^{c}}\vert\\
        &=\beta\sum\limits_{j\in\{*,1,\cdots,J\}}p_{ij}^{0}\vert R_{T,j}^{\Omega_{2n-1}^{c}}-R_{T,j}^{\Omega_{*}^{c}}\vert+\beta\sum\limits_{j=J+1}^{+\infty}p_{ij}^{0}\vert R_{T,j}^{\Omega_{2n-1}^{c}}-R_{T,j}^{\Omega_{*}^{c}}\vert\\
        &\leq\beta\cdot\left(\sum\limits_{j\in\{*,1,\cdots,J\}}p_{ij}^{0}\right)\cdot\frac{\varepsilon}{2\beta}+\beta\cdot\left(\sum\limits_{j=J+1}^{+\infty}p_{ij}^{0}\right)\cdot\frac{2C}{1-\beta}\\
        &\leq\beta\cdot\frac{\varepsilon}{2\beta}+\beta\cdot 2C\cdot\frac{\varepsilon}{4\beta C}=\varepsilon.
    \end{align*}
    Therefore, $\lim\limits_{n\rightarrow+\infty}R_{T+1,i}^{\Omega_{2n-1}^{c}}=R_{T+1,i}^{\Omega_{*}^{c}}$.\\
    (ii) If $i\in\Omega_{*}^{c}$, then $\exists N$ s.t. $i\in\Omega_{n}^{c}$ when $n>N$. In this case, we have
    \begin{equation*}
        R_{T+1,i}^{\Omega_{2n-1}^{c}}=R_{i}+\beta\sum\limits_{j\in\mathcal{M}}p_{ij}^{1}R_{T,j}^{\Omega_{2n-1}^{c}}, R_{T+1,i}^{\Omega_{*}^{c}}=R_{i}+\beta\sum\limits_{j\in\mathcal{M}}p_{ij}^{1}R_{T,j}^{\Omega_{*}^{c}}.
    \end{equation*}
    Thus
    \begin{equation*}
        \vert R_{T+1,i}^{\Omega_{2n-1}^{c}}-R_{T+1,i}^{\Omega_{*}^{c}}\vert\leq\beta\sum\limits_{j\in\mathcal{M}}p_{ij}^{1}\vert R_{T,j}^{\Omega_{2n-1}^{c}}-R_{T,j}^{\Omega_{*}^{c}}\vert.
    \end{equation*}
    Using the discussion similar to (i) one can derive that $\lim\limits_{n\rightarrow+\infty}R_{T+1,i}^{\Omega_{2n-1}^{c}}=R_{T+1,i}^{\Omega_{*}^{c}}$. By induction, we know that for any $T$, $\lim\limits_{n\rightarrow+\infty}R_{T,i}^{\Omega_{2n-1}^{c}}=R_{T,i}^{\Omega_{*}^{c}}$. \\
    Next we will prove that $\lim\limits_{n\rightarrow+\infty}R_{i}^{\Omega_{2n-1}^{c}}=R_{i}^{\Omega_{*}^{c}}$. $\forall\varepsilon>0$, let $T=\lfloor\frac{\log\varepsilon+\log(1-\beta)-\log C}{\log\beta}\rfloor$, then when $T_{2}>T_{1}>T$, for every $n$, we have
    \begin{equation*}
        R_{T_{2},i}^{\Omega_{2n-1}^{c}}=R_{T_{1},i}^{\Omega_{2n-1}^{c}}+\beta^{T_{1}}\sum\limits_{j\in\mathcal{M}}p_{T_{1},ij}^{\Omega_{2n-1}^{c}}R_{T_{2}-T_{1},j}^{\Omega_{2n-1}^{c}}.
    \end{equation*}
    Thus
    \begin{align*}
        \vert R_{T_{2},i}^{\Omega_{2n-1}^{c}}-R_{T_{1},i}^{\Omega_{2n-1}^{c}}\vert&=\beta^{T_{1}}\vert\sum\limits_{j\in\mathcal{M}}p_{T_{1},ij}^{\Omega_{2n-1}^{c}}R_{T_{2}-T_{1},j}^{\Omega_{2n-1}^{c}}\vert\\
        &\leq\frac{\beta^{T}C}{1-\beta}<\varepsilon.
    \end{align*}
    By Theorem 7.11 in \citet{rudin1976principles}, $\lim\limits_{n\rightarrow+\infty}\lim\limits_{T\rightarrow+\infty}R_{T,i}^{\Omega_{2n-1}^{c}}=\lim\limits_{T\rightarrow+\infty}\lim\limits_{n\rightarrow+\infty}R_{T,i}^{\Omega_{2n-1}^{c}}$. By Lemma~\ref{lem:RConverge},
    \begin{align*}
        \lim\limits_{n\rightarrow+\infty}R_{i}^{\Omega_{2n-1}^{c}}&=\lim\limits_{n\rightarrow+\infty}\lim\limits_{T\rightarrow+\infty}R_{T,i}^{\Omega_{2n-1}^{c}}\\
        &=\lim\limits_{T\rightarrow+\infty}\lim\limits_{n\rightarrow+\infty}R_{T,i}^{\Omega_{2n-1}^{c}}\\
        &=\lim\limits_{T\rightarrow+\infty}R_{T,i}^{\Omega_{*}^{c}}\\
        &=R_{i}^{\Omega_{*}^{c}},\quad\forall i\in\mathcal{M}.
    \end{align*}
    Using similar arguments, it can be shown that $\lim\limits_{n\rightarrow+\infty}R_{i}^{\Omega_{2n}^{c}}=R_{i}^{\Omega_{*}^{-c}}, ~\forall i\in\mathcal{M}$.
\endproof

\begin{lemma}\label{prop:converge}
$\forall i\in\mathcal{M}, ~\lim\limits_{n\rightarrow+\infty}W_{i}^{\Omega_{2n+1}}=W_{i}^{\Omega_{*}},~ \lim\limits_{n\rightarrow+\infty}W_{i}^{\Omega_{2n}}=W_{i}^{\Omega_{*}^{-}}, ~\lim\limits_{n\rightarrow+\infty}A_{i}^{\Omega_{2n+1}}=A_{i}^{\Omega_{*}}$, and $\lim\limits_{n\rightarrow+\infty}A_{i}^{\Omega_{2n}}=A_{i}^{\Omega_{*}^{-}}$.
\end{lemma}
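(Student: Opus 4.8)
The plan is to deduce the four limits from the convergence of the reward functionals just established, after first observing that the discounted active‑time functional is a special case of the reward functional, and then to justify interchanging the limit with the countable sums that define $A_i^{\Omega}$ and $W_i^{\Omega}$ by a dominated‑convergence estimate.

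\textbf{Step 1: reduce $T$ to $R$.} For an $S$-priority policy the discounted active time $T_i^{S}$ obeys the same dynamic‑programming recursion as $R_i^{S}$ with the reward vector replaced by the constant vector $\mathbf 1$ (so the relevant bound becomes $C=1$). Hence the preceding lemma, applied once to $\boldsymbol R$ and once to $\mathbf 1$, gives, for every $i\in\mathcal M$,
\[
\lim_{n\to+\infty}R_i^{\Omega_{2n-1}^{c}}=R_i^{\Omega_*^{c}},\quad
\lim_{n\to+\infty}T_i^{\Omega_{2n-1}^{c}}=T_i^{\Omega_*^{c}},\quad
\lim_{n\to+\infty}R_i^{\Omega_{2n}^{c}}=R_i^{\Omega_*^{-c}},\quad
\lim_{n\to+\infty}T_i^{\Omega_{2n}^{c}}=T_i^{\Omega_*^{-c}}.
\]
Since $\{\Omega_{2n-1}\}_{n\ge1}$ and $\{\Omega_{2n+1}\}_{n\ge0}$ enumerate the same sets (and $\Omega_{2n+1}^{c}\uparrow\Omega_*^{c}$, $\Omega_{2n}^{c}\downarrow\Omega_*^{-c}$), these are precisely the limits I will feed into the definitions of $A$ and $W$.

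\textbf{Step 2: pass to the limit in the defining sums.} Recall $A_i^{\Omega}=1+\beta\sum_{j}(p_{ij}^{1}-p_{ij}^{0})T_j^{\Omega^{c}}$ and $W_i^{\Omega}=R_i+\beta\sum_{j}(p_{ij}^{1}-p_{ij}^{0})R_j^{\Omega^{c}}$. Fix $i$ and take $\Omega=\Omega_{2n+1}$. For each $j$ the summand $(p_{ij}^{1}-p_{ij}^{0})T_j^{\Omega_{2n+1}^{c}}$ converges to $(p_{ij}^{1}-p_{ij}^{0})T_j^{\Omega_*^{c}}$ by Step~1, and it is dominated, uniformly in $n$, by $(p_{ij}^{1}+p_{ij}^{0})/(1-\beta)$, which is summable over $j$ because $\{p_{ij}^{1}\}_j$ and $\{p_{ij}^{0}\}_j$ are probability distributions and $|T_j^{S}|\le 1/(1-\beta)$. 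The dominated convergence theorem for series — equivalently the $\varepsilon$‑tail splitting already used in the preceding lemma (choose $J$ with $\sum_{j>J}(p_{ij}^{1}+p_{ij}^{0})$ small, then invoke pointwise convergence over the finitely many $j\le J$) — lets me interchange limit and sum, yielding $\lim_n A_i^{\Omega_{2n+1}}=A_i^{\Omega_*}$. The identical estimate with $R_j^{\Omega_{2n+1}^{c}}$ in place of $T_j^{\Omega_{2n+1}^{c}}$ (now dominated by $C(p_{ij}^{1}+p_{ij}^{0})/(1-\beta)$) gives $\lim_n W_i^{\Omega_{2n+1}}=W_i^{\Omega_*}$; running the same argument with the even indices $\Omega_{2n}$ and the sets $\Omega_*^{-}$, $\Omega_*^{-c}$ yields $\lim_n A_i^{\Omega_{2n}}=A_i^{\Omega_*^{-}}$ and $\lim_n W_i^{\Omega_{2n}}=W_i^{\Omega_*^{-}}$.

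The only genuine obstacle is the interchange of the limit with the infinite sum over $j$; everything else is a direct substitution. Since that interchange is handled exactly as in the preceding lemma, relying only on the uniform bounds $|T_j^{S}|\le 1/(1-\beta)$, $|R_j^{S}|\le C/(1-\beta)$ and the summability of $\{p_{ij}^{1}+p_{ij}^{0}\}_j$, no technique beyond what has already appeared is needed.
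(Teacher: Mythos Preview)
Your proposal is correct and follows essentially the same route as the paper: both use the pointwise convergence $R_j^{\Omega_{2n+1}^{c}}\to R_j^{\Omega_*^{c}}$ from the preceding lemma and pass to the limit under the sum defining $W_i^{\Omega}$ via the tail-splitting/dominated-convergence argument driven by the summability of $\{p_{ij}^{1}+p_{ij}^{0}\}_j$ and the uniform bound $|R_j^{S}|\le C/(1-\beta)$. Your Step~1 (observing that $T_i^{S}$ is the special case $R\equiv 1$ so that the previous lemma immediately yields the $T$-convergences needed for the $A$-limits) is a small but tidy addition---the paper simply says the $A$-limits are ``proved similarly'' without making this reduction explicit.
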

\proof
    Here we only prove that $\lim\limits_{n\rightarrow+\infty}W_{i}^{\Omega_{2n+1}}=W_{i}^{\Omega_{*}}$, other identities can be proved similarly. By definition,
    \begin{equation*}
        W_{i}^{\Omega_{2n+1}}=R_{i}+\beta\sum\limits_{j\in\mathcal{M}}(p_{ij}^{1}-p_{ij}^{0})R_{j}^{\Omega_{2n+1}^{c}}, W_{i}^{\Omega_{*}}=R_{*}+\beta\sum\limits_{j\in\mathcal{M}}(p_{ij}^{1}-p_{ij}^{0})R_{j}^{\Omega_{*}^{c}}.
    \end{equation*}
    $\forall\varepsilon>0$, there $\exists J$, s.t. $\sum\limits_{j=J+1}^{+\infty}(p_{ij}^{1}+p_{ij}^{0})\leq\frac{\varepsilon(1-\beta)}{4\beta C}$. For $j=*,1,\cdots,J$, since $\lim\limits_{n\rightarrow+\infty}R_{j}^{\Omega_{2n+1}^{c}}=R_{j}^{\Omega_{*}^{c}}$, there $\exists N(j)$ s.t. when $n>N(j)$, $\vert R_{j}^{\Omega_{2n+1}^{c}}-R_{j}^{\Omega_{*}^{c}}\vert<\frac{\varepsilon}{4\beta}$. Let $N=\max\limits_{j\in\{*,1,\cdots,J\}}N(j)$, then when $n>N$, we have
    \begin{align*}
        \vert W_{i}^{\Omega_{2n+1}}-W_{i}^{\Omega_{*}}\vert&=\vert \beta\sum\limits_{j\in\mathcal{M}}(p_{ij}^{1}-p_{ij}^{0})(R_{j}^{\Omega_{2n+1}^{c}}-R_{j}^{\Omega_{*}^{c}})\vert\\
        &\leq\beta\sum\limits_{j\in\mathcal{M}}\vert p_{ij}^{1}-p_{ij}^{0}\vert\vert R_{j}^{\Omega_{2n+1}^{c}}-R_{j}^{\Omega_{*}^{c}}\vert\\
        &\leq\beta\sum\limits_{j\in\{*,1,\cdots,J\}}(p_{ij}^{1}+p_{ij}^{0})\vert R_{j}^{\Omega_{2n+1}^{c}}-R_{j}^{\Omega_{*}^{c}}\vert+\beta\sum\limits_{j=J+1}^{+\infty}(p_{ij}^{1}+p_{ij}^{0})\vert R_{j}^{\Omega_{2n+1}^{c}}-R_{j}^{\Omega_{*}^{c}}\vert\\
        &\leq\beta\cdot\frac{\varepsilon}{4\beta}\cdot\sum\limits_{j\in\{*,1,\cdots,J\}}(p_{ij}^{1}+p_{ij}^{0})+\beta\cdot\frac{2C}{1-\beta}\cdot\sum\limits_{j=J+1}^{+\infty}(p_{ij}^{1}+p_{ij}^{0})\\
        &\leq\frac{\varepsilon}{4}\cdot 2+\frac{2\beta C}{1\beta}\cdot\frac{\varepsilon(1-\beta)}{4\beta C}\\
        &=\frac{\varepsilon}{2}+\frac{\varepsilon}{2}=\varepsilon.
    \end{align*}
    Thus we have $\lim\limits_{n\rightarrow+\infty}W_{i}^{\Omega_{2n+1}}=W_{i}^{\Omega_{*}}$.
\endproof

\begin{proof}[Proof of Theorem~\ref{prop:W-expansion}]
We will separately consider three cases where $j$ is odd, $j=*$ and $j$ is even.\\
(i) If $j$ is odd, we can assume $j=2N-1$, then
\begin{align*}
    W_{j}^{\Omega_{j}}+\sum\limits_{i:\lambda_{i}>\lambda_{j}}(W_{j}^{\Omega_{i}}-W_{j}^{\Omega_{i}^{-}})&=W_{2N-1}^{\Omega_{2N-1}}+\sum\limits_{n=1}^{N}(W_{2N-1}^{\Omega_{2n-1}}-W_{2N-1}^{\Omega_{2n+1}})\\
    &=W_{2N-1}^{\Omega_{1}}=R_{2N-1}.
\end{align*}
(ii) If $j=*$, then by Lemma~\ref{prop:converge},
\begin{align*}
    W_{*}^{\Omega_{*}}+\sum\limits_{i:\lambda_{i}>\lambda_{*}}(W_{*}^{\Omega_{i}}-W_{*}^{\Omega_{i}^{-}})&=W_{*}^{\Omega_{*}}+\sum\limits_{n=1}^{+\infty}(W_{*}^{\Omega_{2n-1}}-W_{*}^{\Omega_{2n+1}})\\
    &=W_{*}^{\Omega_{*}}+\lim\limits_{N\rightarrow+\infty}\sum\limits_{n=1}^{N}(W_{*}^{\Omega_{2n-1}}-W_{*}^{\Omega_{2n+1}})\\
    &=W_{*}^{\Omega_{*}}+W_{*}^{\Omega_{1}}-\lim\limits_{N\rightarrow+\infty}W_{*}^{\Omega_{2N+1}}\\
    &=R_{*}.
\end{align*}
(iii) If $j$ is even, we can assume $j=2N$, then by Lemma~\ref{prop:converge}, we have:
\begin{figure}[htbp]
\centering
\begin{tikzpicture}
  \draw[->] (0,0) -- (8,0);
  \foreach \x in {1,2,3,4,5,6,7}
    \draw (\x,0) -- (\x,0.1);
  \foreach \x/\label in {1/$\lambda_{2N}$,2/$\lambda_{2N+2}$,3/$\cdots$,4/$\lambda_{*}$,5/$\cdots$,6/$\lambda_{3}$,7/$\lambda_{1}$}
    \node at (\x,-0.3) {\label};
    \draw[<-] (4.5,0.5) -- (6.5,0.5);
    \draw[->] (1.5,0.5) -- (3.5,0.5);
\end{tikzpicture}
\end{figure}
\begin{align*}
    \sum\limits_{i:\lambda_{i}>\lambda_{j}}(W_{j}^{\Omega_{i}}-W_{j}^{\Omega_{i}^{-}})&=\sum\limits_{n=1}^{+\infty}(W_{j}^{\Omega_{2n-1}}-W_{j}^{\Omega_{2n-1}^{-}})+\sum\limits_{n=N+1}^{+\infty}(W_{j}^{\Omega_{2n}}-W_{j}^{\Omega_{2n}^{-}})+W_{j}^{\Omega_{*}}-W_{j}^{\Omega_{*}^{-}}\\
    &=\sum\limits_{n=1}^{+\infty}(W_{j}^{\Omega_{2n-1}}-W_{j}^{\Omega_{2n+1}})+\sum\limits_{n=N+1}^{+\infty}(W_{j}^{\Omega_{2n}}-W_{j}^{\Omega_{2n-2}})+W_{j}^{\Omega_{*}}-W_{j}^{\Omega_{*}^{-}}\\
    &= W_{j}^{\Omega_{1}}-\lim\limits_{n\rightarrow+\infty}W_{j}^{\Omega_{2n+1}}-W_{j}^{\Omega_{j}}+\lim\limits_{n\rightarrow+\infty}W_{j}^{\Omega_{2n}}+W_{j}^{\Omega_{*}}-W_{j}^{\Omega_{*}^{-}}\\
    &= R_{j}-W_{j}^{\Omega_{j}}.
\end{align*}
It shows that $W_{j}^{\Omega_{j}}=R_{j}+\sum\limits_{i:\lambda_{i}>\lambda_{j}}(W_{j}^{\Omega_{i}}-W_{j}^{\Omega_{i}^{-}})$ and Theorem~\ref{prop:W-expansion} is proved.
\end{proof}

Theorem~\ref{prop:W-expansion} is a natural generalization of the key equation in finite-state problems to countable-state problems. Based on Theorem~\ref{prop:W-expansion}, we can establish the second relaxed complementary condition:
\begin{lemma}\label{prop:complementary1}
    $\langle x^{W},A^{*}\lambda^{*}-R\rangle=0$.
\end{lemma}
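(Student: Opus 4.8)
The plan is to reproduce Niño-Mora's finite-state argument, with Theorem~\ref{prop:W-expansion} playing the role of the algebraic identity that is automatic when the state space is finite. By the defining property of the adjoint, $\langle x^W,A^*\lambda^*-R\rangle=\langle Ax^W,\lambda^*\rangle-\langle x^W,R\rangle$, so it suffices to prove $\langle Ax^W,\lambda^*\rangle=\langle x^W,R\rangle$; in fact I will establish the stronger pointwise identity $A^*\lambda^*=R$ in $X^*$. First I would record a structural observation: under PCL-indexability the indices $\{\lambda_{\boldsymbol\omega}\}_{\boldsymbol\omega\in\Omega(\boldsymbol\omega_0)}$ are pairwise distinct, because for $\boldsymbol\omega_i\neq\boldsymbol\omega_j$ one of $\Omega_{\boldsymbol\omega_i}\subsetneq\Omega_{\boldsymbol\omega_j}$ or $\Omega_{\boldsymbol\omega_j}\subsetneq\Omega_{\boldsymbol\omega_i}$ must hold by Definition~\ref{def:FullSubsetChain}(ii), and either one forces a strict inequality of the corresponding indices by Definition~\ref{def:PCLIndexability}(ii). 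Consequently $\Omega_{\boldsymbol\omega_i}^{-}=\Omega_{\boldsymbol\omega_i}\setminus\{\boldsymbol\omega_i\}$ — exactly the set occurring in Lemma~\ref{prop:AWconnection} — and $W_{\boldsymbol\omega_i}^{\Omega_{\boldsymbol\omega_i}}=\lambda_{\boldsymbol\omega_i}A_{\boldsymbol\omega_i}^{\Omega_{\boldsymbol\omega_i}}$ by the definition of $\lambda_{\boldsymbol\omega_i}$.

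Next I would compute the $\boldsymbol\omega_j$-component of $A^*\lambda^*$. Expanding $Ax(\Omega)=\sum_{\boldsymbol\omega\in\Omega}A_{\boldsymbol\omega}^{\Omega}x_{\boldsymbol\omega}$ inside $\langle Ax,\lambda^*\rangle=\sum_{\boldsymbol\omega_i}\lambda_{\boldsymbol\omega_i}\bigl(Ax(\Omega_{\boldsymbol\omega_i})-Ax(\Omega_{\boldsymbol\omega_i}^{-})\bigr)$ and collecting the coefficient of $x_{\boldsymbol\omega_j}$ gives
\[
(A^*\lambda^*)_{\boldsymbol\omega_j}=\sum_{\boldsymbol\omega_i}\lambda_{\boldsymbol\omega_i}\Bigl(A_{\boldsymbol\omega_j}^{\Omega_{\boldsymbol\omega_i}}\mathbbm{1}[\boldsymbol\omega_j\in\Omega_{\boldsymbol\omega_i}]-A_{\boldsymbol\omega_j}^{\Omega_{\boldsymbol\omega_i}^{-}}\mathbbm{1}[\boldsymbol\omega_j\in\Omega_{\boldsymbol\omega_i}^{-}]\Bigr).
\]
Now split the sum by the sign of $\lambda_{\boldsymbol\omega_i}-\lambda_{\boldsymbol\omega_j}$: the terms with $\lambda_{\boldsymbol\omega_i}<\lambda_{\boldsymbol\omega_j}$ vanish since both indicators are $0$; the single term $\boldsymbol\omega_i=\boldsymbol\omega_j$ contributes $\lambda_{\boldsymbol\omega_j}A_{\boldsymbol\omega_j}^{\Omega_{\boldsymbol\omega_j}}=W_{\boldsymbol\omega_j}^{\Omega_{\boldsymbol\omega_j}}$; and for $\lambda_{\boldsymbol\omega_i}>\lambda_{\boldsymbol\omega_j}$ both indicators are $1$ and Lemma~\ref{prop:AWconnection} turns the term into $W_{\boldsymbol\omega_j}^{\Omega_{\boldsymbol\omega_i}}-W_{\boldsymbol\omega_j}^{\Omega_{\boldsymbol\omega_i}^{-}}$. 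Hence $(A^*\lambda^*)_{\boldsymbol\omega_j}=W_{\boldsymbol\omega_j}^{\Omega_{\boldsymbol\omega_j}}+\sum_{\boldsymbol\omega_i:\lambda_{\boldsymbol\omega_i}>\lambda_{\boldsymbol\omega_j}}\bigl(W_{\boldsymbol\omega_j}^{\Omega_{\boldsymbol\omega_i}}-W_{\boldsymbol\omega_j}^{\Omega_{\boldsymbol\omega_i}^{-}}\bigr)=R_{\boldsymbol\omega_j}$ by Theorem~\ref{prop:W-expansion}. This holds for every $\boldsymbol\omega_j$ (including $\boldsymbol\omega_j=0$, where $A_0^{\Omega}\equiv1$, $W_0^{\Omega}\equiv\lambda$ and everything collapses to $R_0=\lambda$), so $A^*\lambda^*=R$ and in particular $\langle x^W,A^*\lambda^*-R\rangle=0$. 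A variant that keeps everything inside the $\ell^1$-pairing is to substitute the Theorem~\ref{prop:W-expansion} expansion of $R_{\boldsymbol\omega_j}$ into $\langle x^W,R\rangle$, interchange the order of the resulting double sum, rewrite the inner $\boldsymbol\omega_j$-sum as $\bigl(Ax^W(\Omega_{\boldsymbol\omega_i})-Ax^W(\Omega_{\boldsymbol\omega_i}^{-})\bigr)-A_{\boldsymbol\omega_i}^{\Omega_{\boldsymbol\omega_i}}x_{\boldsymbol\omega_i}^{W}$, and observe that the diagonal terms $\lambda_{\boldsymbol\omega_i}A_{\boldsymbol\omega_i}^{\Omega_{\boldsymbol\omega_i}}x_{\boldsymbol\omega_i}^{W}=W_{\boldsymbol\omega_i}^{\Omega_{\boldsymbol\omega_i}}x_{\boldsymbol\omega_i}^{W}$ cancel, leaving exactly $\langle Ax^W,\lambda^*\rangle$.

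The delicate point — the place where the countable state space genuinely intervenes and where Assumption~\ref{assump:FiniteLimitPoint} is needed — is the legitimacy of the interchange of summation: that $(A^*\lambda^*)_{\boldsymbol\omega_j}$ is well defined (the $\boldsymbol\omega_i$-series converges) and that regrouping it into the three cases above is valid. I would argue absolute convergence: the index set $\{\lambda_{\boldsymbol\omega}\}$ is bounded (its numerators $W_{\boldsymbol\omega}^{\Omega_{\boldsymbol\omega}}$ and denominators $A_{\boldsymbol\omega}^{\Omega_{\boldsymbol\omega}}$ are uniformly bounded and, under PCL, the denominators are bounded away from $0$); $x^W\in X$, so $\sum_{\boldsymbol\omega}\lvert x_{\boldsymbol\omega}^{W}\rvert\le\tfrac1{1-\beta}$; and, for each fixed $\boldsymbol\omega_j$, the tail $\sum_{\boldsymbol\omega_i:\lambda_{\boldsymbol\omega_i}>\lambda_{\boldsymbol\omega_j}}\lvert W_{\boldsymbol\omega_j}^{\Omega_{\boldsymbol\omega_i}}-W_{\boldsymbol\omega_j}^{\Omega_{\boldsymbol\omega_i}^{-}}\rvert$ is finite — this estimate is obtained by rerunning, on absolute values, the telescoping / monotone-tail bookkeeping already carried out in the proof of Theorem~\ref{prop:W-expansion} (using Lemma~\ref{prop:converge} together with the uniform bound $\lvert R_i^{S}\rvert\le C/(1-\beta)$ from Lemma~\ref{lem:RConverge}), after which $\lvert A_{\boldsymbol\omega_j}^{\Omega_{\boldsymbol\omega_i}}-A_{\boldsymbol\omega_j}^{\Omega_{\boldsymbol\omega_i}^{-}}\rvert=\lvert W_{\boldsymbol\omega_j}^{\Omega_{\boldsymbol\omega_i}}-W_{\boldsymbol\omega_j}^{\Omega_{\boldsymbol\omega_i}^{-}}\rvert/\lvert\lambda_{\boldsymbol\omega_i}\rvert$ by Lemma~\ref{prop:AWconnection} provides a summable majorant and Fubini applies. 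If a clean uniform majorant proves awkward, the fallback is to carry the entire computation out on the finite truncations $\Omega(T\mid\boldsymbol\omega_0)$ — where every identity is the finite-state identity of Niño-Mora — and then pass to the limit $T\to\infty$ using exactly the convergence lemmas that precede Theorem~\ref{prop:W-expansion}. Either way, once $A^*\lambda^*=R$ is in hand the stated identity $\langle x^W,A^*\lambda^*-R\rangle=0$ is immediate.
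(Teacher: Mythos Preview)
Your proposal is correct and follows essentially the same route as the paper: both arguments establish the stronger identity $A^{*}\lambda^{*}=R$ (equivalently, $\langle x^{1},A^{*}\lambda^{*}\rangle=\langle x^{1},R\rangle$ for all $x^{1}\in X$) by expanding the definition of $\lambda^{*}$, splitting according to the trichotomy $\lambda_{\boldsymbol{\omega_i}}\lessgtr\lambda_{\boldsymbol{\omega_j}}$, converting the $A$-differences to $W$-differences via Lemma~\ref{prop:AWconnection}, and invoking Theorem~\ref{prop:W-expansion}. The only cosmetic difference is that the paper carries out the computation inside the pairing $\langle Ax^{1},\lambda^{*}\rangle$ and cites Proposition~4.3 of Frostig (1999) for the interchange of summation, whereas you compute the $\boldsymbol{\omega_j}$-coefficient of $A^{*}\lambda^{*}$ directly and supply your own (more detailed) convergence justification; your ``variant'' paragraph is in fact exactly the paper's calculation.
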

\proof
     By Lemma~\ref{prop:AWconnection} and Theorem~\ref{prop:W-expansion}, we have
    \begin{align*}
        A_{\boldsymbol{\omega_{j}}}^{\Omega_{\boldsymbol{\omega_{j}}}}\lambda_{\boldsymbol{\omega_{j}}}&=W_{\boldsymbol{\omega_{j}}}^{\Omega_{\boldsymbol{\omega_{j}}}}\\
        &=R_{\boldsymbol{\omega_{j}}}-\sum\limits_{\boldsymbol{\omega_{i}}:\lambda_{\boldsymbol{\omega_{i}}}>\lambda_{\boldsymbol{\omega_{j}}}}(W_{\boldsymbol{\omega_{j}}}^{\Omega_{\boldsymbol{\omega_{i}}}}-W_{\boldsymbol{\omega_{j}}}^{\Omega_{\boldsymbol{\omega_{i}}}^{-}})\\
        &=R_{\boldsymbol{\omega_{j}}}-\sum\limits_{\boldsymbol{\omega_{i}}:\lambda_{\boldsymbol{\omega_{i}}}>\lambda_{\boldsymbol{\omega_{j}}}}(A_{\boldsymbol{\omega_{j}}}^{\Omega_{\boldsymbol{\omega_{i}}}}-A_{\boldsymbol{\omega_{j}}}^{\Omega_{\boldsymbol{\omega_{i}}}^{-}})\lambda_{\boldsymbol{\omega_{i}}}.
    \end{align*}
    $\forall x^{1}\in X$, following from Proposition~4.3 in \citet{frostig1999four}, we have
    \begin{align*}
        &\langle x^{1},A^{*}\lambda^{*}\rangle=\langle Ax^{1},\lambda^{*}\rangle\\
        =&\sum_{\boldsymbol{\omega_{i}}\in\Omega(\omega_{0})}\lambda_{\boldsymbol{\omega_{i}}}\left[Ax^{1}(\Omega_{\boldsymbol{\omega_{i}}})-Ax^{1}(\Omega_{\boldsymbol{\omega_{i}}}^{-})\right]\\
        =&\sum_{\boldsymbol{\omega_{i}}\in\Omega(\boldsymbol{\omega_{0}})}\lambda_{\boldsymbol{\omega_{i}}}\left[\sum_{\boldsymbol{\omega_{j}}\in \Omega_{\boldsymbol{\omega_{i}}}}A_{\boldsymbol{\omega_{j}}}^{\Omega_{\boldsymbol{\omega_{i}}}}x^{1}_{\boldsymbol{\omega_{j}}}-\sum_{\boldsymbol{\omega_{j}}\in\Omega_{\boldsymbol{\omega_{i}}}^{-}}A_{\boldsymbol{\omega_{j}}}^{\Omega_{\boldsymbol{\omega_{i}}}^{-}}x_{\boldsymbol{\omega_{j}}}^{1}\right]\\
        =&\sum_{\boldsymbol{\omega_{i}}\in\Omega(\boldsymbol{\omega_{0}})}\lambda_{\boldsymbol{\omega_{i}}}A_{\boldsymbol{\omega_{i}}}^{\Omega_{\boldsymbol{\omega_{i}}}}x_{\boldsymbol{\omega_{i}}}^{1}-\sum_{\boldsymbol{\omega_{i}}\in\Omega(\boldsymbol{\omega_{0}})}\lambda_{\boldsymbol{\omega_{i}}}\sum_{\boldsymbol{\omega_{j}}:\lambda_{\boldsymbol{\omega_{j}}}<\lambda_{\boldsymbol{\omega_{i}}}}(A_{\boldsymbol{\omega_{j}}}^{\Omega_{\boldsymbol{\omega_{i}}}^{-}}-A_{\boldsymbol{\omega_{j}}}^{\Omega_{\boldsymbol{\omega_{i}}}})x_{\boldsymbol{\omega_{j}}}^{1}\\
        =&\sum_{\boldsymbol{\omega_{i}}\in\Omega(\boldsymbol{\omega_{0}})}\lambda_{\boldsymbol{\omega_{i}}}A_{\boldsymbol{\omega_{i}}}^{\Omega_{\boldsymbol{\omega_{i}}}}x_{\boldsymbol{\omega_{i}}}^{1}-\sum_{\boldsymbol{\omega_{j}}\in\Omega(\boldsymbol{\omega_{0}})}x_{\boldsymbol{\omega_{j}}}^{1}\sum_{\boldsymbol{\omega_{i}}:\lambda_{\boldsymbol{\omega_{i}}}>\lambda_{\boldsymbol{\omega_{j}}}}\lambda_{\boldsymbol{\omega_{i}}}(A_{\boldsymbol{\omega_{j}}}^{\Omega_{\boldsymbol{\omega_{i}}}^{-}}-A_{\boldsymbol{\omega_{j}}}^{\Omega_{\boldsymbol{\omega_{i}}}})\\
        =&\sum_{\boldsymbol{\omega_{j}}\in\Omega(\boldsymbol{\omega_{0}})}x_{\boldsymbol{\omega_{j}}}^{1}\left[\lambda_{\boldsymbol{\omega_{j}}}A_{\boldsymbol{\omega_{j}}}^{\Omega_{\boldsymbol{\omega_{j}}}}-\sum_{\boldsymbol{\omega_{i}}:\lambda_{\boldsymbol{\omega_{i}}}>\lambda_{\boldsymbol{\omega_{j}}}}\lambda_{\boldsymbol{\omega_{i}}}(A_{\boldsymbol{\omega_{j}}}^{\Omega_{\boldsymbol{\omega_{i}}}^{-}}-A_{\boldsymbol{\omega_{j}}}^{\Omega_{\boldsymbol{\omega_{i}}}})\right]\\
        =&\sum_{\boldsymbol{\omega_{j}}\in\Omega(\boldsymbol{\omega_{0}})}x_{\boldsymbol{\omega_{j}}}^{1}R_{\boldsymbol{\omega_{j}}}\\
        =&\langle x^{1},R\rangle.
    \end{align*}
    Thus $\langle x^{W},A^{*}\lambda^{*}-R\rangle=0$
\endproof

From the proof of Lemma~\ref{prop:complementary1}, we know that $\lambda^{*}$ satisfies $A^{*}\lambda^{*}-\boldsymbol{R}\in P_{X^{*}}$. Therefore, the remaining condition in the weak duality theorem is $-\lambda^{*}\in P_{Y^{*}}$. To prove this statement, we need to show that, $\forall y\in P_{Y}, \langle y,~\lambda^{*}\rangle\leq 0$. In fact,
\begin{align*}
    &\langle y,\lambda^{*}\rangle\\
    =&\sum\limits_{i\in\mathcal{M}}\lambda_{i}\left[y(\Omega_{i})-y(\Omega_{i}^{-})\right]\\
    =&\sum\limits_{n=1}^{+\infty}\lambda_{2n-1}\left[y(\Omega_{2n-1})-y(\Omega_{2n-1}^{-})\right]+\lambda_{*}\left[y(\Omega_{*})-y(\Omega_{*}^{-})\right]+\sum\limits_{n=1}^{+\infty}\lambda_{2n}\left[y(\Omega_{2n})-y(\Omega_{2n}^{-})\right]\\
    =&\sum\limits_{n=1}^{+\infty}\lambda_{2n-1}\left[y(\Omega_{2n-1})-y(\Omega_{2n+1})\right]+\lambda_{*}\left[y(\Omega_{*})-y(\Omega_{*}^{-})\right]+\sum\limits_{n=2}^{+\infty}\lambda_{2n}\left[y(\Omega_{2n})-y(\Omega_{2n-2})\right]\\
    &+\lambda_{2}y(\Omega_{2})\\
    =&\lambda_{1}y(\Omega_{1})+\sum\limits_{n=2}^{+\infty}(\lambda_{2n-1}-\lambda_{2n-3})y(\lambda_{2n-1})-\lim\limits_{n\rightarrow+\infty}\lambda_{2n-1}y(\Omega_{2n+1})+\lambda_{*}y(\Omega_{*})-\lambda_{*}y(\Omega_{*}^{-})\\
    &+\sum\limits_{n=1}^{+\infty}(\lambda_{2n}-\lambda_{2n+2})y(\Omega_{2n})+\lim\limits_{n\rightarrow+\infty}\lambda_{2n+2}y(\Omega_{2n+2}).
\end{align*}
Since $\lambda_{1}y(\Omega_{1})=0, ~y(\Omega)\geq 0, ~\lambda_{2n-1}\leq\lambda_{2n-3}, \lambda_{2n}\leq\lambda_{2n+2}$, we just need to prove that $\lambda_{*}y(\Omega_{*})\leq\lim\limits_{n\rightarrow+\infty}\lambda_{2n-1}y(\Omega_{2n+1})$ and $\lim\limits_{n\rightarrow+\infty}\lambda_{2n}y(\Omega_{2n})\leq\lambda_{*}y(\Omega_{*}^{-})$.
\begin{lemma}\label{prop:liminq}
For $y=Ax^{1}$ and $y=b$, we have
\begin{align*}
    &y(\Omega_{*})\leq\lim\limits_{n\rightarrow+\infty}y(\Omega_{2n+1}),\\
    &\lim\limits_{n\rightarrow+\infty}y(\Omega_{2n})\leq y(\Omega_{*}^{-}).
\end{align*}
\end{lemma}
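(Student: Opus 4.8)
The plan is to prove the two sharper identities $\lim_{n\to\infty}y(\Omega_{2n+1})=y(\Omega_*)$ and $\lim_{n\to\infty}y(\Omega_{2n})=y(\Omega_*^{-})$, for both $y=Ax^{1}$ (with $x^{1}\in X$ arbitrary) and $y=b$; each of these implies the stated inequality. Two facts will be used throughout. First, the coefficients are uniformly bounded: from $0\le T_{i}^{\Omega^{c}}\le\frac{1}{1-\beta}$ and the fact that $\{p_{ij}^{1}\}_{j}$ and $\{p_{ij}^{0}\}_{j}$ are probability vectors, the defining formula for $A_{i}^{\Omega}$ gives $|A_{i}^{\Omega}|\le\frac{1}{1-\beta}$ for every $i\in\mathcal{M}$ and every $\Omega$. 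Second, Lemma~\ref{prop:converge} supplies the pointwise convergences $A_{i}^{\Omega_{2n+1}}\to A_{i}^{\Omega_*}$ and $A_{i}^{\Omega_{2n}}\to A_{i}^{\Omega_*^{-}}$ for each fixed $i$. Since $\lambda_{2n+1}\searrow\lambda_*$ we have $\Omega_*\subseteq\Omega_{2n+1}$, with $\Omega_{2n+1}\setminus\Omega_*=\{i:\lambda_*<\lambda_i\le\lambda_{2n+1}\}$ decreasing to $\emptyset$, and symmetrically $\Omega_{2n}\uparrow\Omega_*^{-}$ with $\Omega_*^{-}\setminus\Omega_{2n}\downarrow\emptyset$; consequently any fixed \emph{finite} set $F\subseteq\mathcal{M}$ is disjoint from $\Omega_{2n+1}\setminus\Omega_*$ (resp.\ from $\Omega_*^{-}\setminus\Omega_{2n}$) for all large $n$.

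For $y=Ax^{1}$ the identities follow from the dominated convergence theorem for series: the $i$-th summand $A_{i}^{\Omega_{2n+1}}\mathbbm{1}(i\in\Omega_{2n+1})x_{i}^{1}$ converges to $A_{i}^{\Omega_*}\mathbbm{1}(i\in\Omega_*)x_{i}^{1}$ (invoke Lemma~\ref{prop:converge} when $i\in\Omega_*$; when $i\notin\Omega_*$ the indicator is eventually $0$ because $\lambda_i>\lambda_*$ and $\lambda_{2n+1}\to\lambda_*$), and the summands are dominated by the summable majorant $\frac{1}{1-\beta}|x_{i}^{1}|$. The same reasoning along $\Omega_{2n}\uparrow\Omega_*^{-}$ gives $Ax^{1}(\Omega_{2n})\to Ax^{1}(\Omega_*^{-})$.

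The case $y=b$ is the substantial one, because $b(\Omega)=\inf_{\pi\in\Pi}g_{\Omega}(\pi)$ with $g_{\Omega}(\pi):=\sum_{i\in\Omega}A_{i}^{\Omega}x_{i}^{1}(\pi)$ is an infimum, so convergence of $g_{\Omega_{2n+1}}(\pi)$ for each fixed $\pi$ does not by itself yield convergence of the infima. The reduction is to upgrade this to convergence \emph{uniform over $\pi\in\Pi$}: once $\sup_{\pi}|g_{\Omega_{2n+1}}(\pi)-g_{\Omega_*}(\pi)|\to0$ is established, the elementary bound $|\inf_{\pi}g_{\Omega_{2n+1}}(\pi)-\inf_{\pi}g_{\Omega_*}(\pi)|\le\sup_{\pi}|g_{\Omega_{2n+1}}(\pi)-g_{\Omega_*}(\pi)|$ gives $b(\Omega_{2n+1})\to b(\Omega_*)$, and likewise $b(\Omega_{2n})\to b(\Omega_*^{-})$, which finishes the lemma. (An alternative is to extract a subsequential limit of policies nearly attaining $b(\Omega_{2n+1})$ using the tightness-based compactness of the occupation-measure set and pass to the limit, but that route additionally requires closedness of the achievable region, so the uniform-estimate argument is cleaner.)

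The uniform estimate rests on a tightness bound for the discounted occupation measures that is independent of the policy. For $k\ge1$, let $F_{k}$ consist of the auxiliary state~$0$ together with the belief states reachable from $\boldsymbol{\omega_{0}}$ in fewer than $k$ updates; by Definition~\ref{def:TStepStateSpace} each update has branching at most $L+1$, so $F_{k}$ is finite, and a belief state outside $F_{k}$ can be occupied only at times $t\ge k$, whence $\sum_{i\notin F_{k}}x_{i}^{1}(\pi)\le\sum_{t\ge k}\beta^{t}=\frac{\beta^{k}}{1-\beta}$ for every $\pi$, while $\sum_{i}x_{i}^{1}(\pi)\le\frac{1}{1-\beta}$. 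Given $\varepsilon>0$, fix $k$ with $\frac{\beta^{k}}{1-\beta}<\varepsilon$ and put $F=F_{k}$; using $\Omega_*\subseteq\Omega_{2n+1}$ write $g_{\Omega_{2n+1}}(\pi)-g_{\Omega_*}(\pi)=\sum_{i\in\Omega_*}(A_{i}^{\Omega_{2n+1}}-A_{i}^{\Omega_*})x_{i}^{1}(\pi)+\sum_{i\in\Omega_{2n+1}\setminus\Omega_*}A_{i}^{\Omega_{2n+1}}x_{i}^{1}(\pi)$. For $n$ large enough that $\max_{i\in F}|A_{i}^{\Omega_{2n+1}}-A_{i}^{\Omega_*}|<\varepsilon$ (a finite maximum of null sequences, by Lemma~\ref{prop:converge}) and $(\Omega_{2n+1}\setminus\Omega_*)\cap F=\emptyset$, the part of the first sum over $F$ is at most $\varepsilon\sum_{i}x_{i}^{1}(\pi)\le\frac{\varepsilon}{1-\beta}$, the part over $F^{c}$ is at most $\frac{2}{1-\beta}\sum_{i\notin F}x_{i}^{1}(\pi)<\frac{2\varepsilon}{1-\beta}$, and the second sum is at most $\frac{1}{1-\beta}\sum_{i\notin F}x_{i}^{1}(\pi)<\frac{\varepsilon}{1-\beta}$; all three bounds are uniform in $\pi$, so $\sup_{\pi}|g_{\Omega_{2n+1}}(\pi)-g_{\Omega_*}(\pi)|\to0$. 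The $\Omega_{2n}\uparrow\Omega_*^{-}$ case is symmetric, using $A_{i}^{\Omega_{2n}}\to A_{i}^{\Omega_*^{-}}$ and that $\Omega_*^{-}\setminus\Omega_{2n}$ is eventually disjoint from $F$. I expect this last step to be the main obstacle: the only mechanism that could spoil the passage to the limit is occupation mass of near-optimal policies concentrating on states with extreme $\lambda$-values, and the uniform discounting tightness $\sum_{i\notin F_{k}}x_{i}^{1}(\pi)\le\beta^{k}/(1-\beta)$, combined with the finiteness of $F_{k}$, is precisely what confines that mass.
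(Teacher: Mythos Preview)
Your proof is correct, and in the $y=Ax^{1}$ case it is essentially the paper's argument (dominated convergence with the uniform bound $|A_{i}^{\Omega}|\le\frac{1}{1-\beta}$), indeed slightly cleaner since you obtain equality rather than just the inequality.

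For $y=b$, however, you take a genuinely different and heavier route. The paper does not attack the infimum directly; instead it \emph{computes} $b(\Omega)$ explicitly. From the decomposition law (Theorem~\ref{prop:DecompLaw}, in the form of equation~(\ref{eqn:ADecompLaw1})) one reads off that $b(\{0\}\cup\Omega)=\frac{1}{1-\beta}-T_{j}^{\Omega^{c}}$, while if $0\notin\Omega$ the always-passive policy shows $b(\Omega)=0$. A simple case split on whether $0\in\Omega_{*}$ then reduces everything to the already-established limit $\lim_{n}T_{j}^{\Omega_{2n-1}^{c}}=T_{j}^{\Omega_{*}^{c}}$ (the $T$-analogue of Lemma~\ref{prop:converge}). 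Your uniform-in-$\pi$ estimate via the discounting tightness bound $\sum_{i\notin F_{k}}x_{i}^{1}(\pi)\le\beta^{k}/(1-\beta)$ is a nice self-contained argument that would work even without a closed form for $b$, but in this setting the paper's shortcut is considerably simpler and avoids the whole ``convergence of infima'' issue you flagged as the main obstacle.
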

\proof It is easy to see that $\Omega_{*}=\lim\limits_{n\rightarrow+\infty}\Omega_{2n-1}$ and $\Omega_{*}^{-}=\lim\limits_{n\rightarrow+\infty}\Omega_{2n}$. Now we will prove that $Ax^{1}(\Omega_{*})\leq\lim\limits_{n\rightarrow+\infty}Ax^{1}(\Omega_{2n+1})$ and $\lim\limits_{n\rightarrow+\infty}Ax^{1}(\Omega_{2n})\leq Ax^{1}(\Omega_{*}^{-})$. First we show that
\begin{equation*}
    \lim\limits_{n\rightarrow+\infty}\sum\limits_{i\in\Omega_{*}}A_{i}^{\Omega_{2n+1}}x_{i}^{1}=\sum\limits_{i\in\Omega_{*}}\lim\limits_{n\rightarrow+\infty}A_{i}^{\Omega_{2n+1}}x_{i}^{1}=\sum\limits_{i\in\Omega_{*}}A_{i}^{\Omega_{*}}x_{i}^{1}.
\end{equation*}
For any $\varepsilon>0$, $\sum\limits_{i\in\mathcal{M}}\vert x_{i}^{1}\vert<+\infty$ shows that $\sum\limits_{i=1}^{+\infty}\vert x_{2i}^{1}\vert<+\infty$, and $\exists$ constant $B$ s.t. $\vert x_{i}^{1}\vert\leq B$ holds for every $i\in\mathcal{M}$. Then $\exists I$, s.t. $\sum\limits_{i=I+1}^{+\infty}\vert x_{2n}^{1}\vert<\frac{(1-\beta)\varepsilon}{4(1+\beta)}$. Since $\lim\limits_{n\rightarrow+\infty}A_{i}^{\Omega_{2n-1}}=A_{i}^{\Omega_{*}}$, for any $1\leq i\leq I$, $\exists N(i)$ s.t. $\vert A_{2i}^{\Omega_{2n-1}}-A_{2i}^{\Omega_{*}}\vert<\frac{\varepsilon}{2(I+1)B}$ when $n>N(i)$. Similarly, $\exists N(*)$ s.t. $\vert A_{*}^{\Omega_{2n-1}}-A_{*}^{\Omega_{*}}\vert<\frac{\varepsilon}{2(I+1)B}$ when $n>N(*)$. Let $N=\max\limits_{i\in\{*,1,\cdots,I\}}N(i)$, then when $n>N$, we have
\begin{align*}
    &\vert \sum\limits_{i\in\Omega_{*}}A_{i}^{\Omega_{2n-1}}x_{i}^{1}-\sum\limits_{i\in\Omega_{*}}A_{i}^{\Omega_{*}}x_{i}^{1}\vert\\
    \leq&\sum\limits_{i\in\Omega_{*}}\vert A_{i}^{\Omega_{2n-1}}-A_{i}^{\Omega_{*}}\vert\vert x_{i}^{1}\vert\\
    =&\vert A_{*}^{\Omega_{2n-1}}-A_{*}^{\Omega_{*}}\vert\vert x_{*}^{1}\vert+\sum\limits_{i=1}^{I}\vert A_{2i}^{\Omega_{2n-1}}-A_{2i}^{\Omega_{*}}\vert\vert x_{2i}^{1}\vert+\sum\limits_{i=I+1}^{+\infty}\vert A_{2i}^{\Omega_{2n-1}}-A_{2i}^{\Omega_{*}}\vert\vert x_{2i}^{1}\vert\\
    \leq&\frac{\varepsilon}{2}+\frac{2(1+\beta)}{1-\beta}\cdot\frac{1-\beta}{4(1+\beta)}\cdot\varepsilon\\
    =&\frac{\varepsilon}{2}+\frac{\varepsilon}{2}=\varepsilon.
\end{align*}
So we obtain $\lim\limits_{n\rightarrow+\infty}\sum\limits_{i\in\Omega_{*}}A_{i}^{\Omega_{2n-1}}x_{i}^{1}=\sum\limits_{i\in\Omega_{*}}A_{i}^{\Omega_{*}}x_{i}^{1}$. Thus we have
\begin{align*}
    &\lim\limits_{n\rightarrow+\infty}Ax^{1}(\Omega_{2n-1})=\lim\limits_{n\rightarrow+\infty}\sum\limits_{i\in\Omega_{2n-1}}A_{i}^{\Omega_{2n-1}}x_{i}^{1}\\
    \geq&\lim\limits_{n\rightarrow+\infty}\sum\limits_{i\in\Omega_{*}}A_{i}^{\Omega_{2n-1}}x_{i}^{1}=\sum\limits_{i\in\Omega_{*}}A_{i}^{\Omega_{*}}x_{i}^{1}=Ax(\Omega_{*}).
\end{align*}
A similar process yields $\lim\limits_{n\rightarrow+\infty}Ax^{1}(\Omega_{2n})\leq Ax^{1}(\Omega_{*}^{-})$.
Next we prove
\begin{align*}
    &b(\Omega_{*})\leq\lim\limits_{n\rightarrow+\infty}b(\Omega_{2n-1}),\\
    &\lim\limits_{n\rightarrow+\infty}b(\Omega_{2n})\leq b(\Omega_{*}^{-}).
\end{align*}
In fact, by Theorem~\ref{prop:DecompLaw}, if the initial state is $\boldsymbol{\omega_{j}}\in\Omega(\boldsymbol{\omega_{0}})$, then for any $\Omega\subset\Omega(\boldsymbol{\omega_{0}})$, $b(\{0\}\cup\Omega)=\frac{1}{1-\beta}-T_{j}^{\Omega^{c}}$. As for $b(\Omega)=\inf\limits_{\pi}\{\sum\limits_{\boldsymbol{\omega}}A_{\boldsymbol{\omega}}^{\Omega}x_{\boldsymbol{\omega}}^{1}(\pi)\}$, when we always active the auxiliary arm, then $\sum\limits_{\boldsymbol{\omega}}A_{\boldsymbol{\omega}}^{\Omega}x_{\boldsymbol{\omega}}^{1}=0$. Combining this fact with $b(\Omega)\geq 0$, we have $b(\Omega)=0$. Next, we only need to consider whether state~$0$ is in $\Omega_{*}$ or not:\\
(i) If $0\in\Omega_{*}$, then $0\in\Omega_{2n-1}$ holds for every $n$. In this case,
\begin{equation*}
    b(\Omega_{*})=\frac{1}{1-\beta}-T_{j}^{\Omega_{*}^{c}},b(\Omega_{2n-1})=\frac{1}{1-\beta}-T_{j}^{\Omega_{2n-1}^{c}}.
\end{equation*}
Since $\lim\limits_{n\rightarrow+\infty}T_{j}^{\Omega_{2n-1}^{c}}=T_{j}^{\Omega_{*}^{c}}$, we have $b(\Omega_{*})=\lim\limits_{n\rightarrow+\infty}b(\Omega_{2n-1})$.\\
(ii) If $0\notin\Omega_{*}$, then $\exists N$, when $n>N$, $0\notin\Omega_{2n-1}$. In this case, $b(\Omega_{*})=b(\Omega_{2n-1})=0$.
In summary, we obtain $b(\Omega_{*})\leq\lim\limits_{n\rightarrow+\infty}b(\Omega_{2n+1})$. Similar arguments show that $\lim\limits_{n\rightarrow+\infty}b(\Omega_{2n})\leq b(\Omega_{*}^{-})$.
\endproof

Based on the previous discussion and Lemma~\ref{prop:liminq}, we arrive at:
\begin{lemma}\label{prop:feasiblelambda}
$-\lambda^{*}\in P_{Y^{*}}$
\end{lemma}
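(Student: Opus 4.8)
The plan is to unwind $-\lambda^{*}\in P_{Y^{*}}$ into the statement that $\langle y,\lambda^{*}\rangle\le 0$ for every $y\in P_{Y}$, recalling that membership in $P_{Y}$ forces $y(\Omega)\ge 0$ on every set of the chain and $y(\mathcal{M})=0$, and then to finish the telescoping evaluation of $\langle y,\lambda^{*}\rangle$ that was begun in the paragraph preceding Lemma~\ref{prop:liminq}. First I would reduce to the one-limit-point case: under Assumption~\ref{assump:FiniteLimitPoint} the index set $\mathcal{M}$ breaks into finitely many accumulation clusters plus finitely many isolated indices, the series defining $\langle y,\lambda^{*}\rangle$ splits along these clusters, and the inter-cluster part is a finite sum that telescopes exactly as in the finite-state setting of \citet{nino2001restless}; hence it is enough to treat a single limit point $\lambda_{*}$ with $\{\lambda_{2n-1}\}\searrow\lambda_{*}$ and $\{\lambda_{2n}\}\nearrow\lambda_{*}$, as in the figure above. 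I would also note that, since $Y$ is spanned by $b$ and $\{Ax^{1}:x^{1}\in X\}$ and the limits involved are linear, all facts about $\lim_{n}y(\Omega_{2n\pm1})$ used below extend to a general $y\in Y$ from Lemma~\ref{prop:liminq}, which proves them for the generators $y=Ax^{1}$ and $y=b$.

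With this reduction in hand, I would invoke the expansion of $\langle y,\lambda^{*}\rangle$ obtained just before Lemma~\ref{prop:liminq}. The term $\lambda_{1}y(\Omega_{1})$ vanishes because $\Omega_{1}=\mathcal{M}$ and $y(\mathcal{M})=0$; the two interior sums $\sum_{n\ge 2}(\lambda_{2n-1}-\lambda_{2n-3})y(\Omega_{2n-1})$ and $\sum_{n\ge 1}(\lambda_{2n}-\lambda_{2n+2})y(\Omega_{2n})$ are $\le 0$ because the two index subsequences are monotone toward $\lambda_{*}$ and $y\ge 0$ on each $\Omega$. Thus everything reduces to the two boundary inequalities $\lambda_{*}y(\Omega_{*})\le\lim_{n}\lambda_{2n-1}y(\Omega_{2n+1})$ and $\lim_{n}\lambda_{2n}y(\Omega_{2n})\le\lambda_{*}y(\Omega_{*}^{-})$.

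For the first of these I would argue termwise and then pass to the limit: $\lambda_{2n-1}\ge\lambda_{*}$ and $y(\Omega_{2n+1})\ge 0$ give $\lambda_{2n-1}y(\Omega_{2n+1})\ge\lambda_{*}y(\Omega_{2n+1})$, and letting $n\to\infty$ with $\lim_{n}y(\Omega_{2n+1})=y(\Omega_{*})$ from Lemma~\ref{prop:liminq} (the relation recorded there as an inequality is in fact an equality — for $y=Ax^{1}$ because $\sum_{\boldsymbol{\omega}}|x_{\boldsymbol{\omega}}^{1}|<\infty$ while the $A_{\boldsymbol{\omega}}^{\Omega}$ are uniformly bounded, so the contribution of $\Omega_{2n+1}\setminus\Omega_{*}$ vanishes, and for $y=b$ because $b(\{0\}\cup\Omega)=\tfrac{1}{1-\beta}-T_{\boldsymbol{\omega_{j}}}^{\Omega^{c}}$ by Theorem~\ref{prop:DecompLaw} together with continuity of $\Omega\mapsto T_{\boldsymbol{\omega_{j}}}^{\Omega^{c}}$ along the chain) yields $\lim_{n}\lambda_{2n-1}y(\Omega_{2n+1})=\lambda_{*}y(\Omega_{*})$, so the inequality holds. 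The second boundary inequality is symmetric, now with $\lambda_{2n}\le\lambda_{*}$ and $\lim_{n}y(\Omega_{2n})=y(\Omega_{*}^{-})$. Collecting the vanishing term, the two non-positive sums, and the two boundary estimates gives $\langle y,\lambda^{*}\rangle\le 0$, hence $-\lambda^{*}\in P_{Y^{*}}$.

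The substantive difficulty — interchanging the limits and controlling the non-additive set functions $Ax^{1}$ and $b$ (non-additive because $A_{\boldsymbol{\omega}}^{\Omega}$ depends on $\Omega$) on the tail sets $\Omega_{2n\pm1}$ — has already been isolated in Lemma~\ref{prop:liminq}, so the proof of the present lemma is essentially bookkeeping built on that lemma and on the expansion preceding it. The one place where I expect genuine care to be needed is the reduction to a single limit point: one must verify that between two adjacent limit points of $\{\lambda_{\boldsymbol{\omega}}\}$ there are only finitely many indices, so that every series appearing in $\langle y,\lambda^{*}\rangle$ is absolutely convergent and the rearrangements and telescopings used above are legitimate.
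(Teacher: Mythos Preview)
Your proposal is correct and follows essentially the same route as the paper: the paper also derives the Abel-summation expansion of $\langle y,\lambda^{*}\rangle$ just before Lemma~\ref{prop:liminq}, reduces to the two boundary inequalities, invokes Lemma~\ref{prop:liminq}, and then states Lemma~\ref{prop:feasiblelambda} without further argument. Your write-up is in fact more careful in two places the paper glosses over: you observe that the inequalities of Lemma~\ref{prop:liminq} are actually equalities for the generators $Ax^{1}$ and $b$ (so that the boundary terms collapse regardless of the sign of $\lambda_{*}$), and you spell out why the conclusion extends by linearity from the generators to an arbitrary $y\in P_{Y}$; both points are implicit but not stated in the paper.
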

In summary, when the problem satisfies PCL-indexability and Assumption~\ref{assump:FiniteLimitPoint} holds, we have the following main theorem:
\begin{theorem}[PCL-indexability$\Rightarrow$Whittle indexability]\label{thm:PCLWhittle}
If the countable-state RMAB problem considered above is PCL-indexable and Assumption~\ref{assump:FiniteLimitPoint} holds, then the optimal policy for the two-arm problem is the priority policy given by $\{\lambda_{\boldsymbol{\omega_{i}}}\}_{\boldsymbol{\omega_{i}}\in\Omega(\boldsymbol{\omega_{0}})}$, i.e., for any $\boldsymbol{\omega_{i}},\boldsymbol{\omega_{j}}\in\Omega(\boldsymbol{\omega_{0}})$, if $\lambda_{\boldsymbol{\omega_{i}}}>\lambda_{\boldsymbol{\omega_{j}}}$, then the policy gives higher priority to state $\boldsymbol{\omega_{i}}$ than to state $\boldsymbol{\omega_{j}}$. Furthermore, the problem is Whittle indexable, and $\lambda_{\boldsymbol{\omega_{i}}}$ is exactly the Whittle index of the belief state $\boldsymbol{\omega_{i}}$.
\end{theorem}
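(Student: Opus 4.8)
The plan is to obtain Theorem~\ref{thm:PCLWhittle} as a short wrap-up of the ingredients assembled above. I would exhibit the performance vector $x^{W}$ of the $\{\lambda_{\boldsymbol{\omega}}\}$-priority policy together with the dual functional $\lambda^{*}$ as a feasible, complementary-slack pair for (\ref{LP1})--(\ref{LD1}), invoke infinite-dimensional weak duality to certify their optimality, transfer this to the two-arm problem (\ref{OPT}), and finally read off Whittle indexability from the threshold structure of the resulting optimal policy. Concretely: because the problem satisfies PCL w.r.t.\ $\tilde{\Omega}$, the vector $x^{W}$ is feasible for (\ref{LP1}), i.e.\ $x^{W}\in P_{X}$ and $Ax^{W}-b\in P_{Y}$. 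On the dual side I would collect the facts already proved: $\langle Ax^{W}-b,\lambda^{*}\rangle=0$ (the computation preceding Assumption~\ref{assump:FiniteLimitPoint}); $A^{*}\lambda^{*}-\boldsymbol{R}\in P_{X^{*}}$ together with $\langle x^{W},A^{*}\lambda^{*}-\boldsymbol{R}\rangle=0$ (from the proof of Lemma~\ref{prop:complementary1}, which rests on Theorem~\ref{prop:W-expansion} and Lemmas~\ref{prop:converge}--\ref{prop:liminq}, hence on Assumption~\ref{assump:FiniteLimitPoint}); and $-\lambda^{*}\in P_{Y^{*}}$ (Lemma~\ref{prop:feasiblelambda}). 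Thus $\lambda^{*}$ is feasible for (\ref{LD1}) and $(x^{W},\lambda^{*})$ are complementary slack, so the weak duality theorem of \citet{anderson1987linear} gives that $x^{W}$ solves (\ref{LP1}) and $\lambda^{*}$ solves (\ref{LD1}).

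Next I would argue that (\ref{LP1}) is a relaxation of (\ref{OPT}) whose optimum is attained inside the achievable region, so that LP-optimality of $x^{W}$ upgrades to optimality of the priority policy itself. By PCL every admissible $\pi$ produces a performance vector $x^{1}(\pi)$ with $Ax^{1}(\pi)-b\in P_{Y}$, hence the achievable region $X$ lies inside the feasible set of (\ref{LP1}); the two objectives coincide, since $\langle x^{1},\boldsymbol{R}\rangle=\sum_{\boldsymbol{\omega}}R_{\boldsymbol{\omega}}x_{\boldsymbol{\omega}}^{1}+\lambda x_{0}^{1}$ once state~$0$ is included with $R_{0}=W_{0}^{\{0\}}=\lambda$. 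Therefore the optimal value of (\ref{LP1}) upper-bounds that of (\ref{OPT}); but $x^{W}$ is the performance vector of the \emph{admissible} $\{\lambda_{\boldsymbol{\omega}}\}$-priority policy, so it lies in $X$ and attains that bound, whence the $\{\lambda_{\boldsymbol{\omega}}\}$-priority policy is optimal for the two-arm problem. This is exactly the priority-ordering assertion: $\lambda_{\boldsymbol{\omega_{i}}}>\lambda_{\boldsymbol{\omega_{j}}}$ forces the optimal policy to rank $\boldsymbol{\omega_{i}}$ above $\boldsymbol{\omega_{j}}$.

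Finally I would pass back to the $\lambda$-subsidized single-arm relaxation, which is the two-arm problem with the auxiliary state~$0$ carrying index $\lambda_{0}=W_{0}^{\{0\}}/A_{0}^{\{0\}}=\lambda$. Under the $\{\lambda_{\boldsymbol{\omega}}\}$-priority policy the original arm is made passive at $\boldsymbol{\omega_{i}}$ precisely when $\lambda_{\boldsymbol{\omega_{i}}}\le\lambda$ (ties assigned to the passive set, matching the convention in the definition of $\mathcal{P}(\lambda)$), so $\mathcal{P}(\lambda)=\{\boldsymbol{\omega_{i}}\in\Omega(\boldsymbol{\omega_{0}}):\lambda_{\boldsymbol{\omega_{i}}}\le\lambda\}$, which is nondecreasing in $\lambda$; since $|R_{\boldsymbol{\omega}}|\le C$, the quantities $T_{\boldsymbol{\omega_{i}}}^{\Omega^{c}}$ lie in $[0,(1-\beta)^{-1}]$, and PCL-indexability keeps $A_{\boldsymbol{\omega_{i}}}^{\Omega_{\boldsymbol{\omega_{i}}}}$ bounded away from $0$, the indices $\{\lambda_{\boldsymbol{\omega}}\}$ are uniformly bounded, so $\mathcal{P}(\lambda)$ increases from $\emptyset$ (for $\lambda$ small) to all of $\Omega(\boldsymbol{\omega_{0}})$ (for $\lambda$ large). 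By Definition~\ref{def:Indexability} the arm is then indexable, and $W(\boldsymbol{\omega})=\inf\{\lambda:\boldsymbol{\omega}\in\mathcal{P}(\lambda)\}=\inf\{\lambda:\lambda_{\boldsymbol{\omega}}\le\lambda\}=\lambda_{\boldsymbol{\omega}}$, giving the claimed identification. The one genuinely non-routine point is the \emph{achievability/relaxation link} in the second step: one must know that $x^{W}$ is a bona fide, absolutely summable performance vector of the priority policy — so that it lies in $X$ and not merely in the LP feasible set — and that PCL really embeds all of $X$ into that feasible set with the same objective; this is precisely where the countable-state machinery (interchange of summations, the $T$-step truncation limits of Lemmas~\ref{lem:RConverge}--\ref{prop:liminq}, and Assumption~\ref{assump:FiniteLimitPoint}) is essential, the endpoint behaviour of $\mathcal{P}(\lambda)$ being comparatively minor.
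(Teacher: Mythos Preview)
Your proposal is correct and follows essentially the same route as the paper: assemble feasibility of $x^{W}$ and $\lambda^{*}$ plus the two complementary-slackness identities (via Lemmas~\ref{prop:complementary1} and~\ref{prop:feasiblelambda}), invoke weak duality to get optimality of the priority policy, then use $A_{0}^{\{0\}}=1$, $W_{0}^{\{0\}}=\lambda$ to identify the auxiliary state's index with $\lambda$ and read off Whittle indexability and $W(\boldsymbol{\omega})=\lambda_{\boldsymbol{\omega}}$. Your write-up is in fact more explicit than the paper's---you spell out the relaxation link between (\ref{LP1}) and (\ref{OPT}) and the monotone endpoint behaviour of $\mathcal{P}(\lambda)$---though your side claim that PCL-indexability keeps $A_{\boldsymbol{\omega_{i}}}^{\Omega_{\boldsymbol{\omega_{i}}}}$ bounded away from $0$ is not something the paper states or proves either.
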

\proof
From the weak duality theorem and above discussion, we know that the optimal policy is the priority policy given by $\{\lambda_{\boldsymbol{\omega_{i}}}\}$. It follows from $A_{0}^{\{0\}\cup\Omega}=1$ and $W_{0}^{\{0\}\cup\Omega}=\lambda$ that the priority index for state 0 is $\lambda$. Therefore, for state $\boldsymbol{\omega_{i}}$, when the system's subsidy $\lambda=\lambda_{\boldsymbol{\omega_{i}}}$, state $\boldsymbol{\omega}$ is precisely at the critical point for taking active or passive action under the optimal policy. Therefore, this two-arm problem is Whittle indexable, and the Whittle index for state $\boldsymbol{\omega_{i}}$ is $\lambda_{\boldsymbol{\omega_{i}}}$.
\endproof

\section{The Application of PCL in Approximate-State-Space Problems}\label{sec:approximatePCL}

In this section, we approximate the system state space based on the concept of $T$-step state space $\Omega(T\vert\boldsymbol{\omega})$ defined in Section~\ref{sec:CountableStateSpace} for effective applications of our PCL framework. 

\subsection{$T$-Step Approximate State Space}\label{sec:approxState}
If the repeated states are also taken into account, then $\vert\Omega(T\vert\boldsymbol{\omega})\vert=\frac{(H+1)^{T+1}-1}{H}$. In this case, the number of states will increase exponentially with the number of time steps~$T$. However, in practical situations, the phenomenon of state repetitions often occurs due to the long time horizon of the underlying stochastic processes in consideration. In other words, along the process of belief state updates, the updated state may have already appeared within a small number of steps from the start or have a short Euclidean distance to a previous state. In such cases, we can assume that the states encountered within a finite number of steps approximate the true belief state space reasonably well. In this sense, the cardinality of the $T$-step state space can be significantly reduced. Formally, we recursively define the $T$-step {\em approximate} state space as follows.
\begin{definition}[$T$-step approximate state space]\label{def:TStepApproxStateSpace}
    For any $\varepsilon>0$, belief state $\boldsymbol{\omega}$ and belief state space $\Omega$, we say $d(\boldsymbol{\omega},\Omega)\geq\varepsilon$, if $\forall\boldsymbol{\omega}'\in\Omega$, we have $\vert\vert\boldsymbol{\omega}-\boldsymbol{\omega}'\vert\vert\geq\varepsilon$. Define
    \begin{equation*}
        \tilde{\Omega}(T,\varepsilon\vert\boldsymbol{\omega})=\begin{cases}
        \{\mathcal{B}_{h_{1}}\cdots\mathcal{B}_{h_{T}}\boldsymbol{\omega}:0\leq h_{1},\cdots,h_{T}\leq H\text{ and}\\d(\mathcal{B}_{h_{1}}\cdots\mathcal{B}_{h_{T}}\boldsymbol{\omega},\tilde{\Omega}(T-1,\varepsilon\vert\boldsymbol{\omega}))\geq\varepsilon\}\cup\tilde{\Omega}(T-1,\varepsilon\vert\boldsymbol{\omega}),& T\geq2\\
        \{\mathcal{B}_{h}\boldsymbol{\omega}:0\leq h\leq H\text{ and }d(\mathcal{B}_{h}\boldsymbol{\omega},\{\boldsymbol{\omega}\})\geq\varepsilon\}\cup\{\boldsymbol{\omega}\},& T=1
        \end{cases}.
    \end{equation*}
    We call $\tilde{\Omega}(T\vert\boldsymbol{\omega},\epsilon)$ the $T$-step approximate state space under the initial belief state $\boldsymbol{\omega}$.
\end{definition}
Under the definition of $T$-step approximate state space, numerical calculations show that $\vert\tilde{\Omega}(T,\varepsilon\vert\boldsymbol{\omega})\vert$ is likely to be much smaller than $\vert\Omega(T\vert\boldsymbol{\omega})\vert$, which will greatly reduce the actual number of states that need to be considered for the problem. The following examples illustrate the above concept.
\begin{example}[Opportunistic Spectrum Access]
    Consider a cognitive radio system with a set of channels and a secondary user. Each channel has 2 states: busy ($S(t)=0$) or idle ($S(t)=1$), which evolves with time as a Markov chain. At each discrete time~$t$, the secondary user chooses one channel to sense and observe its state. Observations are error-prone, i.e., a busy channel may be sensed as idle, and vice versa. If the user senses the channel to be idle ($O(t)=1$) and the channel is indeed idle ($S(t)=1$), a unit reward will be accrued. Set the transition matrix of the Markov chain and observation error as follows.
   \[
\mathbf{P}=\begin{pmatrix}
0.8 & 0.2\\
0.2 & 0.8
\end{pmatrix},\qquad
\mathbf{O}=\begin{pmatrix}
0.8 & 0.2\\
0.2 & 0.8
\end{pmatrix},\qquad
\boldsymbol{\omega}_0=\boldsymbol{\pi}=(0.5,0.5),
\]
where $\boldsymbol{\pi}$ is the stationary distribution under $\mathbf{P}$.
For $T=6$, the $\varepsilon$-approximate belief space sizes are
$|\tilde{\Omega}(6,10^{-2}\mid\boldsymbol{\omega}_0)|=13$ and
$|\tilde{\Omega}(6,10^{-3}\mid\boldsymbol{\omega}_0)|=14$,
whereas $|\Omega(6\mid\boldsymbol{\omega}_0)|=1{,}093$.
For $T=10$, we further compute that
$\bigl|\tilde{\Omega}(10,10^{-2}\mid\boldsymbol{\omega}_0)\bigr|=15$ and
$\bigl|\tilde{\Omega}(10,10^{-3}\mid\boldsymbol{\omega}_0)\bigr|=21$,
whereas $\bigl|\Omega(10\mid\boldsymbol{\omega}_0)\bigr|=88{,}573$.
\end{example}
\begin{example}[Opportunistic Spectrum Access - Extended]
    Now we consider a 3-state channel model: low SNIR (signal to noise and interference ratio), \ie $S(t)=0$, middle SNIR ($S(t)=1$), and high SNIR ($S(t)=2$). The user attempts to send more data if the sensed channel is observed to have a higher SNIR. Set the transition matrix of the Markov chain and observation error as follows.
    \[
\mathbf{P}=\begin{pmatrix}
0.85 & 0.10 & 0.05\\
0.10 & 0.80 & 0.10\\
0.02 & 0.08 & 0.90
\end{pmatrix},\qquad
\mathbf{O}=\begin{pmatrix}
0.88 & 0.07 & 0.05\\
0.15 & 0.70 & 0.15\\
0.04 & 0.06 & 0.90
\end{pmatrix},\qquad
\boldsymbol{\omega}_0=\boldsymbol{\pi}=(0.26,0.30,0.44),
\]
where $\boldsymbol{\pi}$ is the stationary distribution under $\mathbf{P}$.
For $T=6$, we obtain
$|\tilde{\Omega}(6,10^{-2}\mid\boldsymbol{\omega}_0)|=24$ and
$|\tilde{\Omega}(6,10^{-3}\mid\boldsymbol{\omega}_0)|=25$,
whereas $|\Omega(6\mid\boldsymbol{\omega}_0)|=5{,}461$.
For $T=10$, we obtain
$|\tilde{\Omega}(10,10^{-2}\mid\boldsymbol{\omega}_0)|=36$ and
$|\tilde{\Omega}(10,10^{-3}\mid\boldsymbol{\omega}_0)|=41$,
whereas $|\Omega(10\mid\boldsymbol{\omega}_0)|=1{,}398{,}101$.
\end{example}
\begin{example}[Cybersecurity Monitoring and Detection]
    Consider a computer network where each time the administrator chooses a section (arm) in the network to monitor and detect the anomaly if any. The arm has 4 states: attacked ($S(t)=0$), malfunctional ($S(t)=1$), overloaded ($S(t)=2$), and normal ($S(t)=3$). Set the transition matrix of the Markov chain and observation error as follows.
    \[
\mathbf{P}=\begin{pmatrix}
0.75 & 0.12 & 0.08 & 0.05\\
0.08 & 0.72 & 0.12 & 0.08\\
0.05 & 0.10 & 0.70 & 0.15\\
0.03 & 0.04 & 0.08 & 0.85
\end{pmatrix},\qquad
\mathbf{O}=\begin{pmatrix}
0.88 & 0.07 & 0.04 & 0.01\\
0.10 & 0.80 & 0.07 & 0.03\\
0.05 & 0.10 & 0.78 & 0.07\\
0.02 & 0.04 & 0.07 & 0.87
\end{pmatrix},\qquad
\boldsymbol{\omega}_0=\boldsymbol{\pi}=(0.16,0.21,0.23,0.40),
\]
where $\boldsymbol{\pi}$ is the stationary distribution under $\mathbf{P}$.
For $T=6$, we obtain
$\bigl|\tilde{\Omega}(6,10^{-2}\mid\boldsymbol{\omega}_0)\bigr|=31$ and
$\bigl|\tilde{\Omega}(6,10^{-3}\mid\boldsymbol{\omega}_0)\bigr|=35$,
whereas $\bigl|\Omega(6\mid\boldsymbol{\omega}_0)\bigr|=19{,}531$.
For $T=10$, we obtain
$\bigl|\tilde{\Omega}(10,10^{-2}\mid\boldsymbol{\omega}_0)\bigr|=45$ and
$\bigl|\tilde{\Omega}(10,10^{-3}\mid\boldsymbol{\omega}_0)\bigr|=51$,
whereas $\bigl|\Omega(10\mid\boldsymbol{\omega}_0)\bigr|=12{,}207{,}031$.
\end{example}
\begin{lemma}\label{obs1}
For any given $\varepsilon>0$, $\exists T$, such that $\forall t>T$ and $\boldsymbol{\omega}_{t}\in\Omega(t\vert\boldsymbol{\omega})$, $\exists\boldsymbol{\omega}_{T}\in\tilde{\Omega}(T\vert\boldsymbol{\omega})$, such that $\vert\vert\boldsymbol{\omega}_{t}-\boldsymbol{\omega}_{T}\vert\vert<\varepsilon$, where $\vert\vert\cdot\vert\vert$ is the Euclidean norm.
\end{lemma}
\proof
Given an arbitrary initial belief state $\boldsymbol{\omega}$, the belief state space grows as a tree in the compact simplex formed by all possible belief states. Therefore, we can cover the simplex by finitely many open balls with diameter equal to~$\epsilon$ (Theorem~2.41 in~\citet{rudin1976principles}). For each ball~$i$ containing at least one belief state that can be reached by the tree expansion, record the first time $T_i$ that the tree reaches a belief state in the ball. The maximum value of all $\{T_i\}$ gives the desired~$T$.
\endproof
\begin{lemma}\label{obs2}
For any given $\varepsilon>0$, $\lim\limits_{T\rightarrow+\infty}\vert\tilde{\Omega}(T,\varepsilon\vert\boldsymbol{\omega})\vert<+\infty$.
\end{lemma}
\proof
By Lemma~\ref{obs1}, there exists a~$T_0$ such that all belief states after that have a distance smaller than~$\epsilon$ to $\Omega(T_0\vert\boldsymbol{\omega})$ and $\lim\limits_{T\rightarrow+\infty}\vert\tilde{\Omega}(T,\varepsilon\vert\boldsymbol{\omega})\vert=\vert\Omega(T_0\vert\boldsymbol{\omega})\vert$ which is finite.
\endproof
By Lemmas~\ref{obs1} and~\ref{obs2}, for a sufficiently large number~$T$ of decision epochs, the belief states can be well approximated by a finite set of belief states within $\tilde{\Omega}(T\vert\boldsymbol{\omega})$. In practice, we have a numerical way of choosing the best~$\epsilon$ and~$T$ inspired by the idea of avoiding overfitting in neural network training. Specifically, we can randomly choose~$0<\epsilon<1$ and~$T\ge1$ at the start and conduct the Monte-Carlo Simulation over a period longer than $T$ to observe the performance. In the next round, we reduce $\epsilon$ by half and observe the performance. We repeatedly refine $\epsilon$ until the performance is no longer improving. Then we do the same procedure by doubling~$T$ in each round. In sum, we alternatively adjust~$\epsilon$ and~$T$ until the performance has no significant improvement and obtain the best~$\epsilon$ and~$T$. The detailed procedure is given in Algorithm~\ref{algor:optimalParams}.

\begin{algorithm}[H]
  \caption{Adaptive selections of $\epsilon$ and $T$}\label{algor:optimalParams}
  \begin{algorithmic}[1]
    \State \textbf{Input:} initial $\epsilon \in (0,1)$, $T \ge 1$, performance tolerance $\delta_{\text{perf}} > 0$, simulation horizon $H > T$.
    \State \textbf{Definition:} let $J(\epsilon,T)$ denote the Monte--Carlo estimate of long-run performance over horizon $H$ under parameters $(\epsilon,T)$.
    \State Compute the baseline performance $J_{\text{best}} \gets J(\epsilon,T)$ \Comment{best performance observed so far}
    \Repeat
      \State Step~1. Refine $\epsilon$ with fixed $T$
      \State $\epsilon_{\text{cand}} \gets \epsilon/2$ \Comment{candidate $\epsilon$}
      \State $J_{\epsilon} \gets J(\epsilon_{\text{cand}}, T)$
      \If{$J_{\epsilon} - J_{\text{best}} > \delta_{\text{perf}}$}
        \State $\epsilon \gets \epsilon_{\text{cand}}$, $J_{\text{best}} \gets J_{\epsilon}$, repeat Step~1.
      \EndIf
      \State Step~2. Refine $T$ with fixed $\epsilon$
      \State $T_{\text{cand}} \gets 2T$ \Comment{candidate $T$}
      \State $J_{T} \gets J(\epsilon, T_{\text{cand}})$
      \If{$J_{T} - J_{\text{best}} > \delta_{\text{perf}}$}
        \State $T \gets T_{\text{cand}}$, $J_{\text{best}} \gets J_{T}$, repeat Step~2
      \EndIf
    \Until{Neither Step~1 nor Step~2 updates $(\epsilon, T)$}
    \State \textbf{Output:} selected $(\epsilon,T)$ as the adaptive choice for this problem instance.
  \end{algorithmic}
\end{algorithm}

\subsection{AG Algorithm for Approximate State Space}\label{sec:ApproxWhittleIndexAlg}
Now we are ready to incorporate the approximate state space $\tilde{\Omega}(T,\varepsilon\vert\boldsymbol{\omega}_{0})$ into the AG algorithms introduced in \citet{nino2001restless} based on Theorem~\ref{thm:PCLWhittle} and Lemmas~\ref{obs1} and~\ref{obs2}. Specifically, for the state update, if the resulting belief state $\boldsymbol{\omega}$ does not fall within $\tilde{\Omega}(T,\varepsilon\vert\boldsymbol{\omega}_{0})$, then we look for the belief state $\boldsymbol{\omega'}$ in $\tilde{\Omega}(T,\varepsilon\vert\boldsymbol{\omega_{0}})$ that has the closest Euclidean distance to $\boldsymbol{\omega}$, and use the Whittle index $\lambda_{\boldsymbol{\omega'}}$ of $\boldsymbol{\omega'}$ as the Whittle index of $\boldsymbol{\omega}$ or adopt a linear interpolation since~$\varepsilon$ is small. The detailed algorithm for computing the approximate Whittle index for our RMAB with general observation models is given below.
\newline
\begin{breakablealgorithm}
	\caption{Approximate Whittle Index Algorithm}
	\begin{algorithmic}[1]
        \Require
    	$\beta\in(0,1), T\ge1, N\ge2, 1\le K<N, H, \varepsilon$
        \Require
    	Initial belief state $\omega^{(n)}(0), \textbf{P}^{(n)}, \textbf{E}^{(n)}, \textbf{R}^{(n)}, n=1,...,N$
		\For{$n=1,..., N$}
		\State Let $\Omega^{(n)}(0,\varepsilon\vert\boldsymbol{\omega}^{(n)}(0))=\{\boldsymbol{\omega}^{(n)}(0)\}$ and $\Omega^{(n)}(-1,\varepsilon\vert\boldsymbol{\omega}^{(n)}(0))=\emptyset$
		\For{$t=1,...,T$}
		\State Let $D_{t-1}=\Omega^{(n)}(t-1,\varepsilon\vert\boldsymbol{\omega}^{(n)}(0))\backslash\Omega^{(n)}(t-2,\varepsilon\vert\boldsymbol{\omega}^{(n)}(0))$
            \State Let $\Omega^{(n)}(t,\varepsilon\vert\boldsymbol{\omega}^{(n)}(0))=\Omega^{(n)}(t-1,\varepsilon\vert\boldsymbol{\omega}^{(n)}(0))$
		\For{$\omega\in D_{t-1}$}
		\For{$h=1,...,H$}
              Compute $\boldsymbol{\omega'}=\mathcal{B}_{h}\boldsymbol{\omega}$
            \For {$\boldsymbol{\omega''}\in\Omega^{(n)}(t,\varepsilon\vert\boldsymbol{\omega}^{(n)}(0))$}
		    \If{$\vert\vert\omega'-\omega''\vert\vert>\varepsilon$}
		    \State Append $\omega'$ to $\Omega^{(n)}(t,\varepsilon\vert\boldsymbol{\omega}^{(n)}(0))$
                \State \textbf{break}
		    \EndIf
		\EndFor
		\EndFor
		\EndFor
            \EndFor
		\State Compute $P_{(n)}^{1},P_{(n)}^{0},\{R_{\boldsymbol{\omega}_{i}}\vert\boldsymbol{\omega}_{i}\in\Omega_{1}^{(n)}\}$
		\State Let $\Omega_{1}^{(n)}=\Omega^{(n)}(T,\varepsilon\vert\boldsymbol{\omega}^{(n)}(0)), k=1,$ compute the sequence $\{A_{\boldsymbol{\omega}_{i}}^{\Omega_{1}^{(n)}}\vert\boldsymbol{\omega}_{i}\in\Omega_{1}^{(n)}\}$
		\State Compute $\gamma_{\boldsymbol{\omega}_{i}}^{\Omega_{1}^{(n)}}=\frac{R_{\boldsymbol{\omega}_{i}}}{A_{\boldsymbol{\omega}_{i}}^{\Omega_{1}^{(n)}}},\boldsymbol{\omega}_{i}\in\Omega_{1}^{(n)};$
		\State Let $\pi_{1}=\arg\max\{\gamma_{\boldsymbol{\omega}_{i}}^{\Omega_{1}^{(n)}}\vert\boldsymbol{\omega}_{i}\in\Omega_{1}^{(n)}\}, \gamma_{\boldsymbol{\omega}_{\pi_{1}}}^{(n)}=\gamma_{\boldsymbol{\omega}_{\pi_{1}}}^{\Omega_{1}^{(n)}}, k=1$
		\While{$\vert\Omega_{k}^{(n)}\vert>1$}
		\State $k=k+1, \Omega_{k}^{(n)}=\Omega_{k-1}^{(n)}\backslash\{\boldsymbol{\omega}_{\pi_{k-1}}\}$
		\State Compute $\gamma_{\boldsymbol{\omega}_{i}}^{\Omega_{k}^{(n)}}=\gamma_{\boldsymbol{\omega}_{i}}^{\Omega_{k-1}^{(n)}}+\left(\frac{A_{\boldsymbol{\omega}_{i}}^{\Omega_{k-1}^{(n)}}}{A_{\boldsymbol{\omega}_{i}}^{\Omega_{k}^{(n)}}}-1\right)(\gamma_{\boldsymbol{\omega}_{i}}^{\Omega_{k-1}^{(n)}}-\gamma_{\boldsymbol{\omega}_{\pi_{k}}}^{\Omega_{k-1}^{(n)}}), \boldsymbol{\omega}_{i}\in\Omega_{k}^{(n)}$
		\State Let $\pi_{k}=\arg\max\{\gamma_{\boldsymbol{\omega}_{i}}^{\Omega_{k}^{(n)}}\vert\boldsymbol{\omega}_{i}\in\Omega_{k}^{(n)}\}, \gamma_{\boldsymbol{\omega}_{\pi_{k}}}^{(n)}=\gamma_{\boldsymbol{\omega}_{\pi_{k}}}^{\Omega_{k}^{(n)}}$
		\EndWhile
            \State $K=k$
            \If{$\gamma_{\boldsymbol{\omega}_{\pi_{1}}}^{(n)}\geq\cdots\geq\gamma_{\boldsymbol{\omega}_{\pi_{K}}}^{(n)}$}
            \State $FAIL^{(n)}:=0$
            \Else
            \State $FAIL^{(n)}:=1$
            \EndIf
		\EndFor
        \If{For any $1\leq n\leq N, FAIL^{(n)}=0$}
        \State $FAIL:=0$
        \Else
        \State $FAIL:=1$
        \EndIf
         \State \Return $\Omega^{(n)}(T,\varepsilon\vert\boldsymbol{\omega}^{(n)}(0)),\{\gamma_{\boldsymbol{\omega}}^{(n)}\vert\boldsymbol{\omega}\in\Omega_{1}^{(n)}\}, FAIL$, where $1\leq n\leq N$
	\end{algorithmic}
\end{breakablealgorithm}

\vspace{1em}
\noindent\textbf{Remark.} In the adaptive-greedy part of this algorithm, we need to input the parameter $\boldsymbol{A}$ of every arm. For the approximate state space, we can solve for all parameters $A_{\boldsymbol{\omega}}^{\Omega}$ using the dynamic programming equations of $T_{\boldsymbol{\omega}}^{\Omega}$. The output FAIL value is used to judge whether the problem is PCL-indexable w.r.t. $\tilde{\Omega}$. If FAIL=0, the problem is PCL-indexable, otherwise not.  If the approximate state space over any finite horizon fails the AG algorithm, then we can conclude that the original RMAB with infinite states does not satisfy PCL-indexability since the state combination subset violates the partial conservation law.

\subsection{Numerical Results}\label{sec:NumericalResults}


In this subsection, we present the robust performance of our algorithm through extensive numerical simulations. To approximate the state space, we set the maximum number of iterations to~6 and the error limit~$\varepsilon$ to $10^{-3}$.  We expanded our simulation framework to evaluate systems of varying scales, ranging from 3x3 to 6x6 configurations (for the transition matrix of arm states). For each arm state size, we conducted simulations with 30, 40, and 50 arms, while maintaining the arm selection and system parameters adhered to the PCL-indexability. Our primary focus is on comparing the unit-time-return performance of the PCL/Whittle index policy against a myopic policy over 200 decision epochs, utilizing 10,000 Monte Carlo simulation experiments for each configuration.  The performance gap, quantified as the percentage of improvement from the Whittle index policy over the myopic policy, is summarized in Table~\ref{tbl1}.  As shown in Fig.~\ref{fig:3x3_system}-\ref{fig:6x6_system}, the PCL/Whittle index policy consistently achieves significantly superior performance compared to the myopic policy.

\begin{table}[htbp]
\caption{Performance Improvement of Whittle Index Policy over Myopic Policy}\label{tbl1}
\footnotesize 
\begin{tabular*}{\textwidth}{@{\extracolsep{\fill}}ccccc@{}}
\toprule
Arm State Size & Number of Arms & Myopic Policy's Average Reward & Whittle Index Policy's Average Reward & Performance Gain \\
\midrule
3x3 & 30 & 2.3618 & 2.4591 & 4.12\% \\
3x3 & 40 & 2.3740 & 2.5889 & 9.05\% \\
3x3 & 50 & 2.3587 & 2.4784 & 5.07\% \\
\midrule
4x4 & 30 & 2.5201 & 2.7764 & 10.17\% \\
4x4 & 40 & 2.6595 & 2.7900 & 4.91\% \\
4x4 & 50 & 2.4355 & 2.7620 & 13.41\% \\
\midrule
5x5 & 30 & 2.9509 & 3.1787 & 7.72\% \\
5x5 & 40 & 2.4096 & 2.5853 & 7.29\% \\
5x5 & 50 & 2.3188 & 2.5419 & 9.62\% \\
\midrule
6x6 & 30 & 2.2340 & 2.4113 & 7.93\% \\
6x6 & 40 & 2.2554 & 2.4409 & 8.22\% \\
6x6 & 50 & 2.2977 & 2.4729 & 7.63\% \\
\bottomrule
\end{tabular*}
\end{table}

\begin{figure}[htbp]
    \centering
    \subfigure[30 arms]{\includegraphics[width=0.3\textwidth]{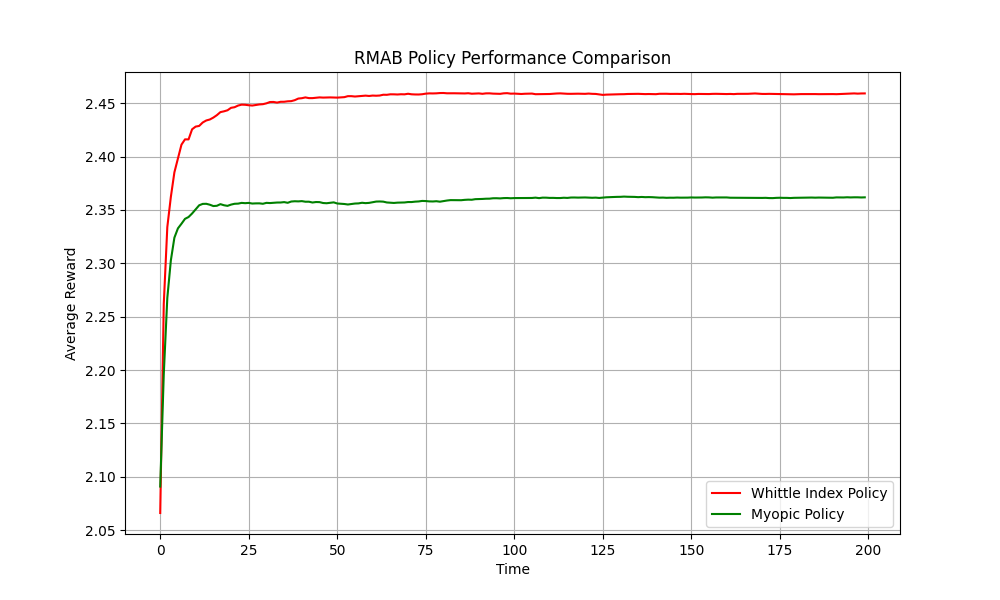}\label{fig:3x3_30arms}}
    \subfigure[40 arms]{\includegraphics[width=0.3\textwidth]{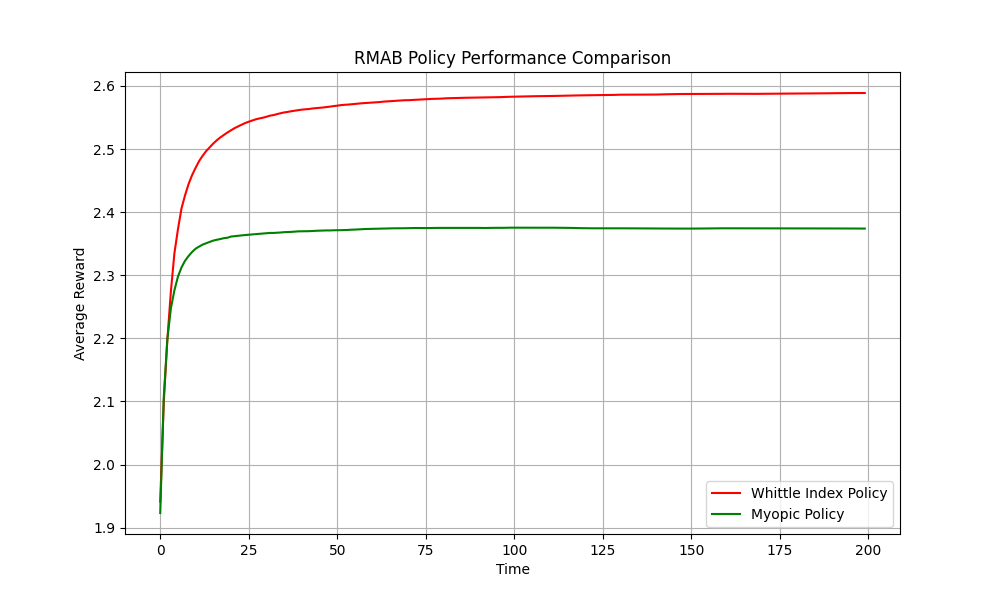}\label{fig:3x3_40arms}}
    \subfigure[50 arms]{\includegraphics[width=0.3\textwidth]{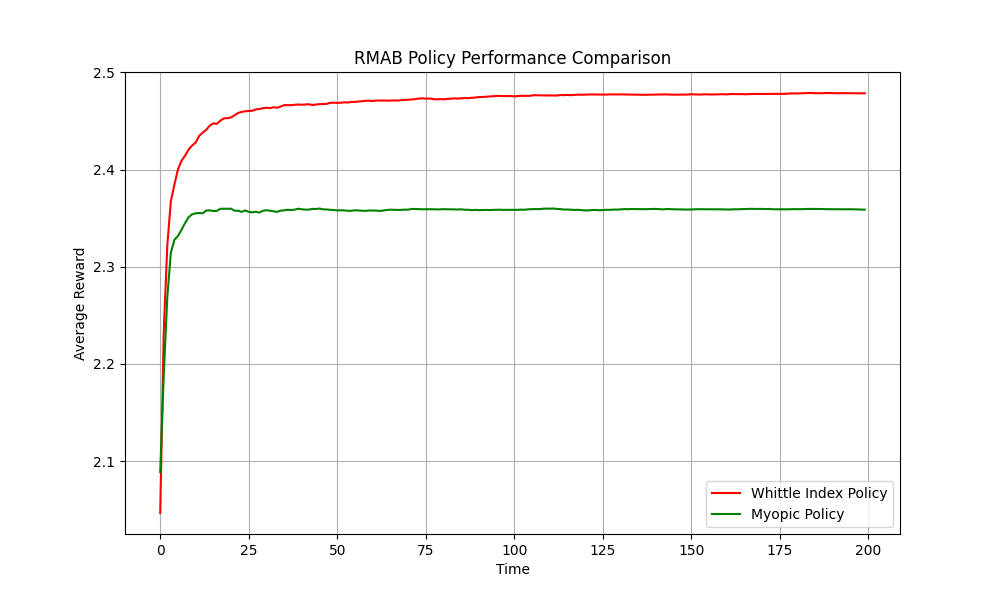}\label{fig:3x3_50arms}}
    \caption{Performance comparison for 3x3 system with varying number of arms}
    \label{fig:3x3_system}
\end{figure}

\begin{figure}[htbp]
    \centering
    \subfigure[30 arms]{\includegraphics[width=0.3\textwidth]{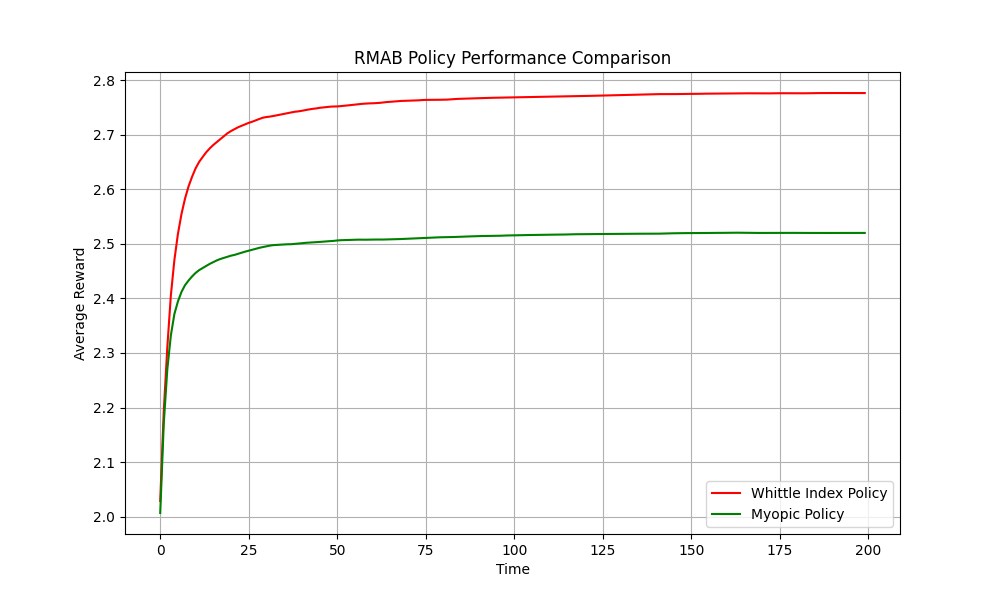}\label{fig:4x4_30arms}}
    \subfigure[40 arms]{\includegraphics[width=0.3\textwidth]{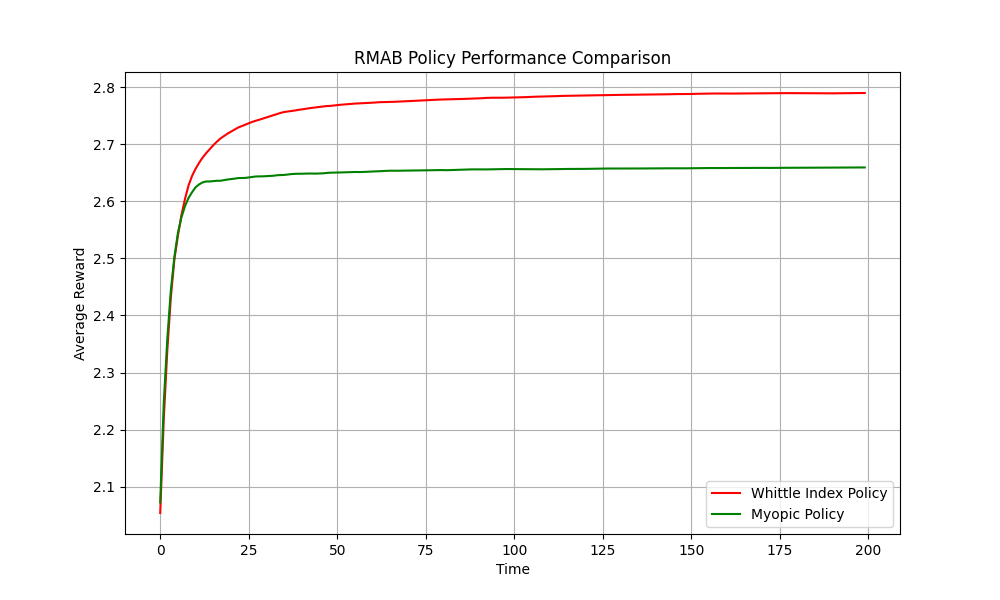}\label{fig:4x4_40arms}}
    \subfigure[50 arms]{\includegraphics[width=0.3\textwidth]{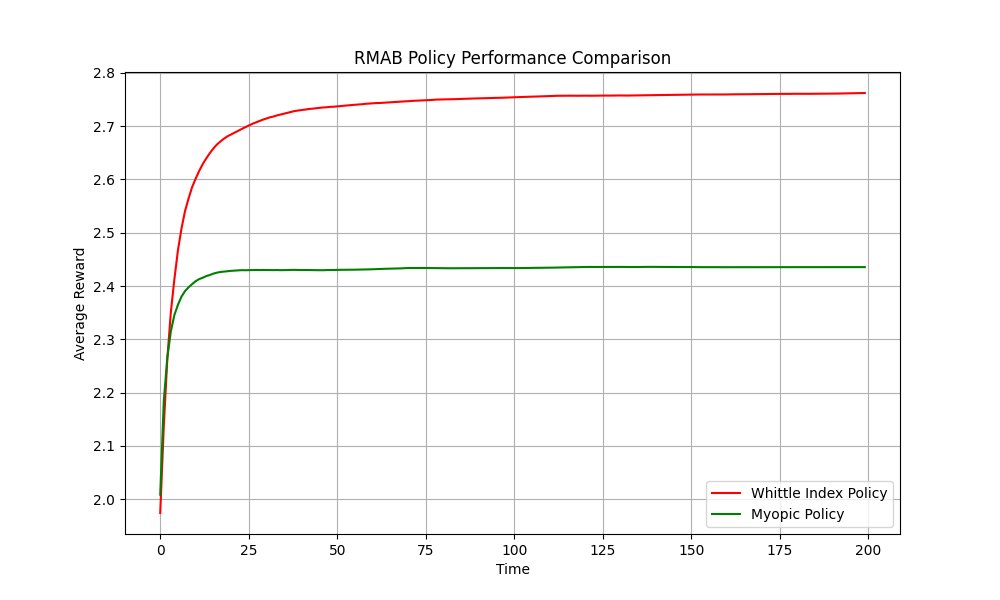}\label{fig:4x4_50arms}}
    \caption{Performance comparison for 4x4 system with varying number of arms}
    \label{fig:4x4_system}
\end{figure}

\begin{figure}[htbp]
    \centering
    \subfigure[30 arms]{\includegraphics[width=0.3\textwidth]{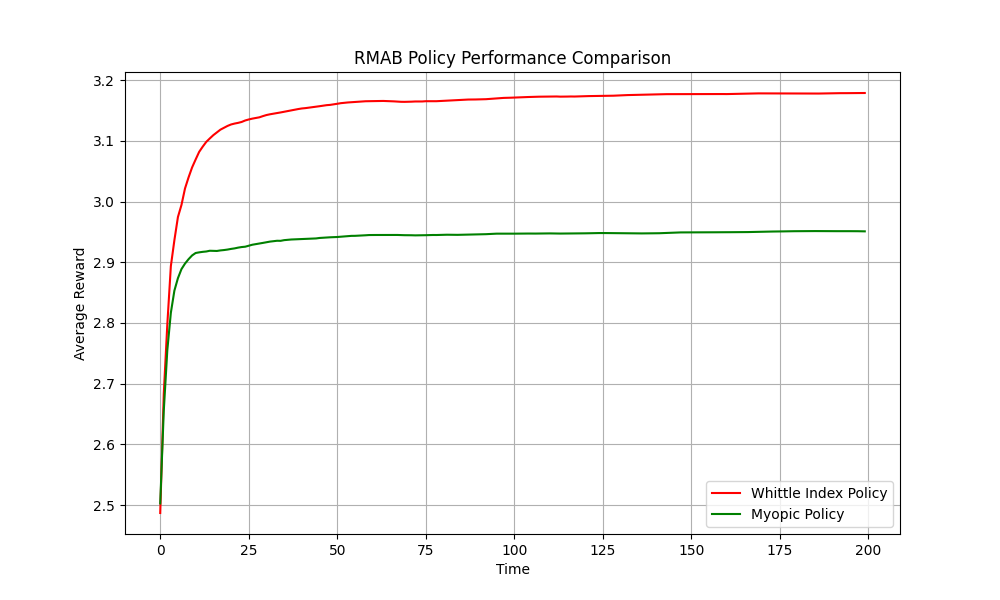}\label{fig:5x5_30arms}}
    \subfigure[40 arms]{\includegraphics[width=0.3\textwidth]{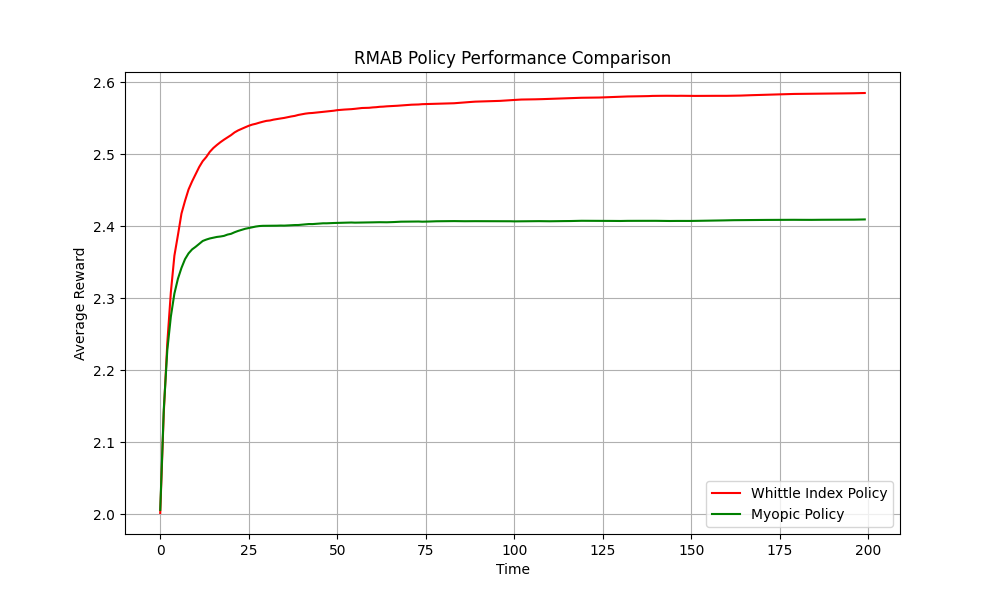}\label{fig:5x5_40arms}}
    \subfigure[50 arms]{\includegraphics[width=0.3\textwidth]{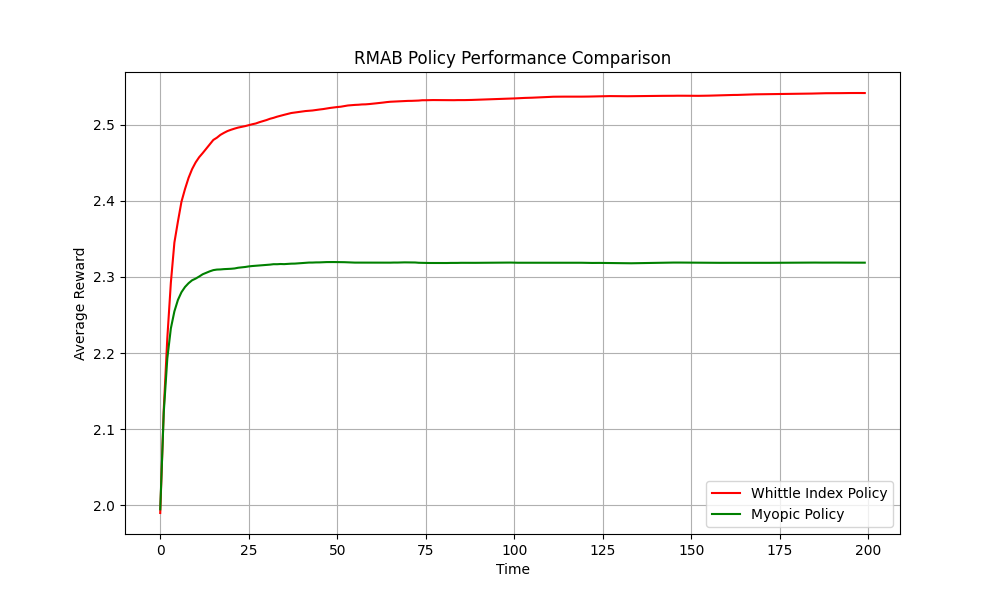}\label{fig:5x5_50arms}}
    \caption{Performance comparison for 5x5 system with varying number of arms}
    \label{fig:5x5_system}
\end{figure}

\begin{figure}[htbp]
    \centering
    \subfigure[30 arms]{\includegraphics[width=0.3\textwidth]{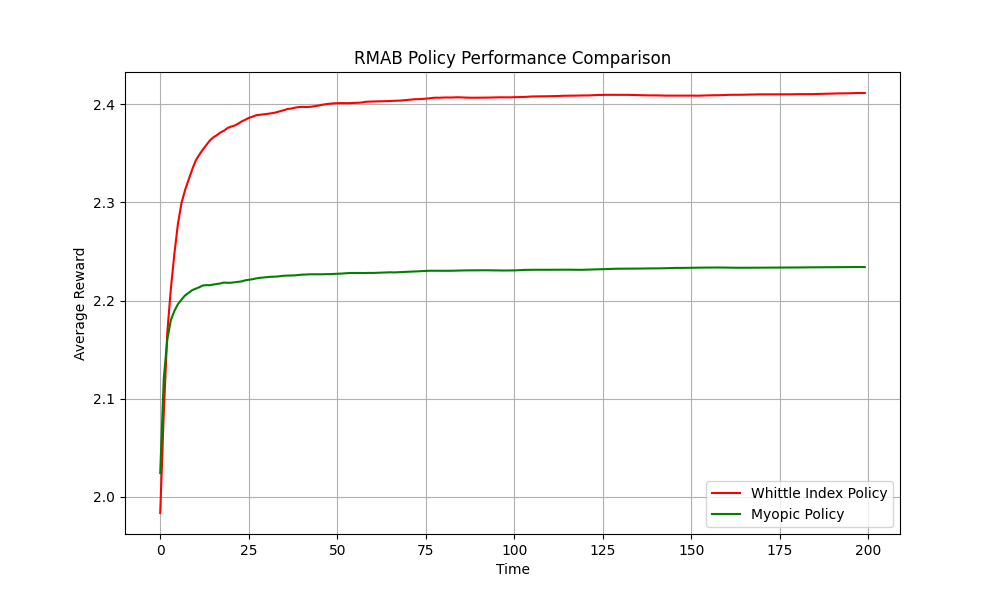}\label{fig:6x6_30arms}}
    \subfigure[40 arms]{\includegraphics[width=0.3\textwidth]{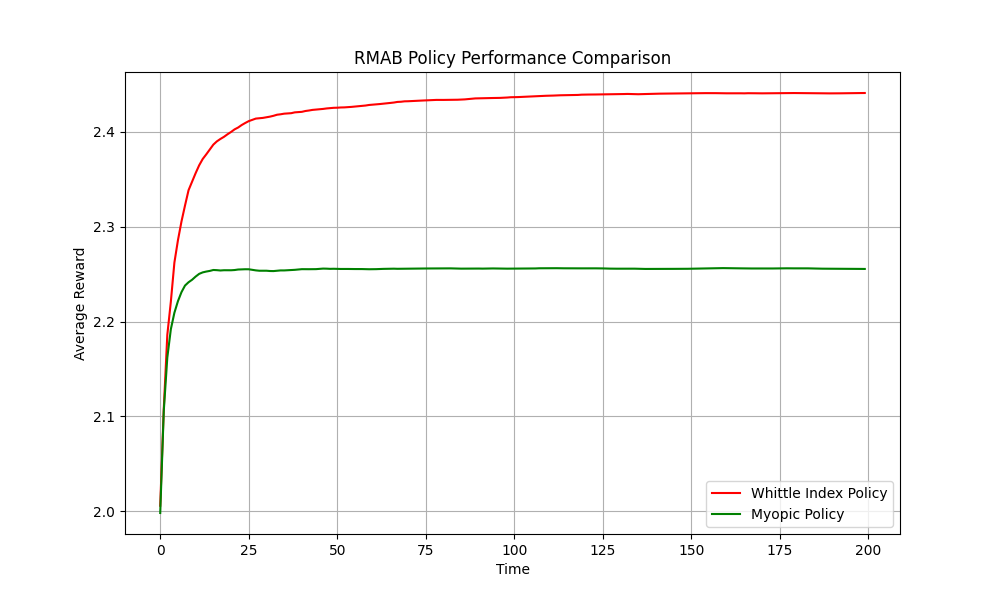}\label{fig:6x6_40arms}}
    \subfigure[50 arms]{\includegraphics[width=0.3\textwidth]{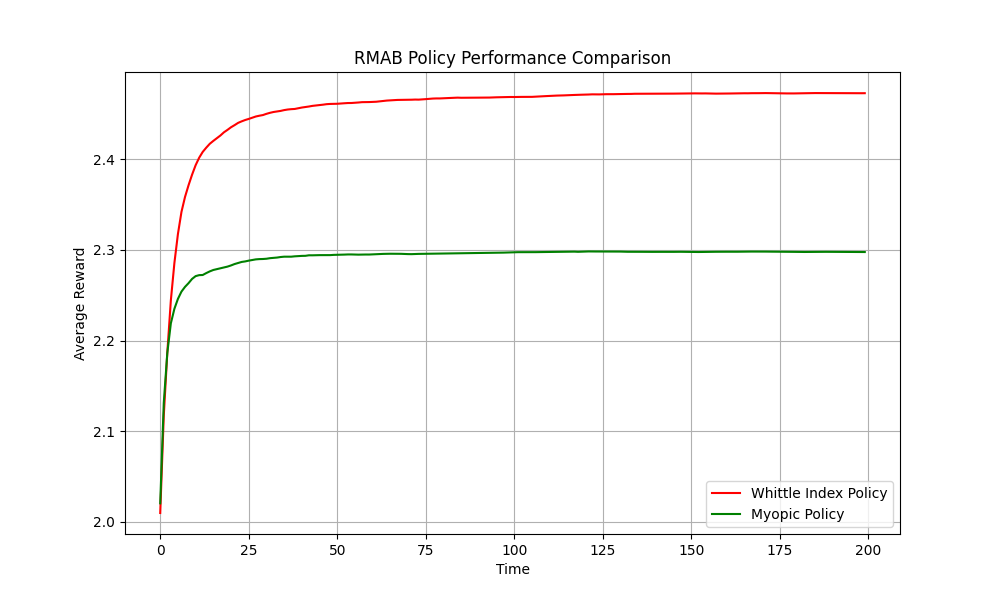}\label{fig:6x6_50arms}}
    \caption{Performance comparison for 6x6 system with varying number of arms}
    \label{fig:6x6_system}
\end{figure}

Next, we compare our Whittle index policy with more baselines to demonstrate the strong performance of the former.

\noindent\emph{(a.1) Rollout and VFA baselines:}
We now compare our Whittle index policy with (i) a one-step rollout policy based on the myopic baseline [1], and (ii) value-function approximation policies (QMDP and FIB)~[2], across four discount factors $\beta\in\{0.7,0.8,0.9,0.9999\}$.
Figure~\ref{fig:rollout-vfa} plots the average reward over time for all five policies.

\begin{figure}[h!]
  \centering
  \includegraphics[width=0.48\textwidth]{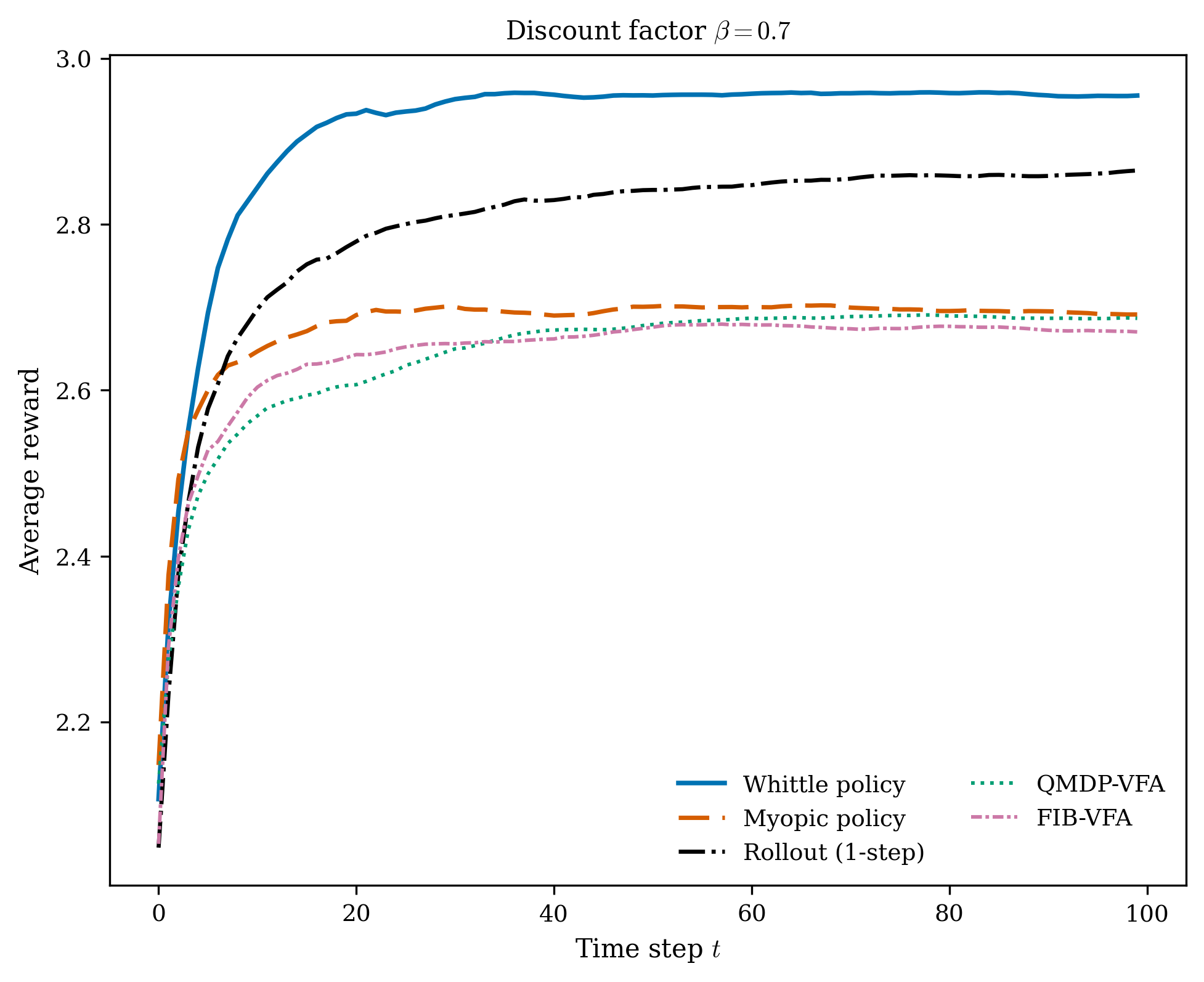}%
  \hfill
  \includegraphics[width=0.48\textwidth]{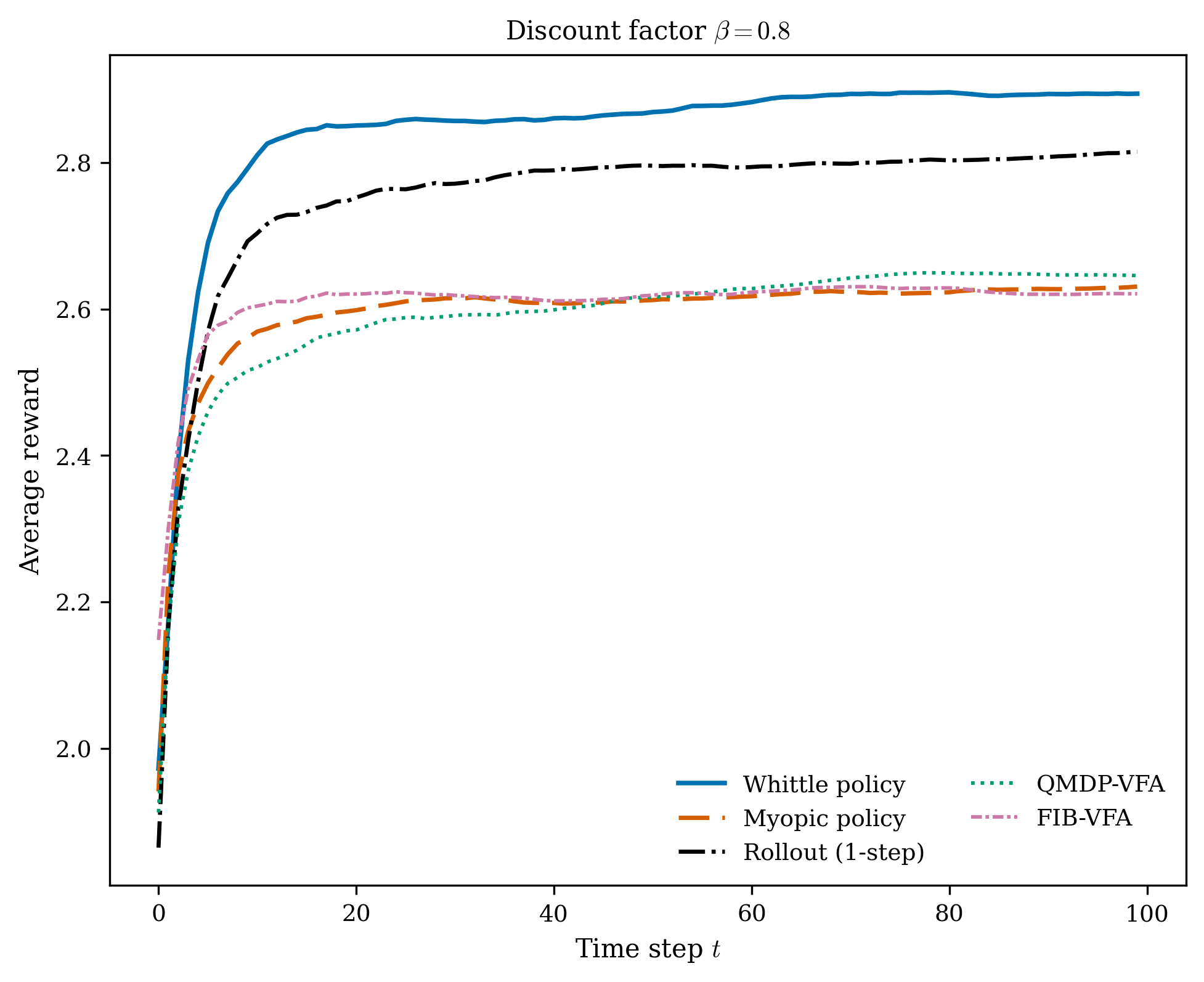}\\[0.4em]
  \includegraphics[width=0.48\textwidth]{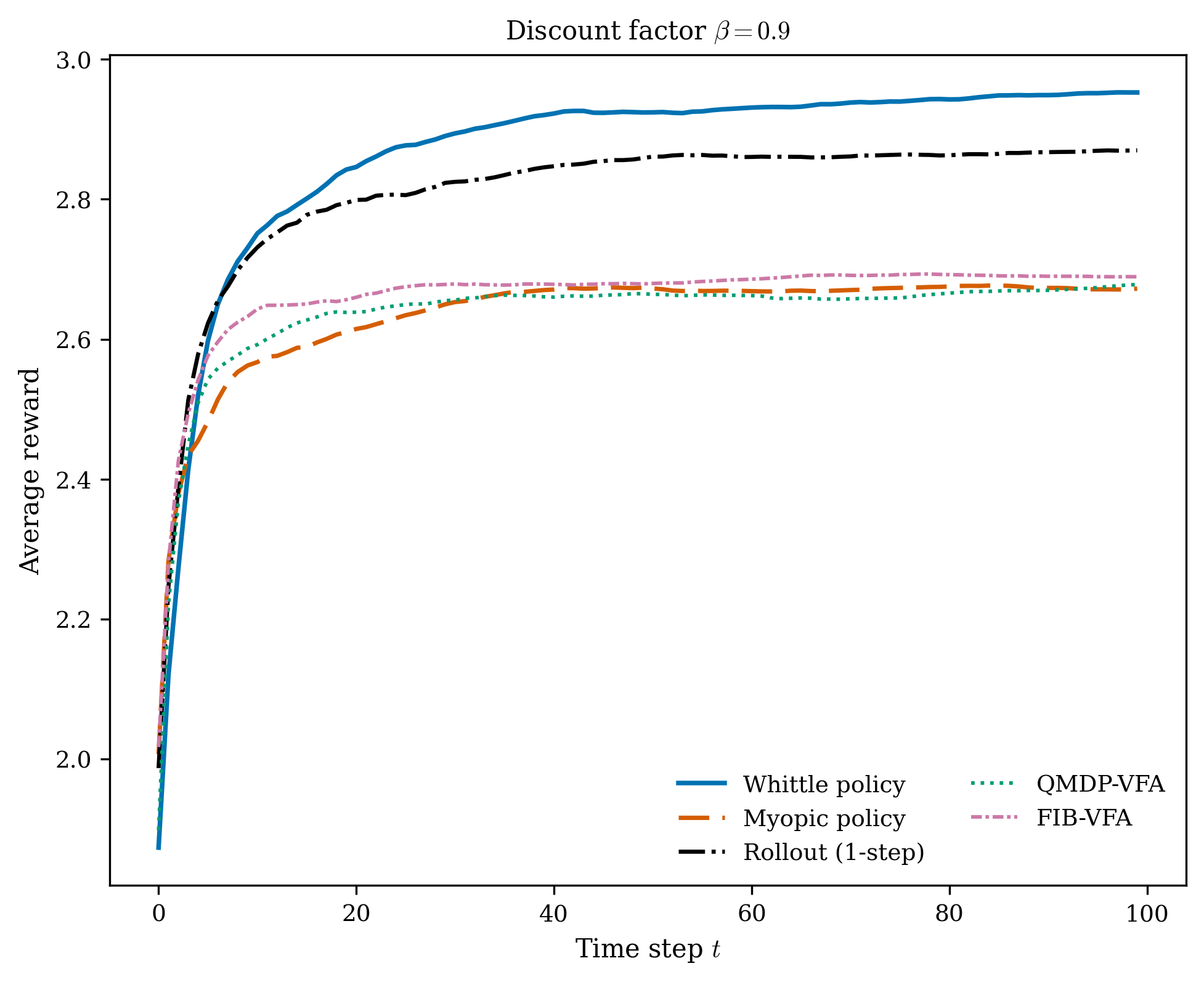}%
  \hfill
  \includegraphics[width=0.48\textwidth]{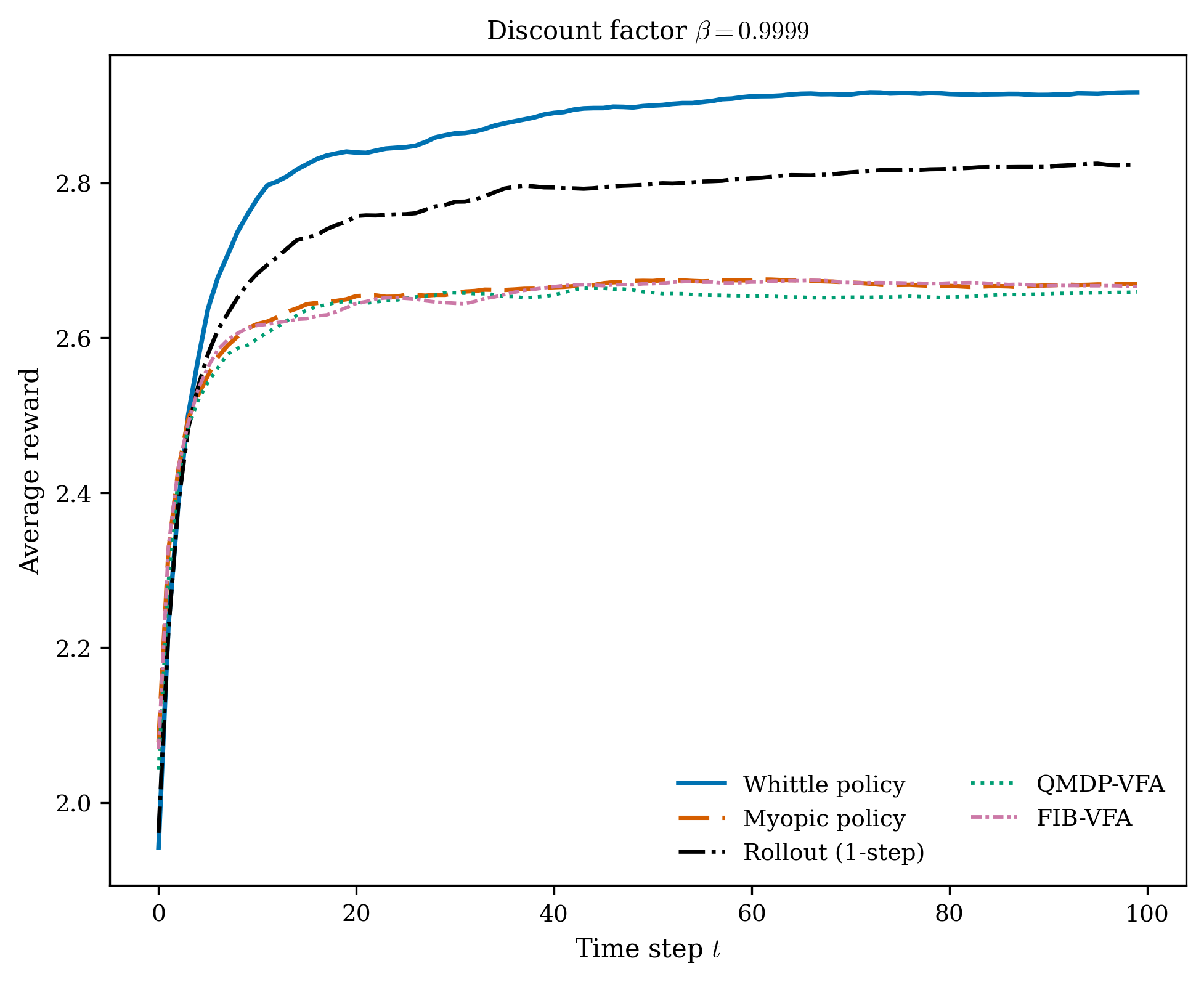}
  \caption{Performance of Whittle index, myopic, rollout, QMDP-VFA, and FIB-VFA policies.}
  \label{fig:rollout-vfa}
\end{figure}

Across all four discount factors, the Whittle index policy consistently outperforms the rollout baseline.
The two VFA policies closely track the myopic policy in both performance and runtime (see Figure~\ref{fig:runtime-vs-N}); their curves are almost indistinguishable from myopic in Figure~\ref{fig:rollout-vfa}.
Rollout is far more computationally expensive than Whittle, without providing a clear performance advantage,and the two VFA baselines perform nearly identically to the myopic policy, we therefore omit them from the main (large-scale) experiments to maintain figure clarity. The myopic policy alone serves as a consolidated proxy for these simple baselines.

\smallskip
\noindent\emph{Runtime versus number of arms and time horizon:}
On the experiment above, we measure the computation time per simulation run as a function of number of arms~$N$ or time horizon~$T$, for all five policies (myopic, Whittle, rollout, QMDP-VFA, and FIB-VFA).
Figure~\ref{fig:runtime-vs-N} shows the resulting curves with a logarithmic $y$-axis.

\begin{figure}[h!]
  \centering
  \includegraphics[width=0.48\textwidth]{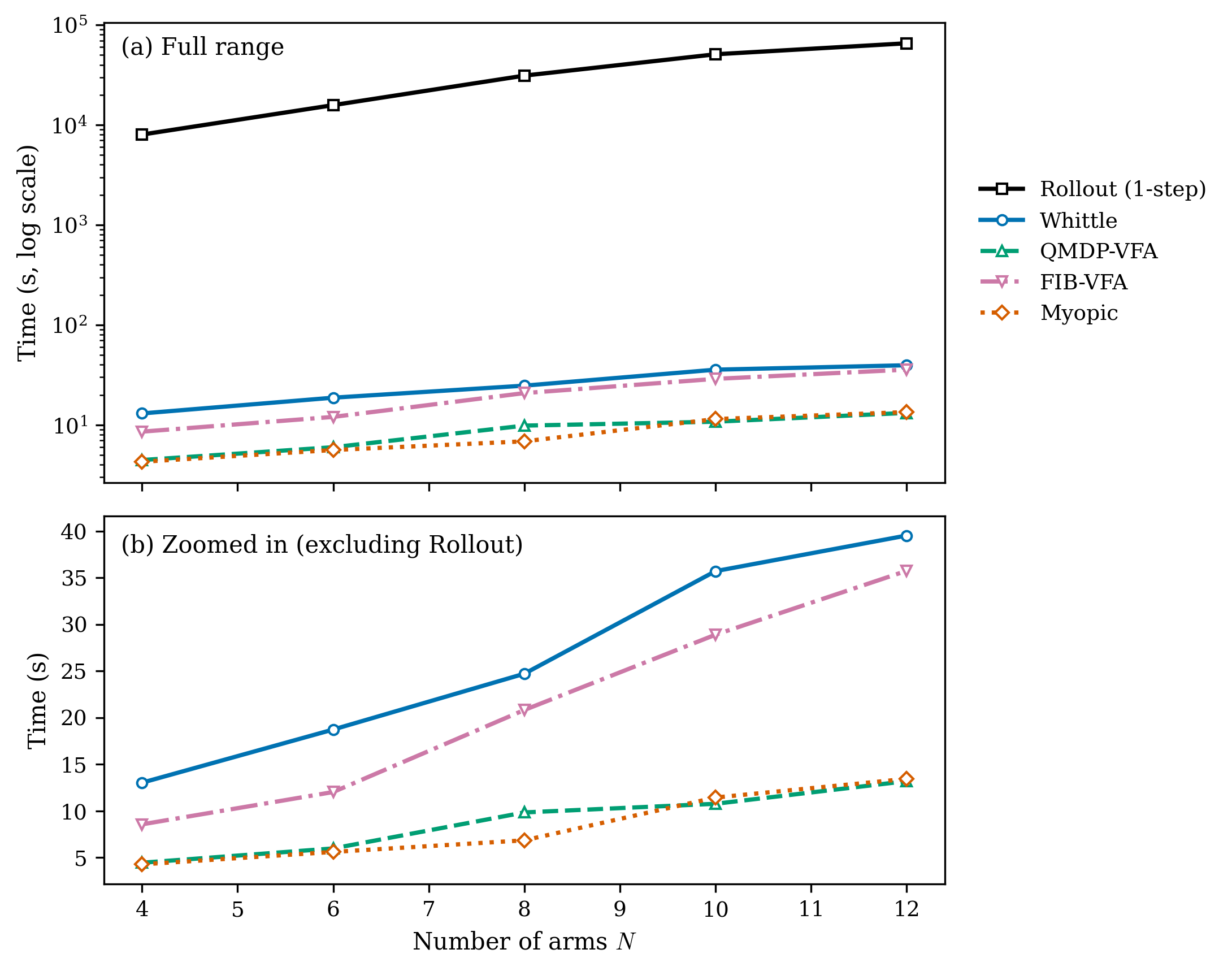}
  \hfill
  \includegraphics[width=0.48\linewidth]{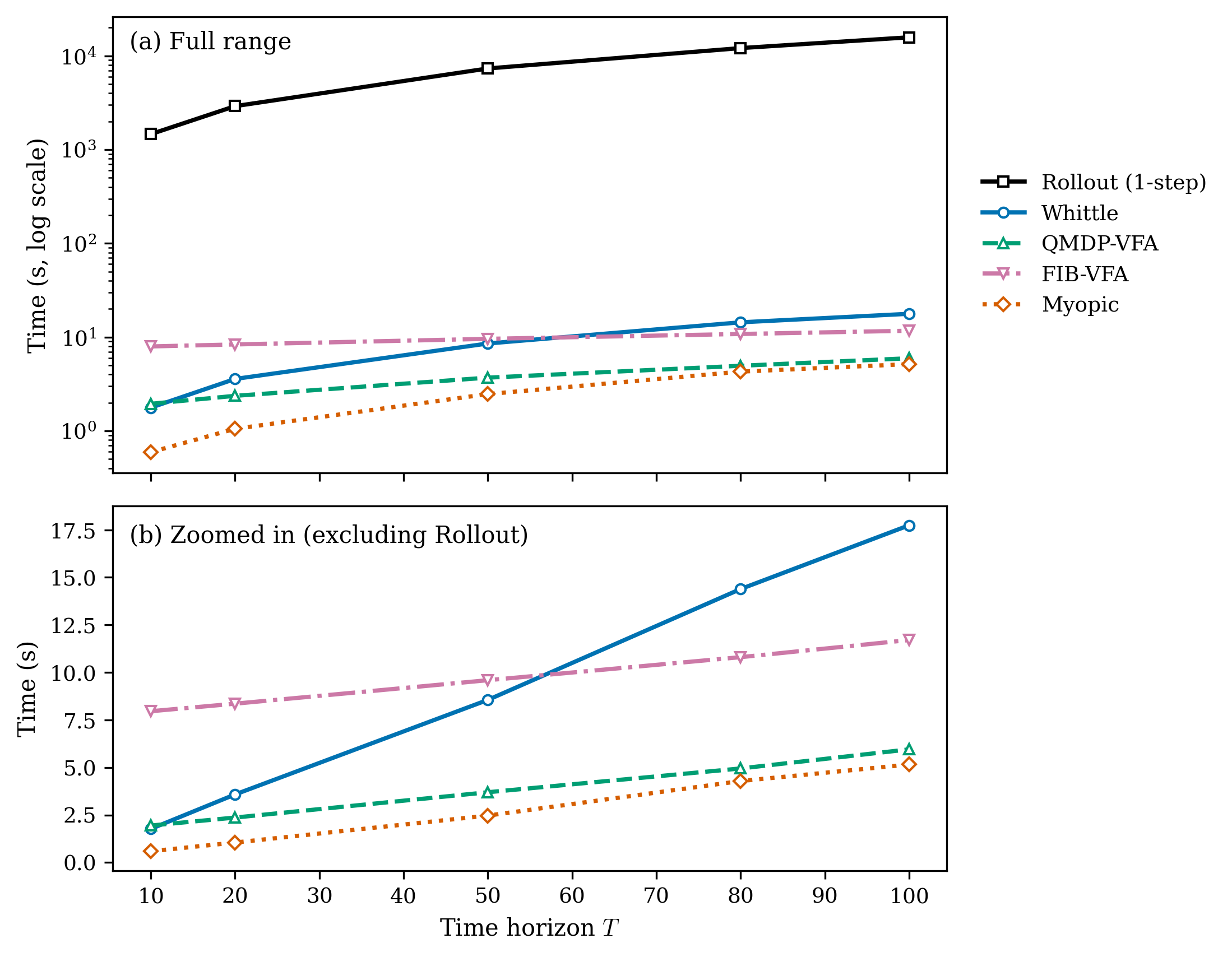}
  \caption{Computation time versus $N$ and $T$.}
  \label{fig:runtime-vs-N}
\end{figure}

The myopic, Whittle index, and VFA policies all exhibit roughly linear growth in~$N$ and~$T$, whereas the rollout baseline becomes several orders of magnitude slower as~$N$ or~$T$ increases.
This confirms that our  Whittle index computation scales well in terms of~$N$ and~$T$, while rollout is not practical beyond small instances.

\noindent\emph{(a.2) Exact dynamic programming on tiny instances:}
To compare against the exact dynamic-programming, we consider a tiny two-state, four-arm instance with time horizon~$7$, for which dynamic programming is feasible.
We generate four independent instances, for each one, compute both the optimal (dynamic-programming) policy and the Whittle policy.
Figure~\ref{fig:optimal-small} plots the average discounted reward over time for the two policies, based on $500$ Monte--Carlo runs per instance.

\begin{figure}[h!]
  \centering
  \includegraphics[width=0.48\textwidth]{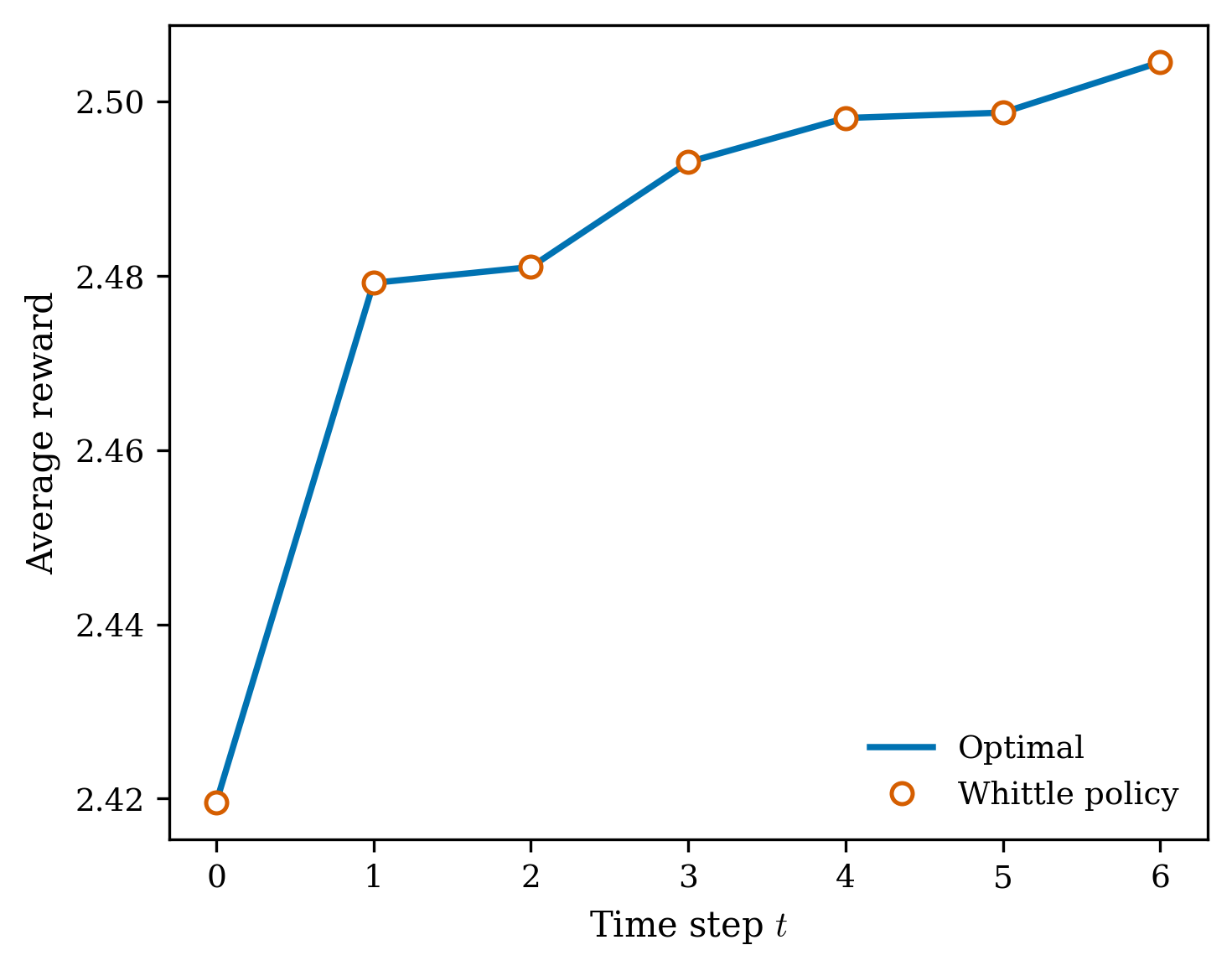}%
  \hfill
  \includegraphics[width=0.48\textwidth]{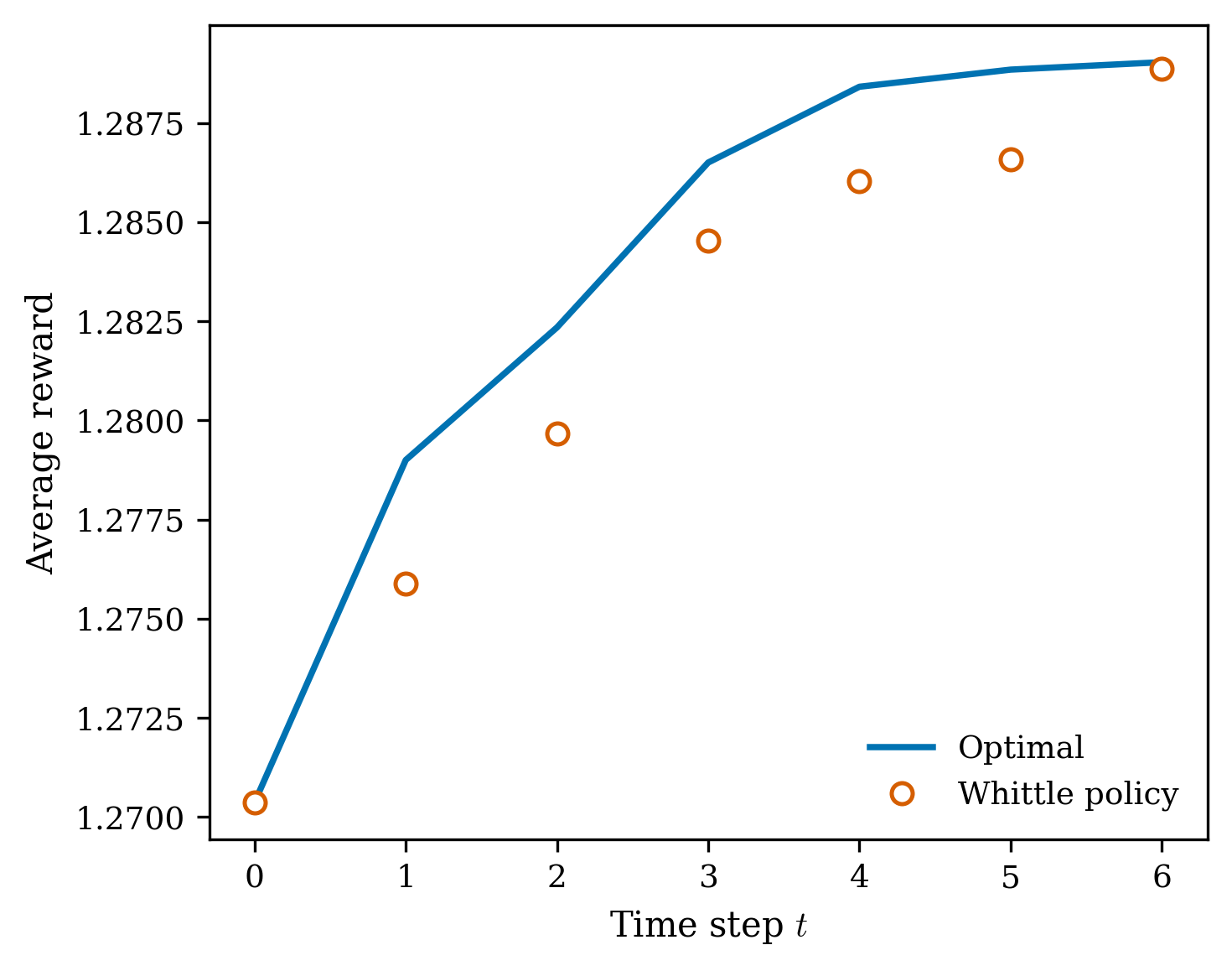}\\[0.4em]
  \includegraphics[width=0.48\textwidth]{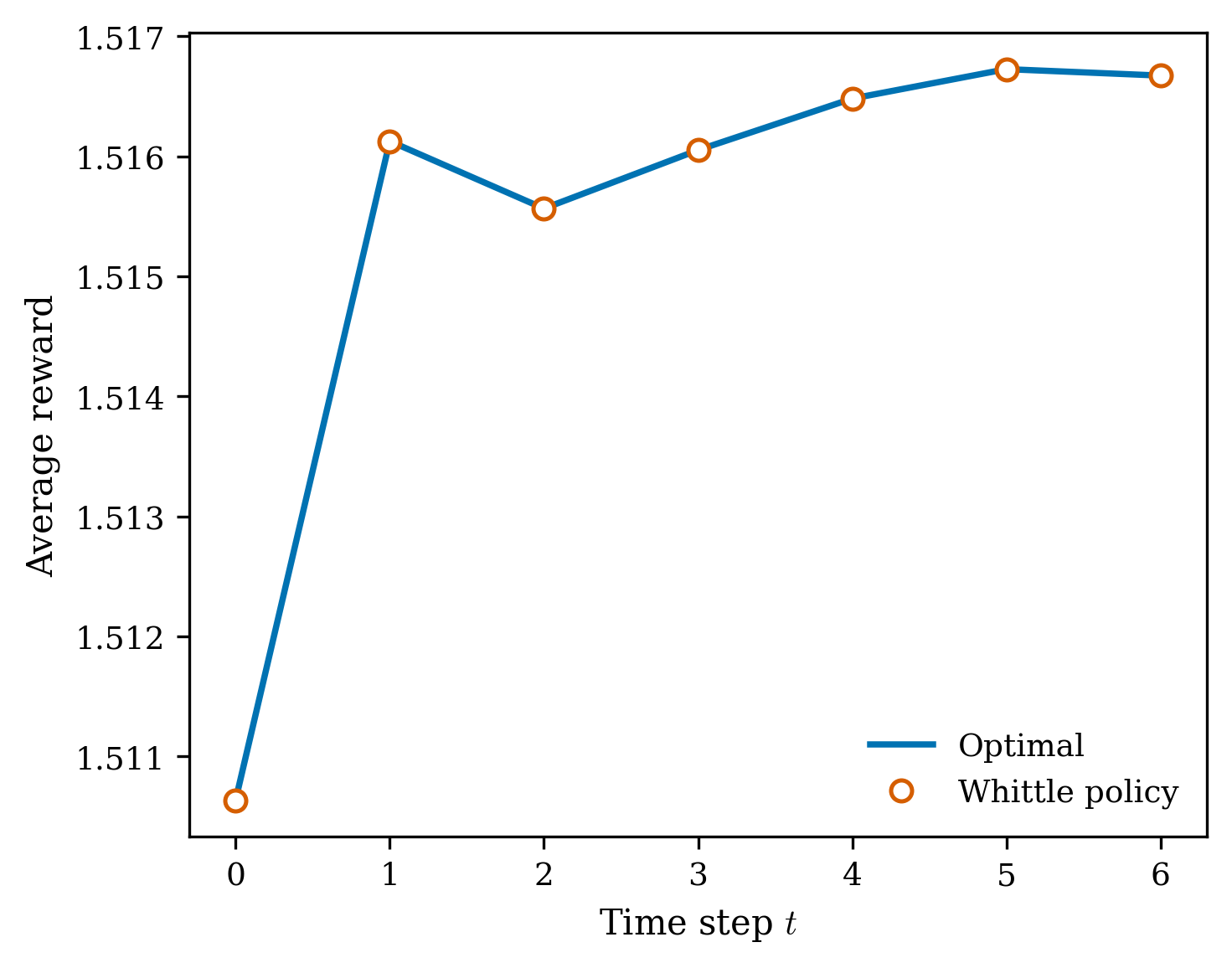}%
  \hfill
  \includegraphics[width=0.48\textwidth]{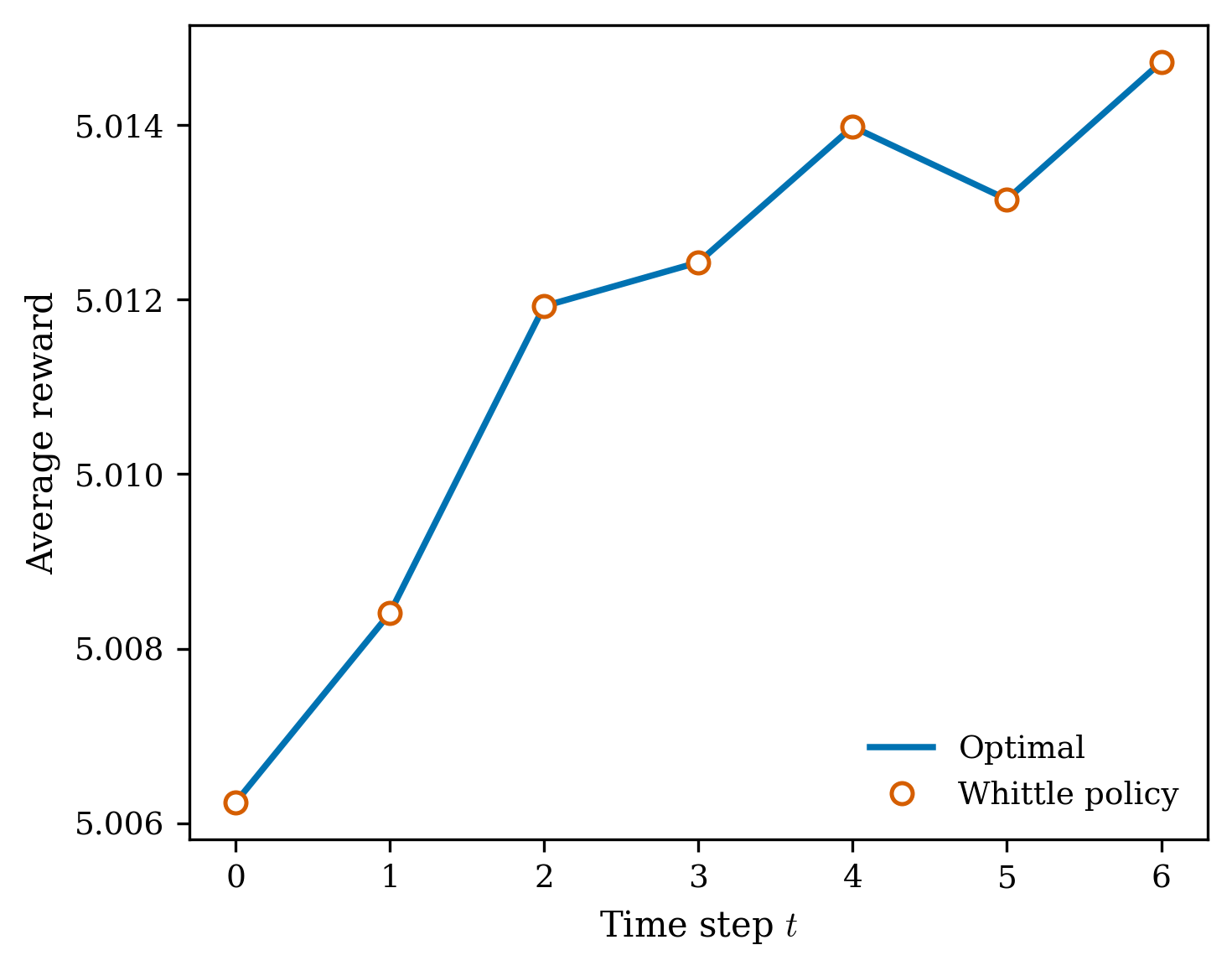}
  \caption{Performance of the optimal and Whittle index policies.}
  \label{fig:optimal-small}
\end{figure}

On three of the four instances (panels~1, 3, and~4), the Whittle and optimal curves coincide at all time steps up to numerical precision.
On the remaining instance (panel~2), the Whittle policy starts slightly below the optimal policy in the very early steps (with a maximum percentage gap of about $0.24\%$ in average reward), but quickly catches up; by the final time step the Whittle policy lags the optimal one by only $0.01\%$.

Thus, whenever dynamic programming is computationally feasible, the Whittle policy essentially recovers the optimal performance trajectory.

However, computing the exact dynamic-programming policy is already very expensive on this tiny example:
with 500 Monte--Carlo runs, the average runtime is $61.58$\,s and the peak resident set size (RSS) is $1768.40$\,MB.
By contrast, our Whittle policy on the larger three-state, six-arm benchmark runs in a fraction of a second per arm and uses under $100$\,MB of memory (see part~(b) below).
This confirms that dynamic programming is useful as a sanity check on small instances, but it cannot serve as a scalable baseline for large systems.

\noindent\emph{(b) Runtime and memory}

\smallskip
\noindent\emph{Memory usage:}
We monitor peak RSS on a three-state, six-arm instance with time horizon $100$ and $500$ Monte--Carlo runs.
Table~\ref{tab:memory-main} reports the measured peak RSS for each policy. All five policies have essentially the same peak memory footprint.

\begin{table}[h!]
  \centering
  \caption{Peak resident set size.}
  \label{tab:memory-main}
  \begin{tabular}{lc}
    \toprule
    Policy            & Peak RSS (MB) \\
    \midrule
    Myopic            & 92.75 \\
    Whittle           & 93.14 \\
    Rollout (1-step)  & 93.72 \\
    QMDP-VFA          & 94.05 \\
    FIB-VFA           & 94.10 \\
    \bottomrule
  \end{tabular}
\end{table}

\smallskip
\noindent\emph{Substantiate scalability:}
We consider larger instances with $K\times K$ transition and observation matrices for $K\in\{6,7,8,9\}$ and 100 arms.
For each $K$, we run $1000$ Monte--Carlo simulations and compare the Whittle and myopic policies.
Figure~\ref{fig:large-state} shows the average reward over time for each instance, and Table~\ref{tab:large-state} summarises the final performance gap and the runtime. Across these larger state spaces, the Whittle index policy consistently improves over the myopic policy by $6$--$9\%$ in the long-run reward.The runtime increase with $K$, as expected.

\begin{figure}[h!]
  \centering
  \includegraphics[width=0.48\textwidth]{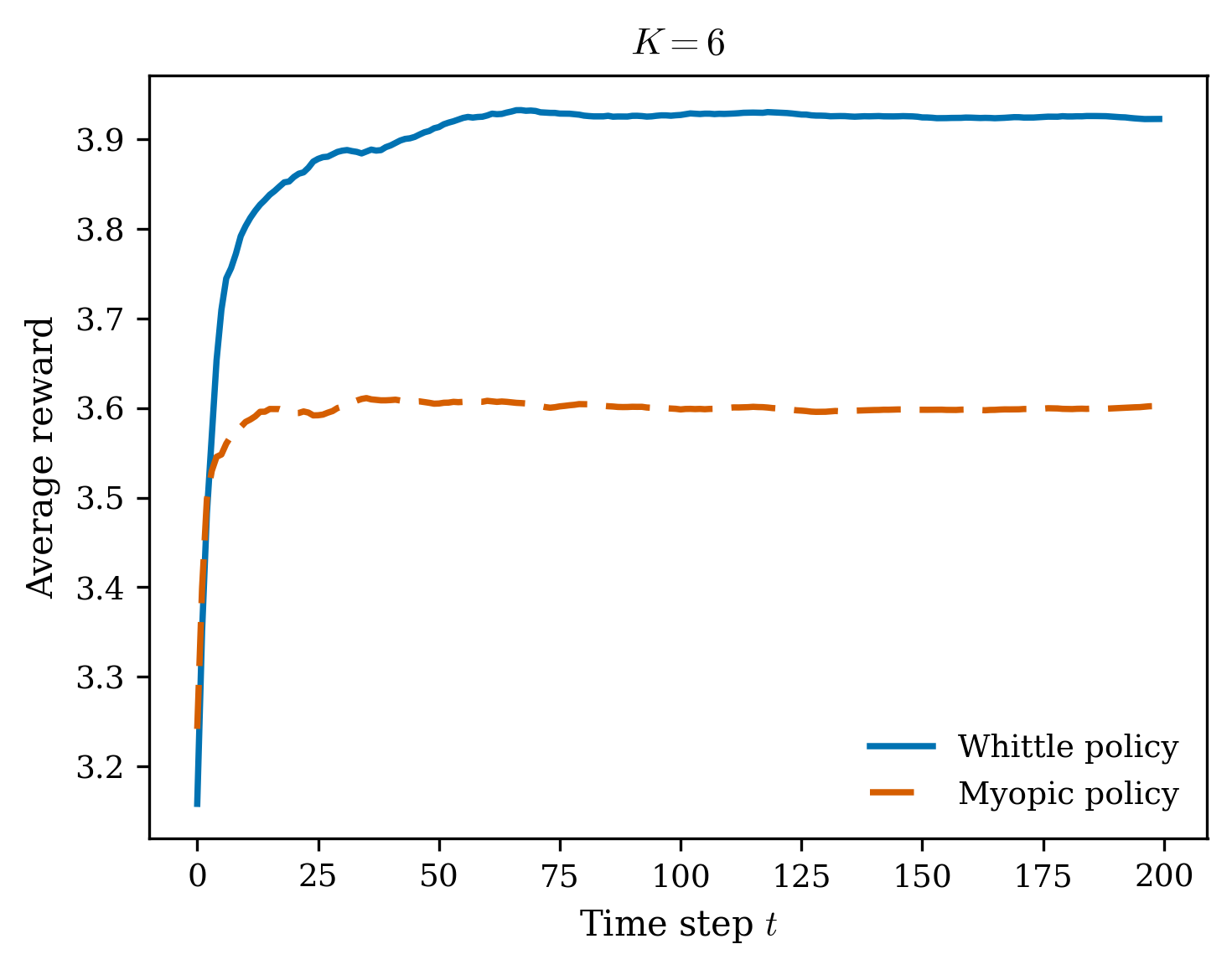}%
  \hfill
  \includegraphics[width=0.48\textwidth]{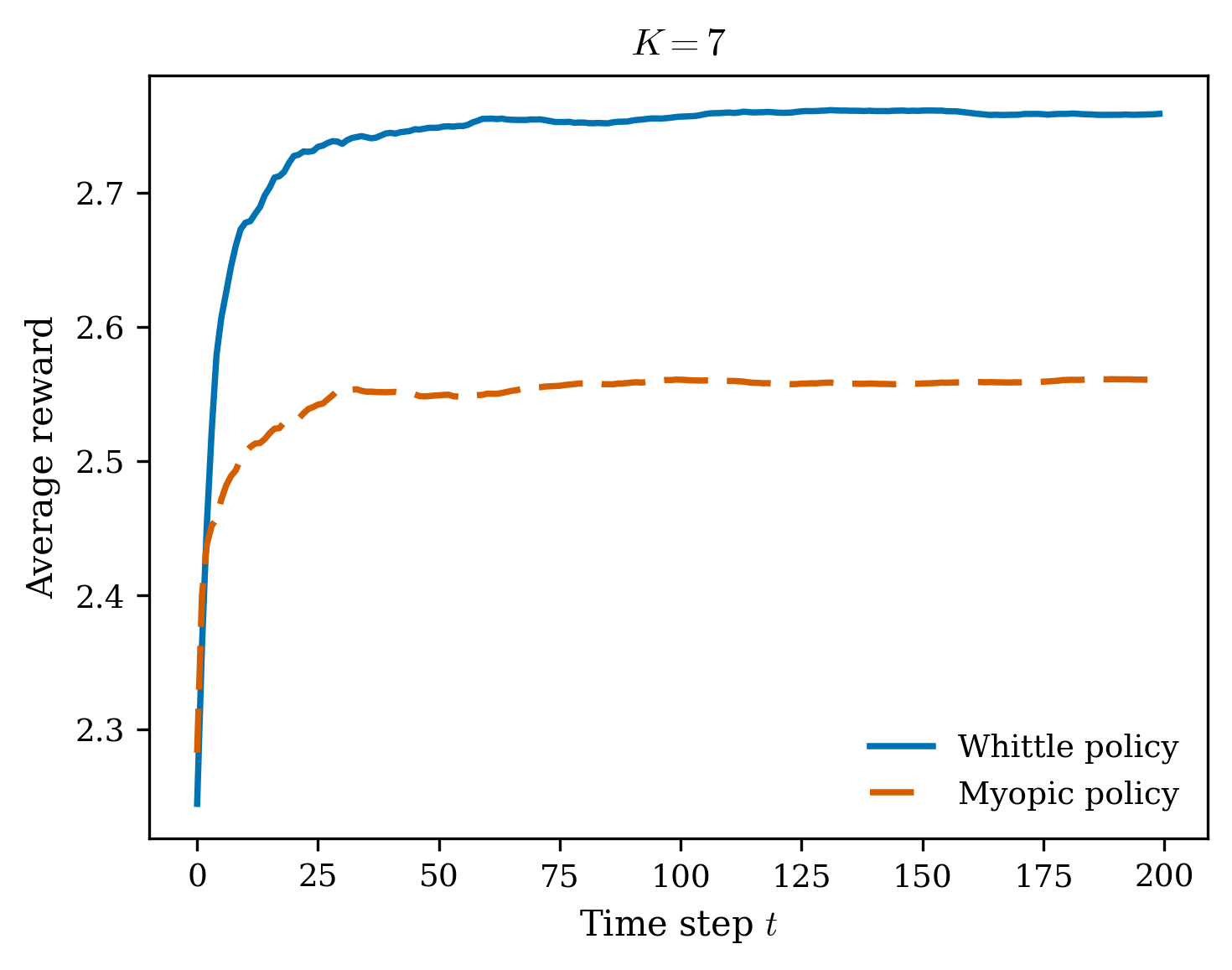}\\[0.4em]
  \includegraphics[width=0.48\textwidth]{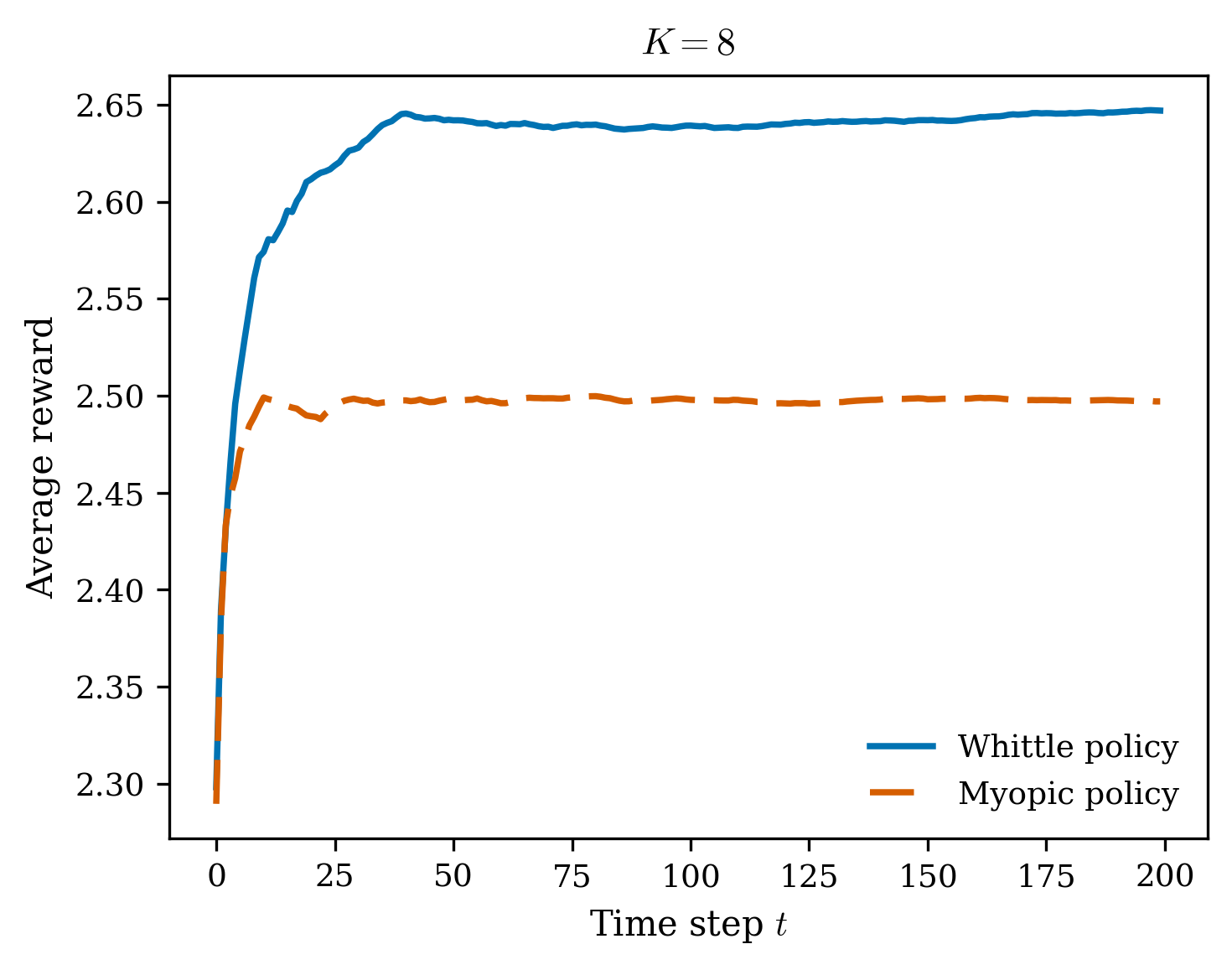}%
  \hfill
  \includegraphics[width=0.48\textwidth]{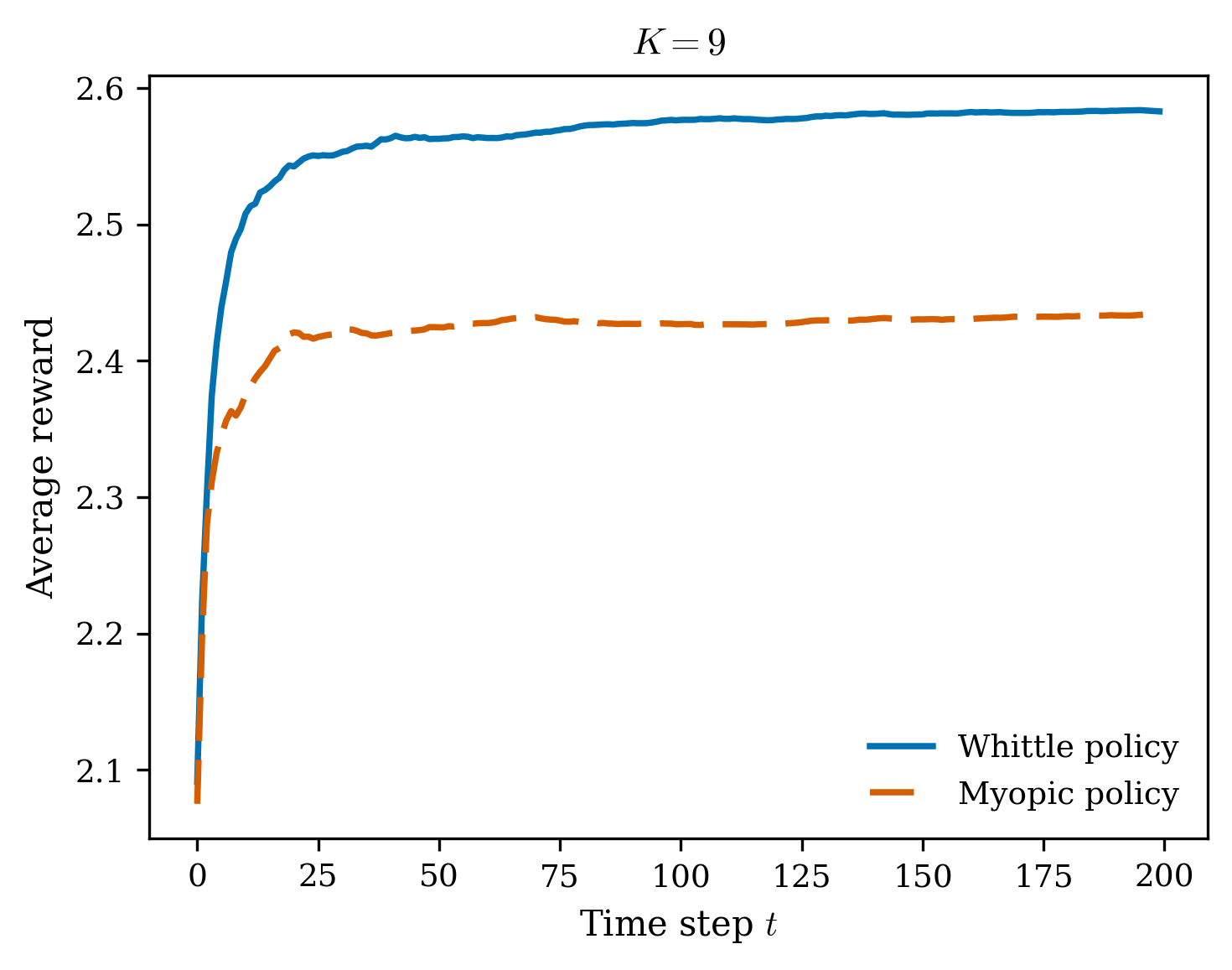}
  \caption{Performance of Whittle index and myopic policies on larger $K\times K$ instances.}
  \label{fig:large-state}
\end{figure}

\begin{table}[h!]
  \centering
  \caption{Performance gap and runtime on larger $K\times K$ instances.}
  \label{tab:large-state}
  \begin{tabular}{ccc}
    \toprule
    State dimension $K$ & Whittle improvement over myopic (\%) & runtime (s) \\
    \midrule
     6 & 8.90 & 1820.96 \\
     7 & 7.75 & 2139.01 \\
     8 & 6.01 & 2862.12 \\
     9 & 6.12 & 2993.83 \\
    \bottomrule
  \end{tabular}
\end{table}

Now we study the performance of our Whittle index policy under four observation-error structures (symmetric, state-independent, asymmetric, and random matrices). For each structure, we record the average index computation time per arm, and then run the Monte--Carlo simulation to compare the long-run performance of the Whittle index policy with the Myopic policy.

\begin{figure}[h!]
  \centering
  \includegraphics[width=0.48\textwidth]{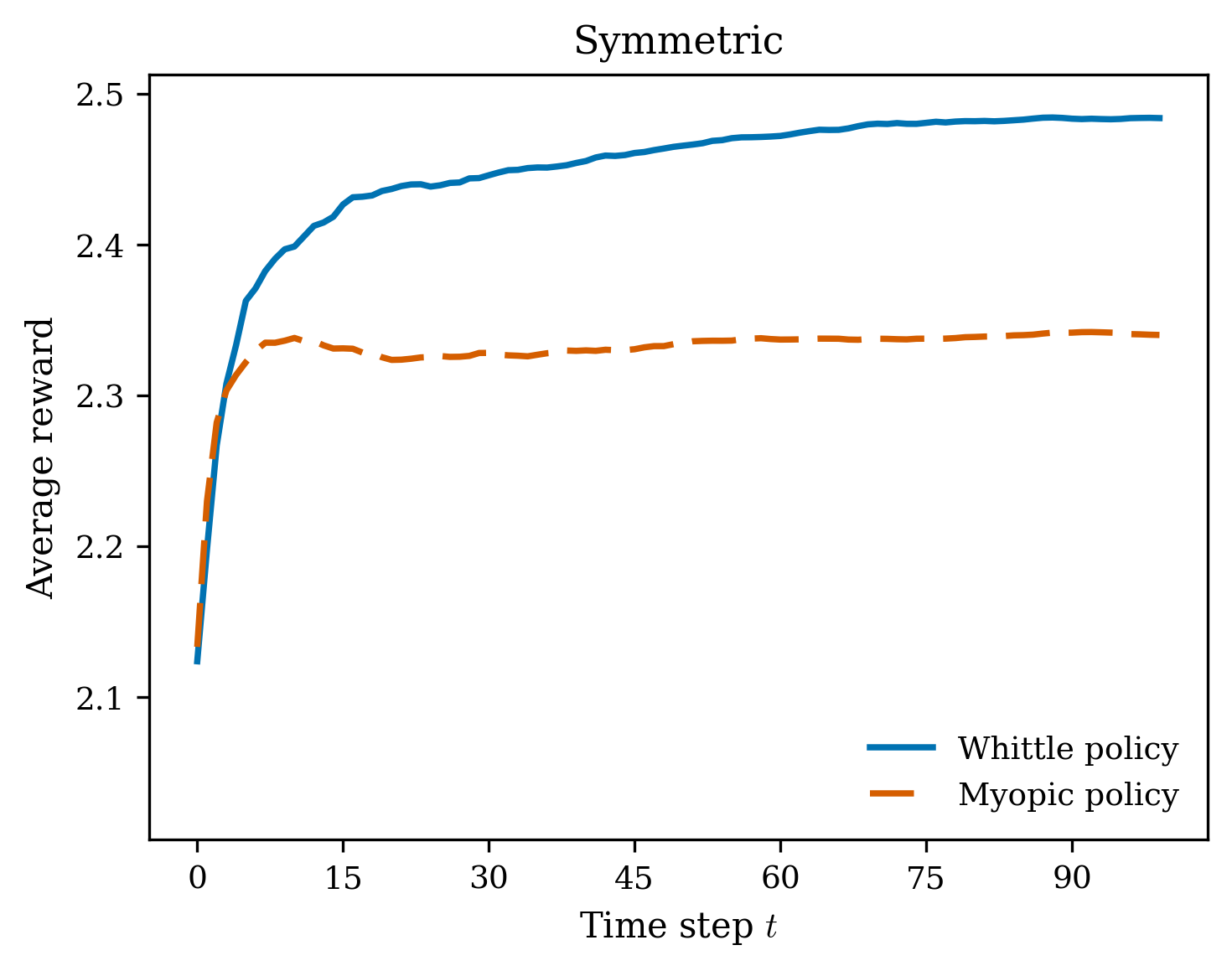}%
  \hfill
  \includegraphics[width=0.48\textwidth]{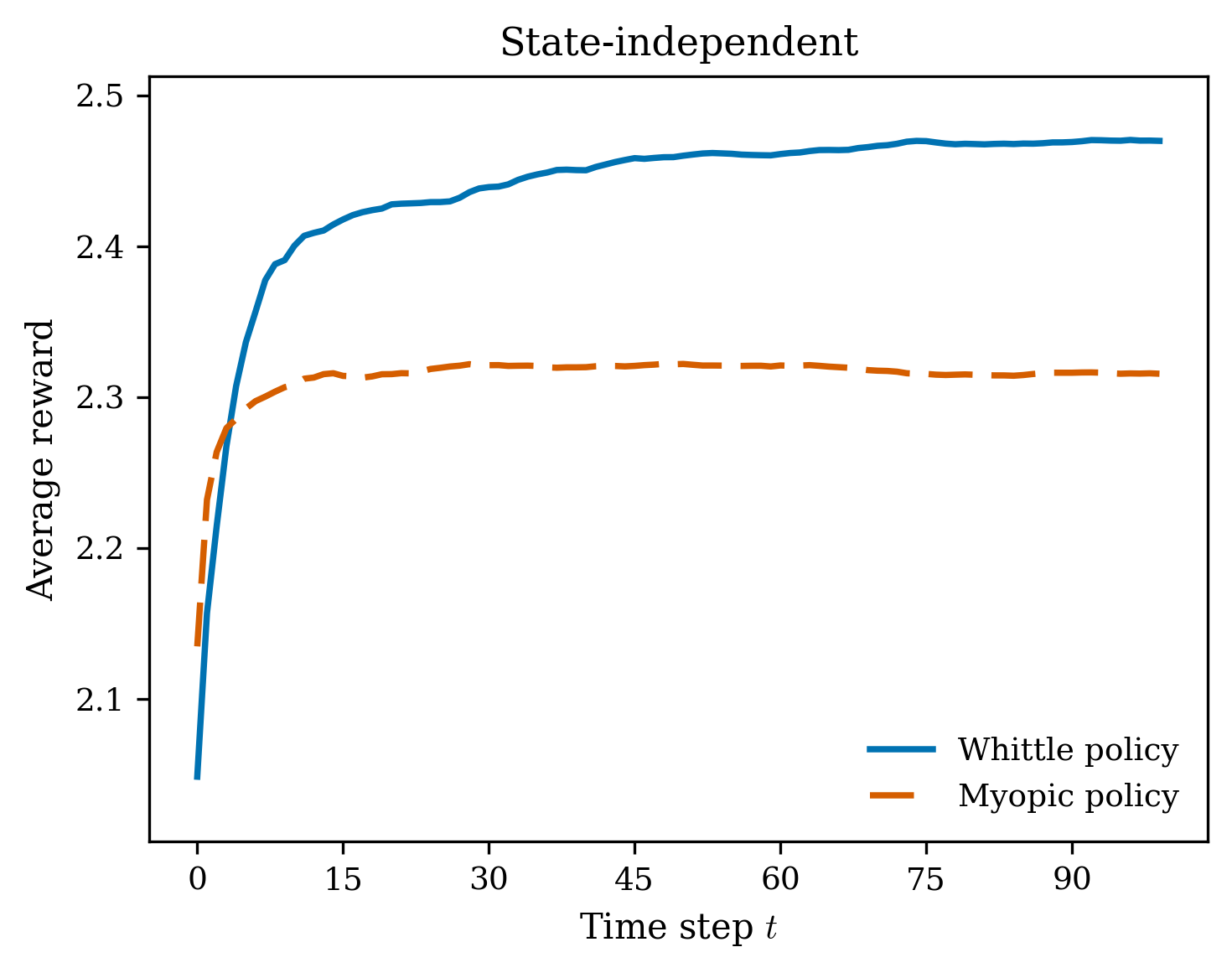}\\[0.4em]
  \includegraphics[width=0.48\textwidth]{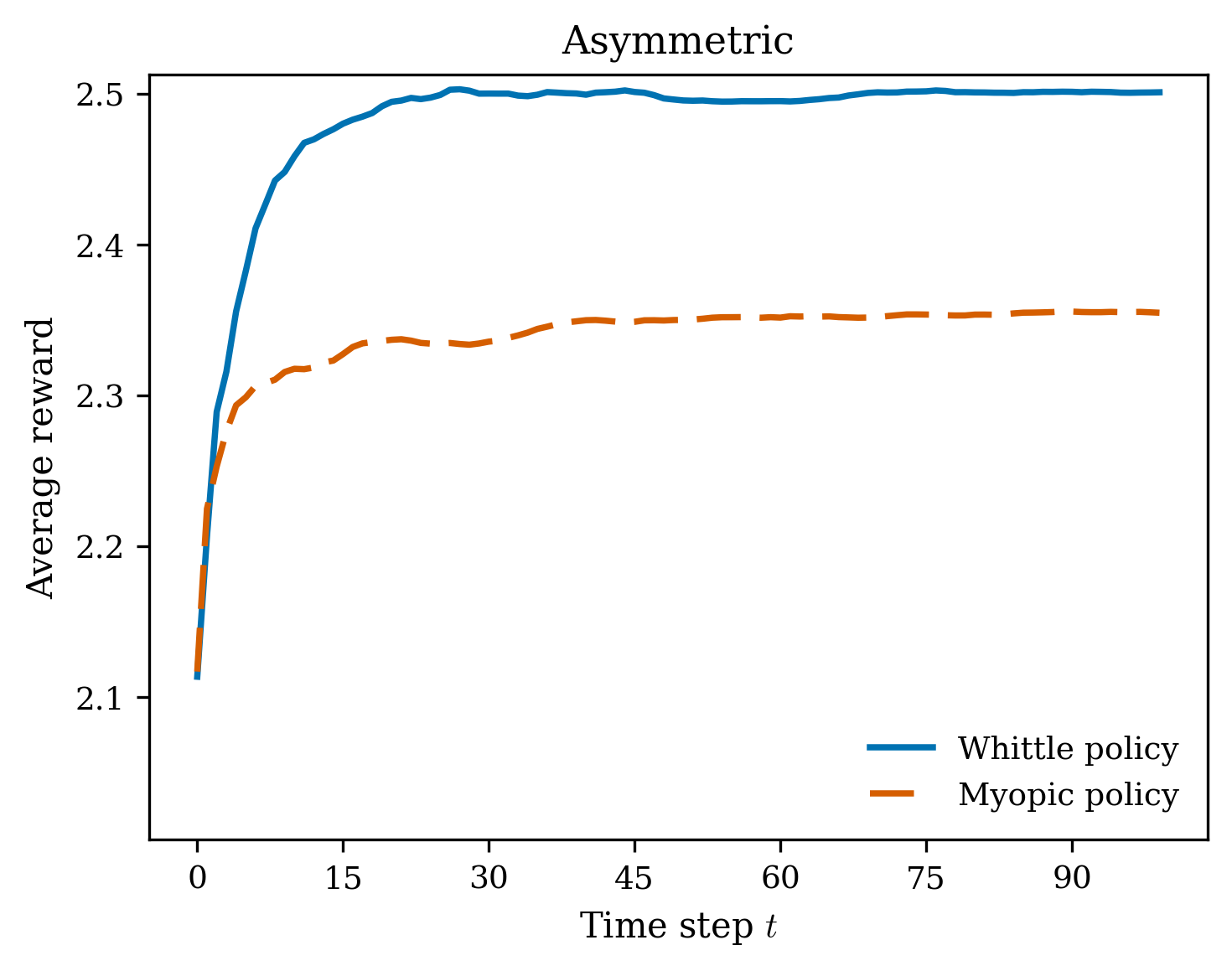}%
  \hfill
  \includegraphics[width=0.48\textwidth]{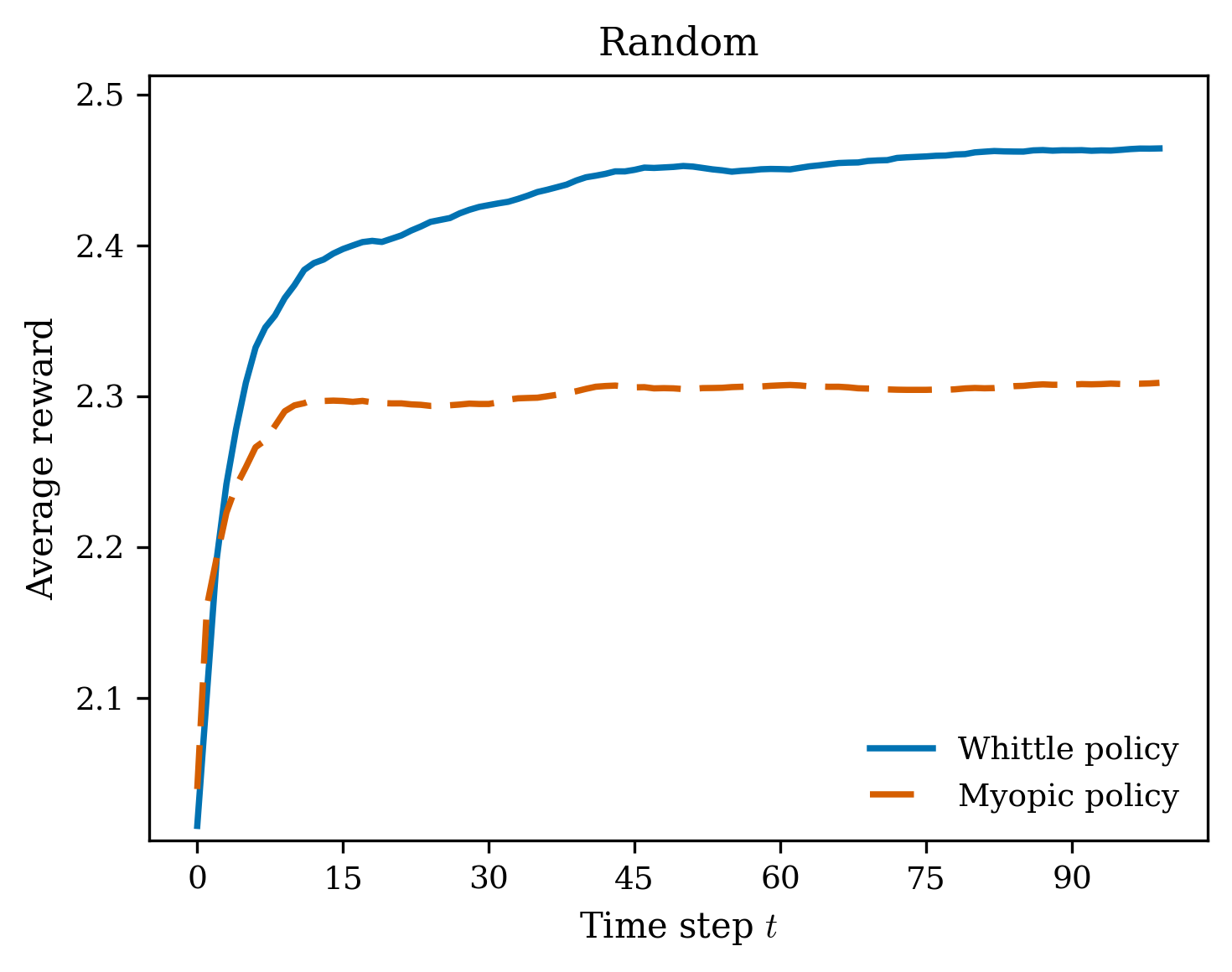}
  \caption{Whittle versus Myopic performance to four observation-error structures.}
  \label{fig:obs-noise-sensitivity}
\end{figure}

The index computation times are very similar across all four cases: the average time per arm ranges from $0.2278$\,s to $0.2332$\,s. In terms of policy performance, the Whittle index policy consistently dominates the myopic policy under all four observation models; Table~\ref{tab:obs-noise-sensitivity} reports the improvement and computation times. These experiments show that (i) the index computation cost is essentially unchanged across a wide range of observation-error structures, and (ii) the Whittle index policy performs better than the myopic policy under any type of observation matrices.

\begin{table}[h!]
  \centering
  \caption{Performance under different observation matrices}\label{tab:obs-noise-sensitivity}
  \begin{tabular}{lcccc}
    \toprule
    Error structure      & Improvement (\%) & Computation time (s) \\
    \midrule
    Symmetric           & 6.17 & 0.2332 \\
    State-independent    & 6.67 & 0.2317 \\
    Asymmetric          & 6.21 & 0.2290 \\
    Random              & 6.69 & 0.2278 \\
    \bottomrule
  \end{tabular}
\end{table}

Finally, we link the choice of $(T,\epsilon)$ to both the index error and the long-run performance, using a three-state, six-arm instance with fixed transition, observation, and reward matrices and $1000$ times Monte Carlo simulations.

\noindent\emph{(a) Index error versus $(T,\epsilon)$.}
We first fix the parameters $(\bar T,\bar\epsilon)$ and use the approximate Whittle index $\bar W = W(\bar T,\bar\epsilon)$ as an almost exact index, with
\[
\bar T = 50,\qquad \bar\epsilon = 10^{-6}.
\]
We then vary $T$ and $\epsilon$ and record the absolute error
$\bigl|W(T,\epsilon)-\bar W\bigr|$. The two plots are shown in Figure~\ref{fig:ablation}.

\begin{figure}[h!]
  \centering
  \includegraphics[width=0.48\textwidth]{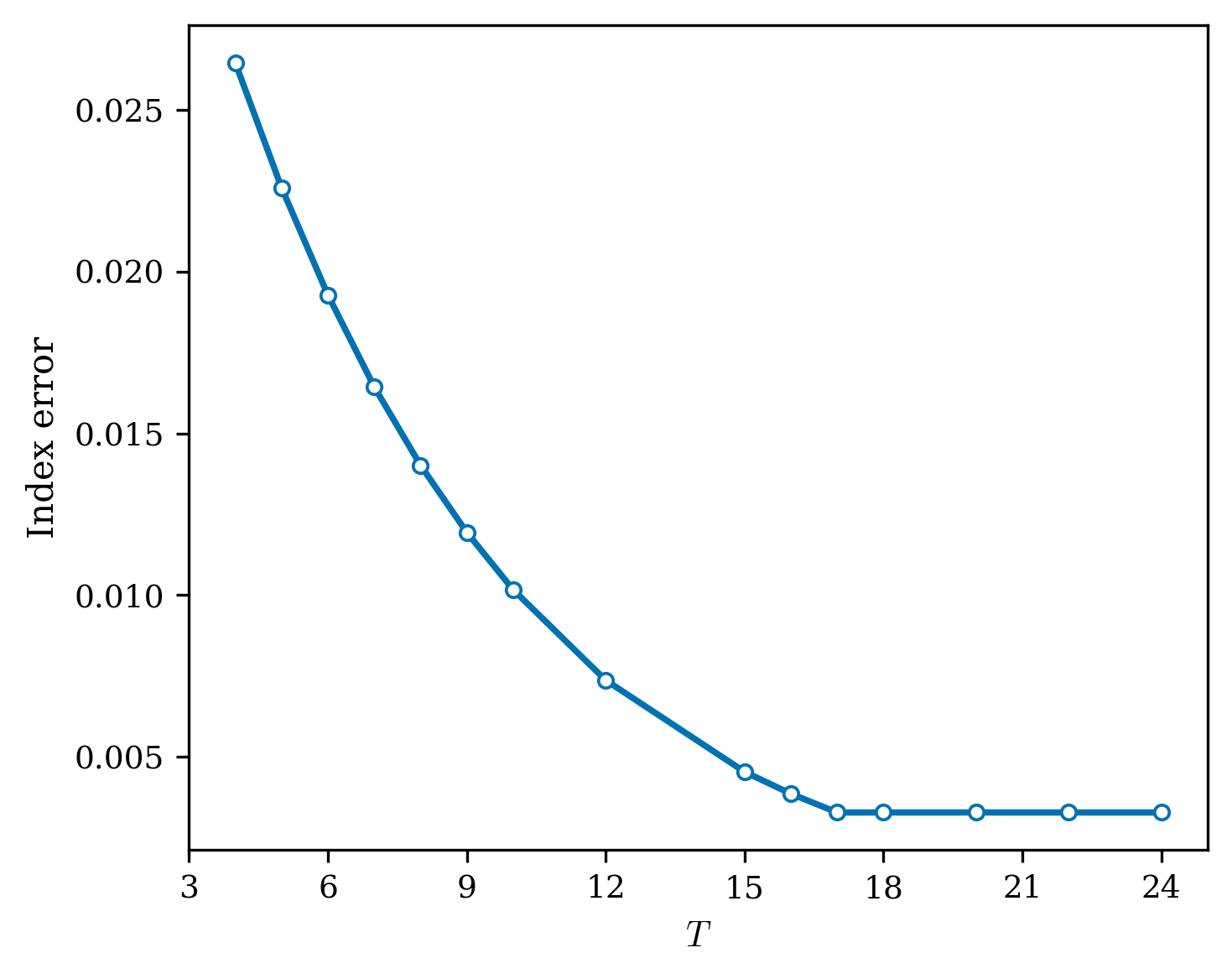}%
  \hfill
  \includegraphics[width=0.48\textwidth]{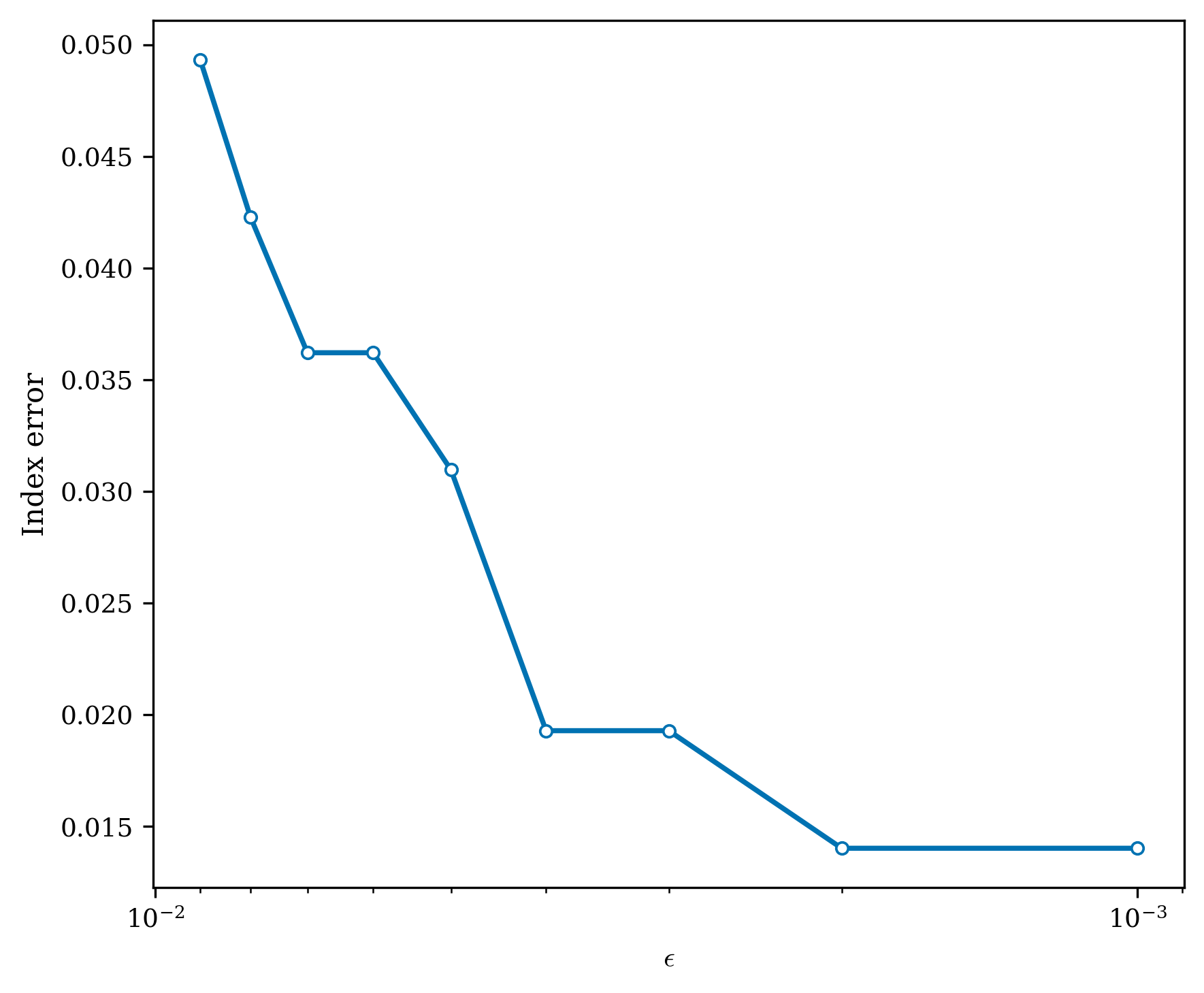}
  \caption{Left: index error versus $T$ (with $\epsilon=10^{-3}$). Right: index error versus $\epsilon$ (with $T=8$).}
  \label{fig:ablation}
\end{figure}

\begin{itemize}
  \item \emph{$T$:}
  The index error decreases from $2.65\times 10^{-2}$ at $T=4$ to $1.02\times 10^{-2}$ at $T=10$, and then levels off at $3.28\times 10^{-3}$ for all $T\ge 17$.

  \item \emph{$\epsilon$:}
  As we decrease $\epsilon$ from $10^{-2}$ to $2\times 10^{-3}$, the error drops from $4.93\times 10^{-2}$ to $1.93\times 10^{-2}$; further tightening $\epsilon$ to $10^{-3}$ and $5\times 10^{-4}$ only reduce the error slightly to about $1.40\times 10^{-2}$.
\end{itemize}

These results show that the $T$ is the main factor controlling the index error: once $T$ is moderately large, the additional gain is small.
Reducing $\epsilon$ from a coarse value ($\approx 10^{-2}$) to a moderately small value (at the order of $\times 10^{-3}$) removes most of the error, and further reductions have limited effect.
Since the error curve in $\epsilon$ already flattens out around $\epsilon= 10^{-3}$, we use this value in the more complex experiments and focus on the sensitivity to~$T$.

\noindent\emph{(b) Impact on policy performance.}
To check the impact of~$T$ on the policy performance, we keep the same three-state, six-arm instance and compare the long-run performance of the Whittle index policy (with different approximation depths~$T$) to the myopic policy, under the same observation and reward model and with $\epsilon= 10^{-3}$.
The results are shown in Figure~\ref{fig:depth-perf}.

\begin{figure}[h!]
  \centering
  \includegraphics[width=0.48\textwidth]{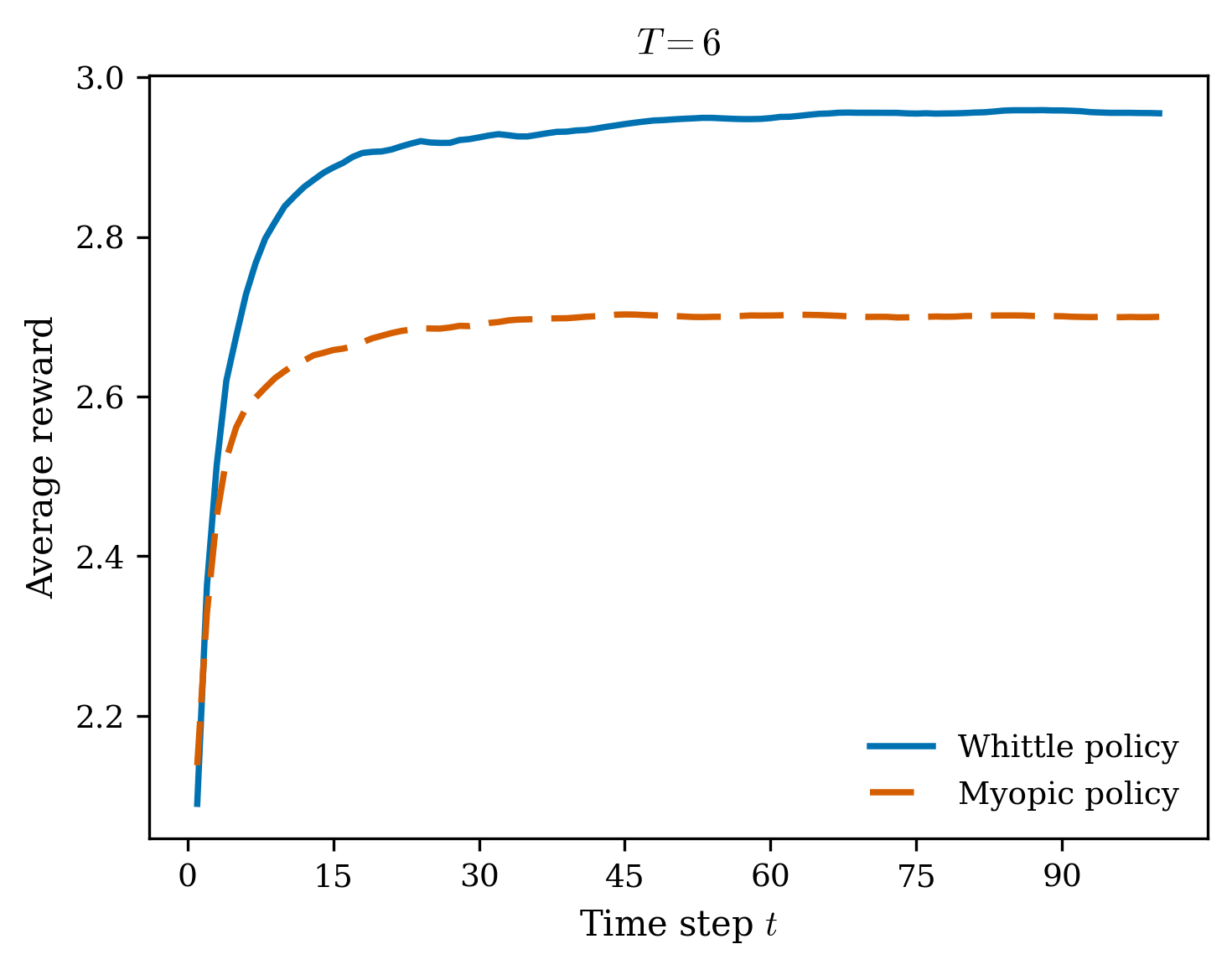}%
  \hfill
  \includegraphics[width=0.48\textwidth]{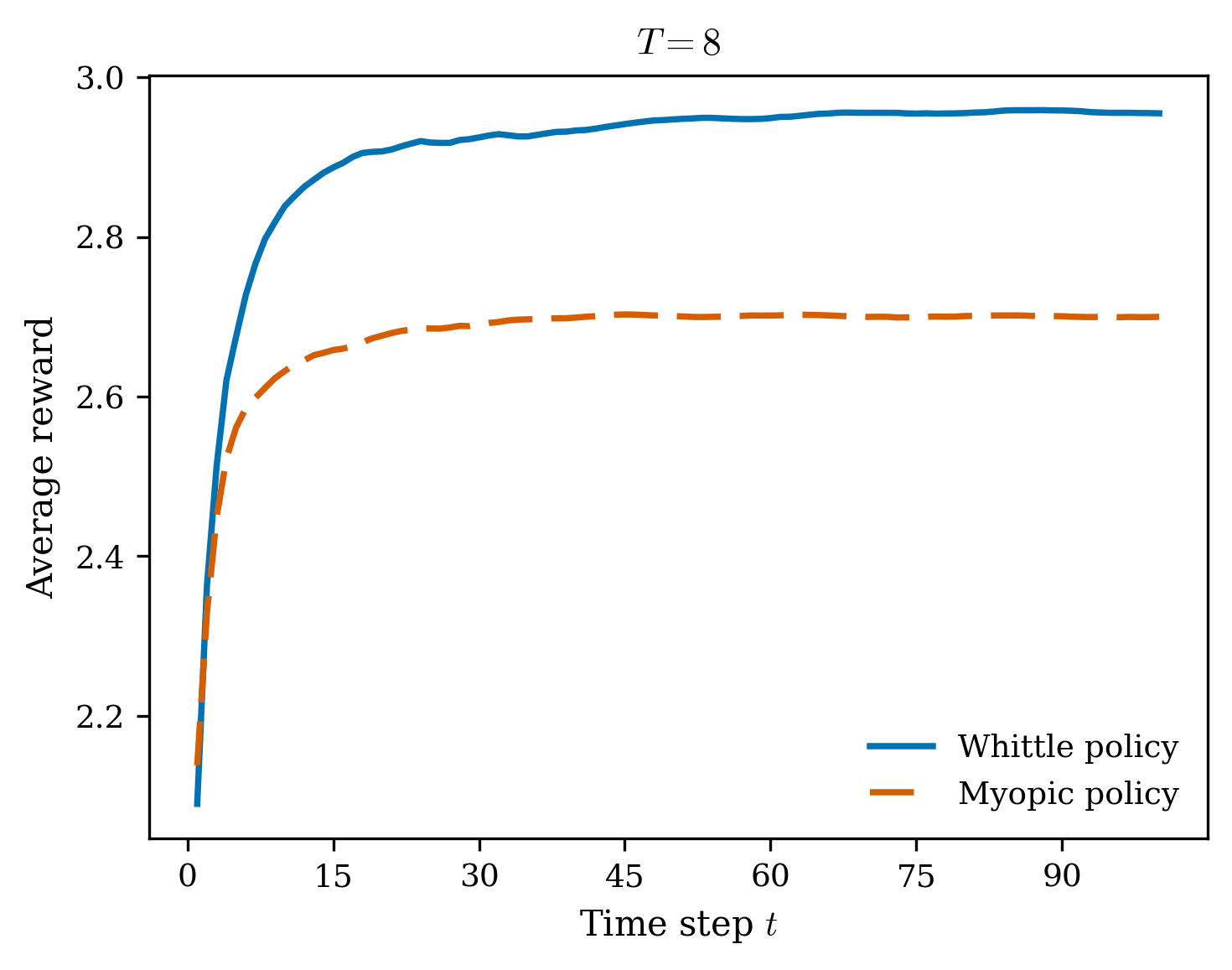}\\[0.4em]
  \includegraphics[width=0.48\textwidth]{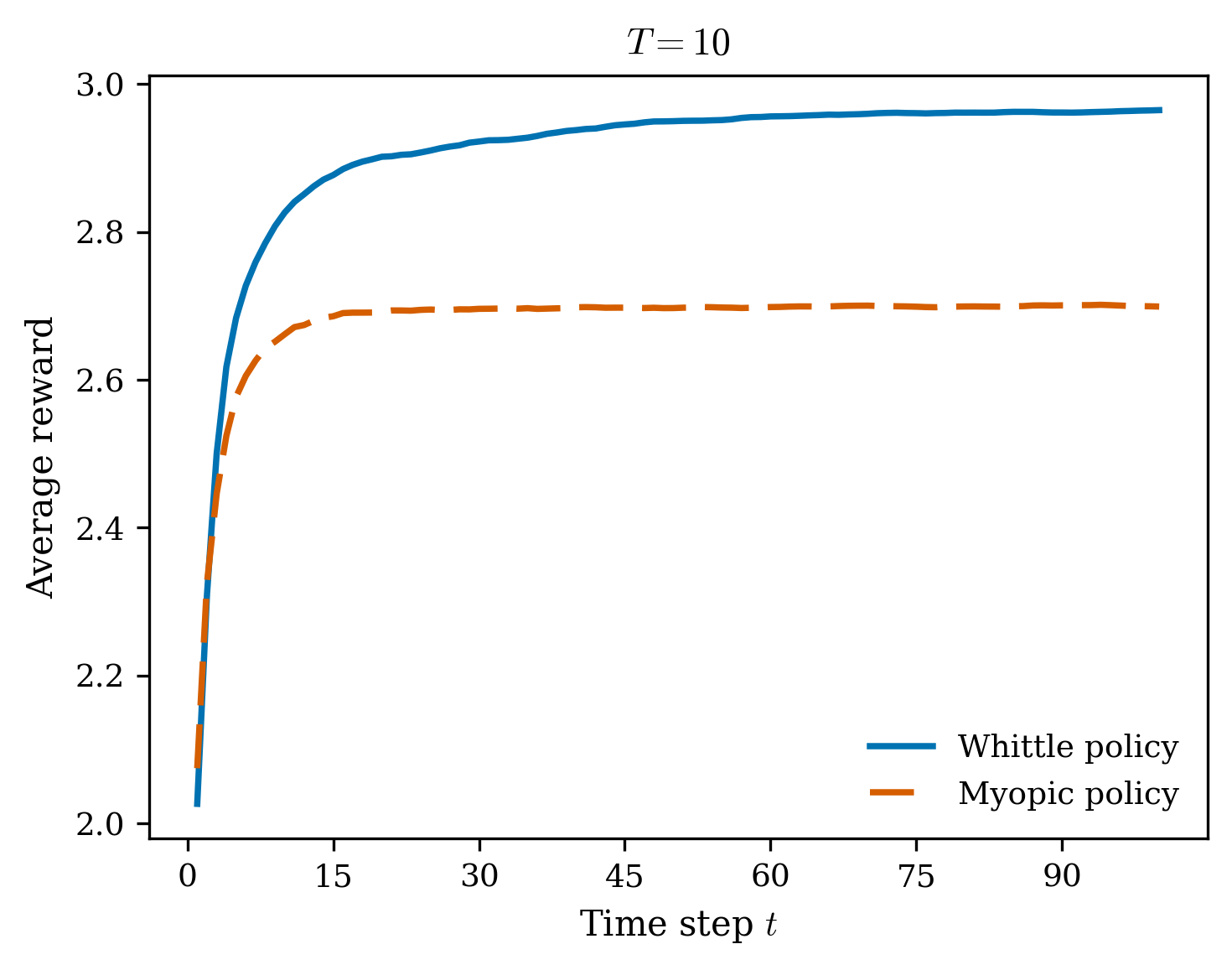}%
  \hfill
  \includegraphics[width=0.48\textwidth]{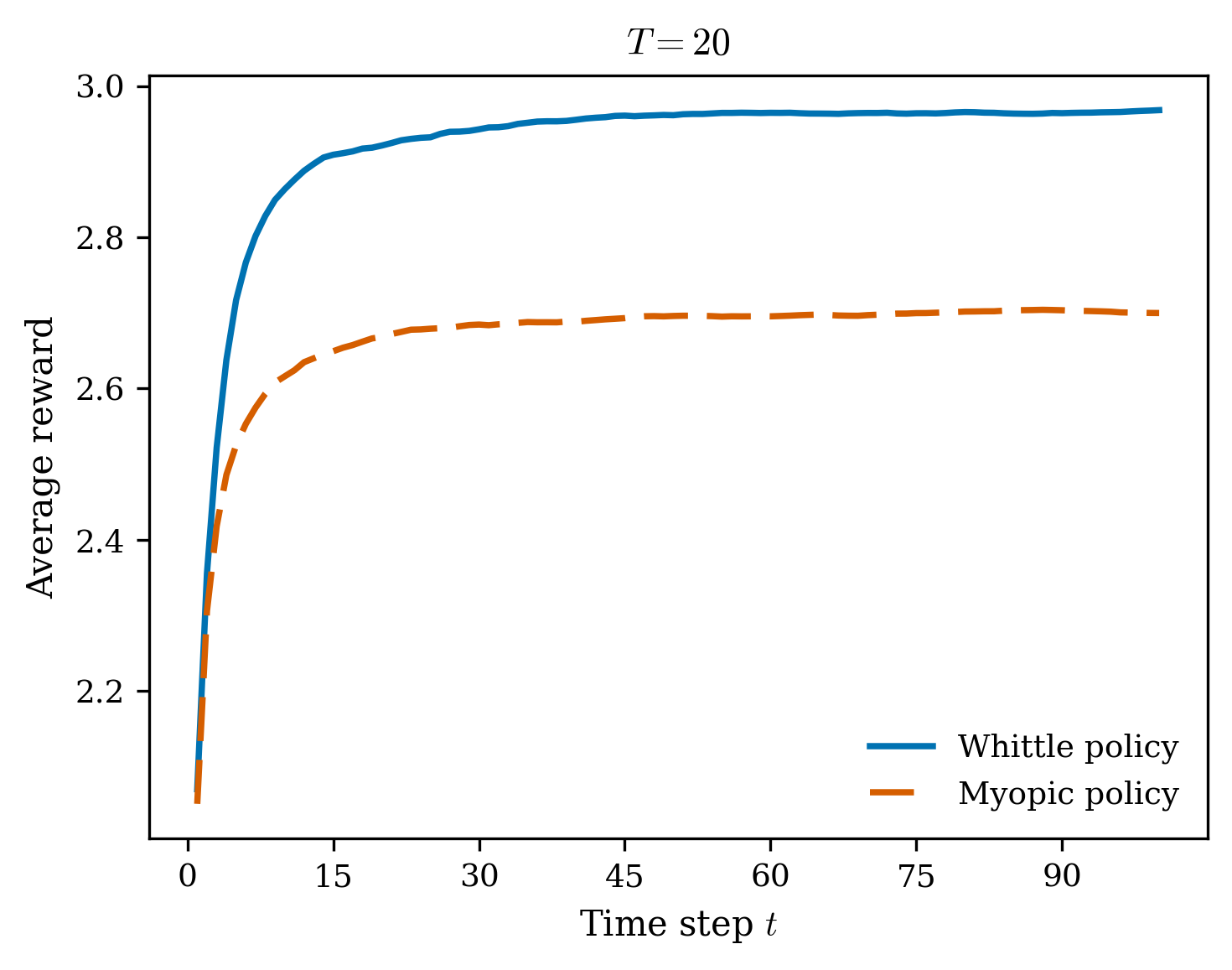}
  \caption{Performance of Whittle index and the myopic policies for different $T\in\{6,8,10,20\}$.}
  \label{fig:depth-perf}
\end{figure}

At the end of the simulation, the approximate Whittle index policy consistently dominates the myopic policy: the relative improvement in long-run average reward is about $9\%\text{--}10\%$, indicating that varying $T$ in this range has little effect on the final performance. Together with the index-error analysis above, the results indicate that setting $T$ around $6$ to $8$ achieves an effective balance between computational cost and policy performance for this problem instance.

\section{Conclusion and Future Work}

In this paper, we generalized the PCL framework for RMAB with finite state spaces to countable state spaces and propose an efficient algorithm to test the PCL-indexability and compute the Whittle index function for the partially observable RMAB with general observation models. As the number of belief states increases rapidly with time, the classic dynamic programming analysis for the RMAB problem suffers from the curse of dimensionality. Based on the achievable region and conservation law analysis studied in the literature for various RMAB problems, this paper is the first to extend the previous theoretical results in this direction to high-dimensional infinite-state scenarios by taking advantages of the theory of infinite-dimensional linear programming. Under a loose regularity assumption, we prove that the PCL-indexability is a sufficient condition for the Whittle indexability. Last, by introducing the concept of approximate state space, we build an algorithm to calculate the approximate Whittle index of each state using a finite-state adaptive greedy algorithm. Numerical results verified the strong performance of the Whittle index policy in our model. 

Although we have quite a few theoretical analyses and have confirmed the effectiveness of the Whittle index algorithm, some problems remain to be solved in the future. An important one is that we only analyzed the optimality of the Whittle index under the condition that the problem satisfies PCL-indexability with respect to a specific set family $\tilde{\Omega}$, but we have not yet explored other possible sets that might also allow PCL to function effectively. Furthermore, the strict calculation of Whittle indices for the entire belief state space remains an open problem (even though only their relative order is required for optimal policy implementation). Future work also includes the establishment of sufficient conditions for PCL-indexability and the extension of fluid analysis to the high-dimensional belief state space \citep{bertsimas2016fluid,verloop2016rmab}. In summary, there are still many opportunities for further exploration of problems with infinite state spaces.

\vspace{1em}

\noindent{\bf Acknowledgements} {We are grateful to Mr. Chengzhong Zhang for the help during the initial phase of this project and anonymous reviewers for the very valuable comments that helped improve this paper.}

\section{}\label{}


\bibliographystyle{cas-model2-names}

\bibliography{cas-refs}

@article{ahmad2009optimality,
  title={Optimality of myopic sensing in multichannel opportunistic access},
  author={Ahmad, Syed H A and Liu, Mingyan and Javidi, Tamer and Zhao, Qing and Krishnamachari, Bhaskar},
  journal={IEEE Transactions on Information Theory},
  volume={55},
  number={9},
  pages={4040--4050},
  year={2009},
  publisher={IEEE}
}

@book{anderson1987linear,
  title={Linear {P}rogramming in {I}nfinite-{D}imensional {S}paces: {T}heory and {A}pplications},
  author={Anderson, Edward J and Nash, Peter},
  year={1987},
  publisher={John Wiley \& Sons}
}

@article{AyestaEtal2021MMOR,
  title={On the computation of Whittle’s index for Markovian restless bandits},
  author={Ayesta, Urtzi and Gupta, Manu K and Verloop, Ina Maria},
  journal={Mathematical Methods of Operations Research},
  year={2021},
  volume={93},
  pages={179--208},
  publisher={Springer},
  doi={10.1007/s00186-020-00731-9},
}

@article{berry1985bandit,
  title={Bandit problems: sequential allocation of experiments (monographs on statistics and applied probability)},
  author={Berry, Donald A and Fristedt, Bert},
  journal={London: Chapman and Hall},
  volume={5},
  number={71-87},
  pages={7--7},
  year={1985}
}

@article{bertsimas1995achievable,
  title={The achievable region method in the optimal control of queueing systems; formulations, bounds and policies},
  author={Bertsimas, Dimitris},
  journal={Queueing systems},
  volume={21},
  pages={337--389},
  year={1995},
  publisher={Springer}
}

@article{bertsimas1996conservation,
  title={Conservation laws, extended polymatroids and multiarmed bandit problems; a polyhedral approach to indexable systems},
  author={Bertsimas, Dimitris and Ni{\~n}o-Mora, Jos{\'e}},
  journal={Mathematics of Operations Research},
  volume={21},
  number={2},
  pages={257--306},
  year={1996},
  publisher={INFORMS}
}

@article{bertsimas2016fluid,
  title={Decomposable {M}arkov decision processes: a fluid optimization approach},
  author={Bertsimas, Dimitris and  Mi{\v s}i{\'c}, Velibor V.},
  journal={Operations Research},
  volume={64},
  number={6},
  pages={1537--1555},
  year={2016},
  publisher={INFORMS}
}

@article{coffman1980characterization,
  title={A characterization of waiting time performance realizable by single-server queues},
  author={Coffman Jr, Edward G and Mitrani, Isi},
  journal={Operations Research},
  volume={28},
  number={3-part-ii},
  pages={810--821},
  year={1980},
  publisher={INFORMS}
}

@article{fu2025rmab,
  title={A restless bandit model for dynamic ride matching with reneging travelers},
  author={Fu, Jing and Zhang, Lele and Liu, Zhiyuan},
  journal={European Journal of Operational Research},
  volume={320},
  pages={581--592},
  year={2025},
  publisher={ELSEVIER}
}

@techreport{thomas1991region,
  title={The region of achievable performance in a model of Klimov},
  author={Division, TJWIRC-R and Tsoucas, P},
  year={1991},
  institution={Thomas J. Watson Research Center, Research Division}
}

@article{farias2011irrevocable,
  title={The irrevocable multiarmed bandit problem},
  author={Farias, Vivek F and Madan, Rohan},
  journal={Operations Research},
  volume={59},
  number={2},
  pages={383--399},
  year={2011},
  publisher={INFORMS}
}

@article{federgruen1988characterization,
  title={Characterization and optimization of achievable performance in general queueing systems},
  author={Federgruen, Awi and Groenevelt, Harry},
  journal={Operations Research},
  volume={36},
  number={5},
  pages={733--741},
  year={1988},
  publisher={INFORMS}
}

@article{federgruen1988m,
  title={M/{G}/c queueing systems with multiple customer classes: Characterization and control of achievable performance under nonpreemptive priority rules},
  author={Federgruen, Awi and Groenevelt, Harry},
  journal={Management Science},
  volume={34},
  number={9},
  pages={1121--1138},
  year={1988},
  publisher={INFORMS}
}

@article{frostig1999four,
  title={Four proofs of {G}ittins' multiarmed bandit theorem},
  author={Frostig, Esther and Weiss, Gideon},
  journal={Annals of Operations Research},
  volume={241},
  pages={127--165},
  year={2016}
}

@book{gelenbe2010analysis,
  title={Analysis and synthesis of computer systems},
  author={Gelenbe, Erol and Mitrani, Isi},
  volume={4},
  year={2010},
  publisher={World Scientific}
}

@article{gittins1979bandit,
  title={Bandit processes and dynamic allocation indices},
  author={Gittins, John},
  journal={Journal of the Royal Statistical Society: Series B (Methodological)},
  volume={41},
  number={2},
  pages={148--164},
  year={1979},
  publisher={Wiley Online Library}
}

@book{gittins2011multi,
  title={Multi-Armed Bandit Allocation Indices},
  author={Gittins, John and Glazebrook, Kevin and Weber, Richard},
  year={2011},
  publisher={John Wiley \& Sons}
}

@article{glazebrook2002index,
  title={Index policies for a class of discounted restless bandits},
  author={Glazebrook, Kevin and Nino-Mora, Jose and Ansell, Philip},
  journal={Advances in Applied Probability},
  volume={34},
  number={4},
  pages={754--774},
  year={2002},
  publisher={Cambridge University Press}
}

@inproceedings{hoffman2011portfolio,
  title={Portfolio allocation for bayesian optimization},
  author={Hoffman, Matthew and Brochu, Eric and De Freitas, Nando and others},
  booktitle={UAI},
  pages={327--336},
  year={2011}
}

@article{klimov1975time,
  title={Time-sharing service systems. i},
  author={Klimov, GP},
  journal={Theory of Probability \& Its Applications},
  volume={19},
  number={3},
  pages={532--551},
  year={1975},
  publisher={SIAM}
}

@inproceedings{larranaga2016rmab,
  title={Dynamic pilot allocation over Markovian fading channels: A restless bandit approach},
  author={Larra{\~n}aga, Maialen and Assaad, Mohamad and Destounis, Apostolos and Paschos, Georgios S},
  booktitle={IEEE Information Theory Workshop (ITW)},
  year={2016}
}

@article{liu2010indexability,
  title={Indexability of restless bandit problems and optimality of {W}hittle index for dynamic multichannel access},
  author={Liu, Keqin and Zhao, Qing},
  journal={IEEE Transactions on Information Theory},
  volume={56},
  number={11},
  pages={5547--5567},
  year={2010},
  publisher={IEEE}
}

@article{liu2010dynamic,
  title={Dynamic multichannel access with imperfect channel state detection},
  author={Liu, Keqin and Zhao, Qing and Krishnamachari, Bhaskar},
  journal={IEEE Transactions on Signal Processing},
  volume={58},
  number={5},
  pages={2795--2808},
  year={2010},
  publisher={IEEE}
}

@article{LiuEtal24MMOR,
  title={Low-complexity algorithm for restless bandits with imperfect observations},
  author={Liu, Keqin and Weber, Richard and Zhang, Chen},
  journal={Mathematical Methods of Operations Research},
  year={2024},
  volume={100},
  pages={467--508},
  publisher={Springer},
  doi={10.1007/s00186-024-00868-x},
}

@article{liu2021index,
  title={Relaxed indexability and index policy for partially observable restless bandits},
  author={Liu, Keqin},
  journal={Management Science},
  doi={10.1287/mnsc.22.02831},
  year={2025}
}

@article{nino2001restless,
  title={Restless bandits, partial conservation laws and indexability},
  author={Ni{\~n}o-Mora, Jos{\'e}},
  journal={Advances in Applied Probability},
  volume={33},
  number={1},
  pages={76--98},
  year={2001},
  publisher={Cambridge University Press}
}

@article{nino2002dynamic,
  title={Dynamic allocation indices for restless projects and queueing admission control: a polyhedral approach},
  author={Ni{\~n}o-Mora, Jos{\'e}},
  journal={Mathematical programming},
  volume={93},
  number={3},
  pages={361--413},
  year={2002},
  publisher={Springer}
}

@article{nino2006restless,
  title={Restless bandit marginal productivity indices, diminishing returns, and optimal control of make-to-order/make-to-stock {M}/{G}/1 queues},
  author={Ni{\~n}o-Mora, Jos{\'e}},
  journal={Mathematics of Operations Research},
  volume={31},
  number={1},
  pages={50--84},
  year={2006},
  publisher={INFORMS}
}

@article{nino2007dynamic,
  title={Dynamic priority allocation via restless bandit marginal productivity indices},
  author={Ni{\~n}o-Mora, Jos{\'e}},
  journal={Top},
  volume={15},
  number={2},
  pages={161--198},
  year={2007},
  publisher={Springer}
}

@article{nino2020mor,
  title={A verification theorem for threshold-indexability of real-state discounted restless bandits},
  author={Ni{\~n}o-Mora, Jos{\'e}},
  journal={Mathematics of Operations Research},
  volume={45},
  number={2},
  pages={465--496},
  year={2020},
  publisher={INFORMS}
}

@inproceedings{papadimitriou1994complexity,
  title={The complexity of optimal queueing network control},
  author={Papadimitriou, Christos H and Tsitsiklis, John N},
  booktitle={Proceedings of IEEE 9th Annual Conference on Structure in Complexity Theory},
  pages={318--322},
  year={1994},
  organization={IEEE}
}

@article{press2009bandit,
  title={Bandit solutions provide unified ethical models for randomized clinical trials and comparative effectiveness research},
  author={Press, William H},
  journal={Proceedings of the National Academy of Sciences},
  volume={106},
  number={52},
  pages={22387--22392},
  year={2009},
  publisher={National Acad Sciences}
}

@article{rudin1976principles,
  title={Principles of mathematical analysis},
  author={Rudin, Walter},
  volume={3},
  year={1976},
  publisher={McGraw-hill New York}
}

@book{robbins1952some,
  title={Some aspects of the sequential design of experiments},
  author={Robbins, H},
  volume={58},
  number={5},
  year={1952},
  pages={527--535},
  journal={Bull. Amer. Math. Soc.}
}

@inproceedings{shen2015portfolio,
  title={Portfolio choices with orthogonal bandit learning},
  author={Shen, Wei and Wang, Jing and Jiang, Yu-Gang and Zha, Hongyuan},
  booktitle={Twenty-fourth international joint conference on artificial intelligence},
  year={2015}
}

@article{smallwood1973optimal,
  title={The optimal control of partially observable markov processes over a finite horizon},
  author={Smallwood, Richard D and Sondik, Edward J},
  journal={Operations research},
  volume={21},
  number={5},
  pages={1071--1088},
  year={1973},
  publisher={INFORMS}
}

@article{sondik1978optimal,
  title={The optimal control of partially observable markov processes over the infinite horizon: Discounted costs},
  author={Sondik, Edward J},
  journal={Operations research},
  volume={26},
  number={2},
  pages={282--304},
  year={1978},
  publisher={INFORMS}
}

@article{verloop2016rmab,
  title={Asymptotically optimal priority polices for indexable and nonindexable restless bandits},
  author={Verloop, I. M.},
  journal={The annals of applied probability},
  volume={26},
  number={4},
  pages={1947--1995},
  year={2016},
  publisher={Institute of Mathematical Statistics}
}

@article{weber1990index,
  title={On an index policy for restless bandits},
  author={Weber, Richard R and Weiss, Gideon},
  journal={Journal of applied probability},
  volume={27},
  number={3},
  pages={637--648},
  year={1990},
  publisher={Cambridge University Press}
}

@article{weber1991index,
  title={Addendum to `On an index policy for restless bandits'},
  author={Weber, Richard R and Weiss, Gideon},
  journal={Advances in applied probability},
  volume={27},
  number={3},
  pages={637--648},
  year={1990},
  publisher={Cambridge University Press}
}

@article{whittle1988restless,
  title={Restless bandits: Activity allocation in a changing world},
  author={Whittle, Peter},
  journal={Journal of applied probability},
  volume={25},
  number={A},
  pages={287--298},
  year={1988},
  publisher={Cambridge University Press}
}

@article{zhao2007survey,
  title={A survey of dynamic spectrum access},
  author={Zhao, Qing and Sadler, Brian M},
  journal={IEEE signal processing magazine},
  volume={24},
  number={3},
  pages={79--89},
  year={2007},
  publisher={IEEE}
}



\end{document}